\documentclass[a4paper, 11pt, twoside]{article}
\usepackage{fullpage}


%
%




\usepackage[utf8]{inputenc} 
\usepackage[T1]{fontenc}    

\input{preamble_marina-n.tex}

\newcommand{\squeeze}{}
\usepackage[authoryear]{natbib}



\begin{document}

\title{\textbf{MARINA-P: Superior Performance in Non-smooth Federated Optimization with Adaptive Stepsizes}}

\author{Igor Sokolov$^{1}\thanks{Corresponding author: {igor.sokolov.1@kaust.edu.sa}.}$ \quad \quad Peter Richt\'{a}rik$^1$}
\date{$^1$ Center of Excellence for Generative AI,\\ King Abdullah University of Science and Technology,\\ Saudi Arabia}
    
\maketitle

\begin{abstract}
Non-smooth communication-efficient federated optimization is crucial for many practical machine learning applications, yet it remains largely unexplored theoretically. Recent advancements in communication-efficient methods have primarily focused on smooth convex and non-convex regimes, leaving a significant gap in our understanding of the more challenging non-smooth convex setting. Additionally, existing federated optimization literature often overlooks the importance of efficient server-to-worker communication (downlink), focusing primarily on worker-to-server communication (uplink).
In this paper, we consider a setup where uplink communication costs are negligible and focus on optimizing downlink communication by improving the efficiency of recent state-of-the-art downlink schemes such as \algname{EF21-P} \citep{EF21-P} and \algname{MARINA-P} \citep{gruntkowska2024improving} in the non-smooth convex setting. We address these gaps through several key contributions. First, we extend the non-smooth convex theory of \algname{EF21-P} \citep{NonsmoothEF21-P}, originally developed for single-node scenarios, to the distributed setting. Second, we extend existing results for \algname{MARINA-P} to the non-smooth convex setting.
For both algorithms, we prove an optimal $\cO\rb{\nfr{1}{\sqrt{T}}}$ convergence rate under standard assumptions and establish communication complexity bounds that match those of classical subgradient methods. Furthermore, we provide theoretical guarantees for both \algname{EF21-P} and \algname{MARINA-P} under constant, decreasing, and adaptive (Polyak-type) stepsizes. Our experiments demonstrate that \algname{MARINA-P}, when used with correlated compressors, outperforms other methods not only in smooth non-convex settings (as originally shown by \citet{gruntkowska2024improving}) but also in non-smooth convex regimes.
To the best of our knowledge, this work presents the first theoretical results for distributed non-smooth optimization incorporating server-to-worker compression, along with comprehensive analysis for various stepsize schemes.
\end{abstract}

\tableofcontents

\section{Introduction}
In recent years, the machine learning community has witnessed a paradigm shift towards larger models and more extensive datasets, driving significant advancements in model performance and practical utility \citep{lecun2015deep}. However, this ``big model'' and ``big data'' approach introduce new challenges in hardware requirements, algorithmic design, and software infrastructure necessary for efficient model training \citep{BotCurNoce, kaplan2020scaling, deng2009imagenet}.

\paragraph{The Rise of Big Data and Distributed Systems.}
The sheer volume of data required for state-of-the-art models has necessitated the adoption of distributed computing systems \citep{Dean2012, DCGD, lin2018deep}. Traditional single-machine approaches are no longer feasible due to storage and computational limitations, leading to the distribution of datasets across multiple parallel workers. This distributed paradigm is particularly evident in supervised learning problems \citep{ hastie2009elements, shalev2014understanding, vapnik2013nature}, which can be formulated as:
\begin{eqnarray}\label{eq:finite_sum}
\min\limits_{x \in \mathbb{R}^d}\left\{f(x)\eqdef\frac{1}{n} \sum\limits_{i=1}^n f_i(x)\right\},    
\end{eqnarray}
where $n$ is the number of clients (or devices), $x$ denotes the $d$-dimensional parameter vector of the machine learning model, and $f_i(x)$ represents the loss of model $x$ on the training data stored on client $i$. In this work, we assume that the functions $f_i$ are convex (possibly non-smooth) for all $i \in[n]:=\{1, \ldots, n\}$.

The distributed nature of data collection and processing has given rise to federated learning (FL) \citep{FedAvg2016, FEDLEARN, FEDOPT, FL2017-AISTATS}, a paradigm where heterogeneous clients collaboratively train a model using diverse, decentralized data while preserving privacy. This approach aims to eliminate the need for centralized data aggregation, addressing privacy concerns in sensitive applications.
In the federated learning setup, devices link directly to a central server that manages the optimization procedure \citep{FEDLEARN, kairouz2021advances}. The devices conduct local calculations on their individual datasets and transmit the outcomes (such as model modifications) to the server. The server then aggregates the incoming data, executes global calculations, and returns the updated model parameters to the devices. This cycle continues until the model reaches convergence or achieves a satisfactory level of performance.

\paragraph{Communication Challenges in Large-scale Model Training.}
While distributing data across workers addresses storage and compute limitations, it introduces significant communication overhead. Modern gradient-based training methods \citep{bottou2012stochastic, ADAM, demidovich2023guide, duchi2011adaptive, robbins1951stochastic} require iterative updates to all $d$ model parameters, making the communication of high-dimensional gradients prohibitively expensive as models scale. Researchers have proposed two main approaches to mitigate this challenge: performing multiple local gradient steps before communication, exemplified by algorithms such as \algname{LocalSGD} \citep{localSGD-Stich, localSGD-AISTATS2020, Blake2020, yi2024cohort, sadiev2022federated, richtarik2024unified}, and applying lossy transformations to gradient information to reduce data transfer volumes \citep{DCGD, alistarh2018convergence, mishchenko202099, DIANA, ADIANA, CANITA, richtarik2021ef21, fatkhullin2021ef21, richtarik20223pc, Seide2014, alistarh2017qsgd, panferov2024correlated}.

While worker-to-server communication has been extensively studied, server-to-worker communication often receives less attention. However, studies of 4G LTE and 5G networks \citep{huang2012close, narayanan2021variegated} demonstrate that upload and download speeds in mobile environments are often comparable, with differences rarely exceeding an order of magnitude. This observation necessitates optimization strategies that address communication efficiency in both directions.
\paragraph{Prevalence of Non-smooth Objectives in Machine Learning Applications.}
While distributed optimization has made significant progress, theoretical analysis has primarily focused on smooth objectives, leaving non-smooth optimization understudied in federated settings. Non-smoothness is inherent in many machine learning applications: ReLU activation functions in deep learning \citep{Glorot2011, Nair2010}, L1 regularization for parameter sparsity \citep{Tibshirani1996, Zou2005}, hinge loss in support vector machines \citep{Cortes1995}, and total variation regularization in computer vision \citep{Rudin1992, Chambolle2004}. Additional examples include quantile regression \citep{Koenker1978}, max-pooling layers in convolutional networks \citep{Scherer2010}, submodular function minimization \citep{Bach2013}, Huber loss in robust optimization \citep{Huber1964}, and graph-based learning algorithms \citep{Hallac2015}.

\paragraph{Adaptive Stepsizes are Widely Used in Practice.}
Since theoretically assumed constants, such as those for L-Lipschitz continuity or smoothness, are often infeasible to determine for deep neural networks, adaptive learning rate methods have become ubiquitous in training. Popular algorithms include AdaGrad \citep{duchi2011adaptive}, RMSProp, Adam \citep{ADAM}, and AMSGrad \citep{reddi2018convergence}, which dynamically adjust learning rates for each parameter based on observed gradients. Modern deep learning frameworks like PyTorch \citep{paszke2019pytorch} offer a variety of learning rate schedulers, such as \texttt{StepLR}, \texttt{MultiStepLR}, \texttt{ExponentialLR}, and \texttt{CosineAnnealingLR} \citep{schmidt2021descending}, facilitating easy implementation of diverse learning rate strategies.

\paragraph{Addressing Key Challenges in Federated Learning Simultaneously.}
Practical implementation of federated learning necessitates addressing three critical aspects simultaneously: non-smooth of the loss function, adaptive stepsizes, and communication efficiency. 
By addressing these challenges concurrently, we aim to expand federated learning's applicability to a broader range of real-world problems with non-differentiable loss functions.

\subsection{Notation and Assumptions}
We denote the set $\{1,2, \cdots, n\}$ by $[n]$. For vectors, $\twonorm{\cdot}$ represents the Euclidean norm, while for matrices, it denotes the spectral norm. The inner product of vectors $u$ and $v$ is denoted by $\langle u, v\rangle$. We use $\cO(\cdot)$ to hide absolute constants. 
We denote $R_0 \eqdef \twonorm{x^0 - x^*}.$

Our analysis relies on the following standard assumptions:
\begin{assumption}\label{as:existence_of_minimizer}
The function $f$ has at least one minimizer, denoted by $x^*$.
\end{assumption}
\begin{assumption}\label{as:fi_convexity}
The functions $f_i$ are convex for all $i\in [n]$.
\end{assumption}
In the distributed setting, assuming convexity for individual functions $f_i$ is sufficient, as it implies convexity for $f$ itself.
\begin{assumption}[Lipschitz continuity of $f_i$]\label{as:fi_lipschitzness}
Functions $f_i$ are $L_{0,i}$-Lipschitz continuous for all $i\in [n]$. That is, for all $i\in [n]$, there exists $L_{0,i} > 0$ such that
$\abs{f_i(x) - f_i(y)} \leq L_{0,i} \twonorm{x - y}, \quad \forall x, y \in \mathbb{R}^d.$
\end{assumption}
If each $f_i$ is Lipschitz continuous, then by Jensen's inequality, $f$ is $L_0$-Lipschitz with $L_0 \eqdef \suminn L_{0,i}$ \citep{Nesterov2013}. 

Both convexity and Lipschitz continuity of $f$ are standard assumptions in non-smooth optimization \citep{vorontsova2021convex, Nesterov2013, bubeck2015convex,beck2017first, duchi2018introductory, lan2020first, drusvyatskiy2020convex}.
Moreover, $L_0$ and $L_{0,i}$-Lipschitz continuity imply uniformly bounded subgradients \citep{beck2017first}, a property that will be useful in our proofs:
\begin{eqnarray}
\twonorm{\partial f(x)} &\le& L_0 \quad \forall x \in \mathbb{R}^d, \\
\twonorm{\partial f_i(x)} &\le& L_{0,i} \quad \forall x \in \mathbb{R}^d \;\;\text{and}\;\; \forall i\in [n].
\end{eqnarray}
We define $\wt{L}_0 \eqdef \sqrt{\frac{1}{n} \sum_{i=1}^n L_{0,i}^2}$ and $\avg{L}_0 \eqdef \frac{1}{n} \sum_{i=1}^n L_{0,i}$. By the arithmetic-quadratic mean inequality, we have $\avg{L}_0 \leq \widetilde{L}_0$.

Following classical optimization literature \citep{Nemirovski2009, beck2017first, duchi2018introductory, lan2020first, drusvyatskiy2020convex}, for non-smooth convex objectives, we aim to find an $\eps$-suboptimal solution: a random vector $\hat{x}\in \R^d$ satisfying 
\begin{eqnarray}\label{eq:suboptimal_condition}
\Exp{f(\hat{x}) - f(x^*)} \leq \eps,
\end{eqnarray}
where $\Exp{\cdot}$ denotes the expectation over algorithmic randomness.

To assess the efficiency of distributed subgradient-based algorithms, we primarily use two metrics:

\textit{1. Communication complexity} (alternatively, communication cost): The expected total number of floats per worker required to communicate to reach an $\eps$-suboptimal solution. In this paper, we focus on server-to-worker communication compression.

\textit{2. Iteration complexity}: The number of communication rounds needed to achieve an $\eps$-suboptimal solution.
\subsection{Related work}
\paragraph{Subgradient Methods in Non-smooth Convex Optimization.}
Subgradient methods, originating in the 1960s, are fundamental for solving non-smooth convex optimization problems \citep{Shor1985,Polyak1987}. Classical convergence analysis establishes optimal \citep{Nesterov2013, vorontsova2021convex} rates of $\cO\rb{\nfr{1}{\sqrt{T}}}$ in the general convex case \citep{Nesterov2013, vorontsova2021convex, Boyd2003, bubeck2015convex, beck2017first, duchi2018introductory, lan2020first, drusvyatskiy2020convex}, and $\cO\rb{\nfr{1}{{T}}}$ for strongly convex functions \citep{beck2017first, lan2020first, drusvyatskiy2020convex}. For stochastic settings, \citet{Nemirovski2009} developed robust mirror descent stochastic approximation methods achieving non-asymptotic $\cO\rb{\nfr{1}{\sqrt{T}}}$ convergence rates.
These rates assume a known iteration count $T$ with constant stepsizes proportional to $\nfr{1}{\sqrt{T}}$ (convex case) and $\nfr{1}{T}$ (strongly convex case). For unknown $T$, decreasing stepsizes of order $\cO\rb{\nfr{1}{\sqrt{t}}}$ and $\cO\rb{\nfr{1}{{t}}}$ introduce an additional logarithmic factor, yielding rates of $\cO\rb{\nfr{\log T}{\sqrt{T}}}$ \citep{Nesterov2013} and $\cO\rb{\nfr{\log T}{{T}}}$ \citep{hazan2007logarithmic, hazan2014beyond}, respectively. Recent advances have eliminated these logarithmic factors: \citet{zhu2024convergence} achieved the optimal $\cO\rb{\nfr{1}{\sqrt{T}}}$ rate for convex functions, while \citet{LacosteJulien2012} and \citet{Rakhlin2011} established the optimal $\cO\rb{\nfr{1}{{T}}}$ rate for strongly convex functions. Beyond ergodic convergence, several works \citep{Jain2019, zamani2023exact} have provided tight analyses for last-iterate convergence.
In machine learning, subgradient methods have demonstrated practical relevance in large-scale problems such as support vector machines and structured prediction \citep{ShalevShwartz2007, Ratliff2007}.
\paragraph{Communication Compression.}
Before discussing more advanced optimization methods, let us consider the simplest baseline: the standard subgradient method (\algname{SM}) 
\footnote{In this paper, we use the non-normalized form \eqref{eq:sm_non-norm} of the subgradient method studied in \citep{vorontsova2021convex, bubeck2015convex, beck2017first, duchi2018introductory, lan2020first, drusvyatskiy2020convex, Nemirovski2009}. Earlier works \citep{Shor1985, Polyak1987} typically employed SM in the form $x^{t+1}=x^t-\frac{\gamma_t}{\norm{\partial f(x^t)}}\partial f(x^t)$, which includes an additional normalization term $\norm{\partial f(x^t)}$.}
, which iteratively performs updates
\footnote{For constrained optimization problems, the subgradient method typically operates through projections onto a convex set $\cX$ (see \citep{bubeck2015convex, LacosteJulien2012, beck2017first, duchi2018introductory}). However, when optimizing over an unbounded domain, i.e., $\cX = \R^d$, projections are not needed.}
\begin{eqnarray}\label{eq:sm_non-norm}
\squeeze
x^{t+1}=x^t-\frac{\gamma_t}{n} \sum_{i=1}^n g_i^t,
\end{eqnarray}
where $g_i^t = \partial f_i(x^t)$ is a subgradient of $f_i$ at $x^t$.
In the distributed setting, the method can be implemented as follows: each worker calculates $g_i^t$ and sends it to the server, where the subgradients are aggregated. The server takes the step and broadcasts $x^{t+1}$ back to the workers. With stepsize $\gamma_t \eqdef \nfr{R_0}{L_0\sqrt{T}}$, where $R_0\eqdef \twonorm{x^0-x^*}$ and $T$ is the total number of iterations, \algname{SM} finds an $\varepsilon$-approximate solution after
$\cO\left(\nfr{L_0^2R_0^2}{\varepsilon^2}\right)$
steps \citep{Nesterov2013, drusvyatskiy2020convex}. Since at each step the workers and the server send $\Theta(d)$ coordinates/floats, the worker-to-server and server-to-worker communication costs are
$\cO\left(\nfr{d L_0^2R_0^2}{\varepsilon^2}\right).$
To formally quantify communication costs, we introduce the following definition.
\begin{definition}
The worker-to-server (w2s, uplink) and server-to-worker (s2w, downlink) communication complexities of a method are the expected number of coordinates/floats that a worker sends to the server and that the server sends to a worker, respectively, to find an $\eps$–solution.
\end{definition}
Communication compression techniques, such as sparsification \citep{wang2018atomo, mishchenko202099, alistarh2018convergence, wangni2018gradient, konevcny2018randomized} and quantization \citep{alistarh2017qsgd, wen2017terngrad, zhang2016zipml, horvath2022natural, wu2018error, DIANA}, are known to be immensely powerful for reducing the communication overhead of gradient-type methods.
Existing literature primarily considers two main classes of compression operators: \textit{unbiased} and \textit{biased (contractive)} compressors.
\begin{definition}\label{def:unbiased_compressor}
(Unbiased compressor). A stochastic mapping $\mathcal{Q}: \mathbb{R}^d \rightarrow \mathbb{R}^d$ is called an unbiased compressor/compression operator if there exists $\omega \geq 0$ such that for any $x \in \mathbb{R}^d$:
\begin{eqnarray}\label{eq:unbiased_compressor}
\squeeze
\mathbb{E}[\mathcal{Q}(x)]=x, \quad \mathbb{E}\left[\twonorm{\mathcal{Q}(x)-x}^2\right] \leq \omega\twonorm{x}^2 .
\end{eqnarray}
\end{definition}
This definition encompasses a wide range of well-known compression techniques, including Rand$K$ sparsification \citep{stich2018sparsified}, random dithering \citep{roberts1962picture, goodall1951television}, and natural compression \citep{horvath2022natural}. Notable examples of methods employing compressor \eqref{eq:unbiased_compressor} are \algname{QSGD}\citep{alistarh2017qsgd}, \algname{DCGD}\citep{DCGD}, \algname{MARINA}\citep{MARINA}, \algname{DIANA}\citep{DIANA}, \algname{VR-DIANA}\citep{DIANA2}, \algname{DASHA}\citep{DASHA}, \algname{FedCOMGATE}\citep{FedCOMGATE}, \algname{FedPAQ}\citep{reisizadeh2020fedpaq}, \algname{FedSTEPH}\citep{das2020improved}, \algname{FedCOM}\citep{FedCOMGATE}, \algname{ADIANA}\citep{ADIANA}, \algname{NEOLITHIC}\citep{NEOLITHIC}, \algname{ACGD}\citep{ADIANA}, and \algname{CANITA}\citep{CANITA}.
However, Definition \ref{def:unbiased_compressor} does not cover another important class of practically more favorable compressors, called \textit{contractive}, which are usually biased.
\begin{definition}\label{def:contractive_compressor}
(Contractive compressor). A stochastic mapping $\cC: \mathbb{R}^d \rightarrow \mathbb{R}^d$ is called a contractive compressor/compression operator if there exists $\alpha \in(0,1]$ such that for any $x \in \mathbb{R}^d$:
\begin{eqnarray}\label{eq:contractive_compressor}
\squeeze
\mathbb{E}\left[\twonorm{\cC(x)-x}^2\right] \leq(1-\alpha)\twonorm{x}^2 .
\end{eqnarray}
\end{definition}

We denote the families of compressors satisfying Definitions \ref{def:unbiased_compressor} and \ref{def:contractive_compressor} by $\mathbb{U}(\omega)$ and $\mathbb{B}(\alpha)$, respectively. Notably, it can easily be verified (see Lemma 8 in \citep{richtarik2021ef21}) that if $\cQ \in \mathbb{U}(\omega)$, then $(\omega+1)^{-1} \cQ \in \mathbb{B}\left((\omega+1)^{-1}\right)$, indicating that the family of biased compressors is wider.

Inequality \eqref{eq:contractive_compressor} is satisfied by a vast array of compressors considered in the literature, including numerous variants of sparsification operators, such as Top$K$ compressors \citep{Strom2015, dryden2016communication, aji2017sparse, alistarh2018convergence}, quantization operators \citep{alistarh2017qsgd, horvath2022natural}, low-rank approximation \citep{vogels2019powersgd, vogels2020practical, safaryan2021fednl}, count-sketches \citep{ivkin2019communication, rothchild2020fetchsgd}, and more. For a comprehensive overview of biased and unbiased compressors, we refer readers to the summary by \citep{beznosikov2023biased, demidovich2023guide, safaryan2022uncertainty}.

However, naive implementation of distributed \algname{SGD} with biased compression (e.g., TopK) can lead to exponential divergence \citep{beznosikov2023biased}. Error Feedback (hereafter \algname{EF14}), first proposed as a heuristic by \citet{Seide2014}, emerged as a crucial technique to address these divergence issues. Initial theoretical analysis of \algname{EF14} focused on single-node settings \citep{stich2018sparsified, Alistarh-EF-NIPS2018,stich2019error} before extending to distributed data regimes \citep{cordonnier2018convex, beznosikov2023biased,koloskova2020unified}. \citet{richtarik2021ef21} re-engineered \algname{EF14} into a new method called \algname{EF21}, achieving optimal $\cO\rb{\nfr{1}{T}}$ convergence rates for smooth non-convex problems under standard assumptions, improving upon the previous best-known rate of $\cO\rb{\nfr{1}{T^{2/3}}}$ \citep{koloskova2020unified}.

The \algname{EF21} framework spawned several algorithms \citep{richtarik20223pc, fatkhullin2021ef21}, including extensions for bidirectional (s2w and w2s) biased compression. \citet{EF21-P} introduced \algname{EF21-P}, which combines biased s2w and unbiased w2s compression to achieve improved complexity bounds in the smooth strongly convex setting. More recently, \citet{gruntkowska2024improving} developed \algname{MARINA-P} for smooth non-convex optimization, leveraging correlated unbiased compressors on the server side to obtain tighter complexity bounds than both \algname{EF21} and \algname{EF21-P}. In parallel, \citet{NonsmoothEF21-P} provided the first non-smooth convergence results for \algname{EF21-P}, though limited to the single-node setting.

In order to express communication complexities, we will further need the following quantities.
\begin{definition}[Expected density]
For the given compression operators $\cQ(x)$ and $\cC(x)$, we define the expected density as 
$\zeta_{\cQ} = \sup_{x \in \mathbb{R}^d} \Exp{\norm{\cQ(x)}_0}$ and $\zeta_{\cC} = \sup_{x \in \mathbb{R}^d} \Exp{\norm{\cC(x)}_0}$, 
where $ \norm{y}_0 $ is the number of non-zero components of $ y \in \mathbb{R}^d $.
\end{definition}
Notice that the expected density is well-defined for any compression operator since $\norm{\cQ(x)}_0 \leq d$ and $\norm{\cC(x)}_0 \leq d$.

\subsubsection{Communication-efficient Federated Methods for Non-smooth Optimization}
The landscape of communication-efficient federated methods for non-smooth optimization remains largely unexplored, with most existing research focusing on smooth objectives or single-node scenarios. We discuss the current state of the field and identify the gaps our work aims to address.
\paragraph{Majority of Results on Distributed Optimization with s2w Compression are for Smooth Functions.}
While there is an abundance of work studying compression techniques to reduce s2w communication cost \citep{zheng2019communication, EF21-P, fatkhullin2021ef21, philippenko2021preserved, liu2020double, gorbunov2020linearly, safaryan2022uncertainty, huang2022lower, horvath2022natural, tang2019doublesqueeze, tyurin20232direction, gruntkowska2024improving}, these studies primarily focus on smooth objectives. In the context of Error Feedback methods, to the best of our knowledge, only two works \citep{karimireddy2019error, NonsmoothEF21-P} offer non-smooth convex guarantees, and these are limited to the single-node regime, which has limited practical interest in federated learning contexts.
\paragraph{Existing Literature on Distributed Subgradient Methods Focuses Primarily on w2s Compression.}
While distributed parallel subgradient methods have been extensively studied, existing results either do not offer compressed communications \citep{nedic2009distributed, kiwiel2001parallel, hishinuma2015parallel, zheng2022parallel}, or restrict analysis to specific compression operators like quantization \citep{xia2023distributed, doan2020fast, doan2018convergence, xia2022convergence, emiola2022quantized}, without covering other notable examples from classes \eqref{eq:contractive_compressor} or \eqref{eq:unbiased_compressor}. Moreover, these works consider only w2s compression, neglecting the s2w direction. To our knowledge, there are no comprehensive results addressing non-smooth distributed optimization with s2w compression.
\paragraph{Adaptive Stepsizes in Non-smooth Settings Lack Distributed Guarantees.}
Recent works on adaptive stepsizes in non-smooth convex optimization \citep{khaled2023dowg, defazio2023and, defazio2024road, mishchenkoprodigy, defazio2023learning} have shown promising practical results. However, these studies primarily focus on single-node scenarios and are not directly applicable to federated learning. Polyak stepsizes \citep{Polyak1987, hazan2019revisiting}, in particular, have gained popularity among theoreticians, but the majority of recent results \citep{loizou2021stochastic, oikonomou2024stochastic, jiang2024adaptive} assume smoothness and are again limited to single-node settings. The few results available for non-smooth convex settings \citep{hazan2019revisiting, schaipp2023stochastic} are also confined to single-node scenarios.
\paragraph{Summary and Our Goal.}
In summary, the intersection of non-smooth optimization, communication efficiency, and federated learning remains underexplored. Our work aims to address this gap by providing the first comprehensive study of distributed non-smooth optimization with server-to-worker compression and support for adaptive stepsizes while maintaining optimal convergence rates.

\begin{table*}[t]
\centering
\footnotesize
\caption{Summary of optimization methods employing worker-to-server (w2s) or server-to-worker (s2w) compression schemes.}
\label{tab:methods}    
\begin{threeparttable}
\begin{tabular}{|c|c c c c c|}
    \hline
    Method & Non-smooth & Distributed & \begin{tabular}{c}Compressed \\ communications \end{tabular} & \begin{tabular}{c}Compression \\ type \end{tabular} & \begin{tabular}{c}Adaptive \\ stepsizes \end{tabular} \\ 
    \hline
    \hline
    \begin{tabular}{c} \algname{EF14}\\ {\citep{karimireddy2019error}} \end{tabular} & \cmark & \xmark & \cmark & w2s & \xmark \\                     
    \hline
    \begin{tabular}{c} \algname{EF21-P}\\ {\citep{NonsmoothEF21-P}} \end{tabular} & \cmark & \xmark & \cmark & s2w & \cmark \\                     
    \hline
    \begin{tabular}{c} \algname{MARINA-P}\\ {\citep{gruntkowska2024improving}} \end{tabular} & \xmark & \cmark & \cmark & s2w & \xmark \\                     
    \hline   
    \begin{tabular}{c} \algname{SM} with Polyak Stepsize\\ {\citep{hazan2019revisiting}} \end{tabular} & \cmark & \xmark & \xmark & - & \cmark \\
    \hline
    \begin{tabular}{c} \algname{SM} with Quantization\\ {\citep{xia2023distributed}} \end{tabular} & \cmark & \cmark & \cmark & w2s & \xmark \\
    \hline
    \rowcolor{lightblue} 
    \begin{tabular}{c} \algname{EF21-P}\\ {[OURS]} \end{tabular} & \cmark & \cmark & \cmark & s2w & \cmark \\                 
    \hline
    \rowcolor{lightblue}
    \begin{tabular}{c} \algname{MARINA-P}\\ {[OURS]} \end{tabular} & \cmark & \cmark & \cmark & s2w & \cmark \\                    
    \hline          
\end{tabular}{}
\end{threeparttable}
\end{table*}

\section{Contributions}
We now summarize our main contributions:

{$\bullet$ \bf Extension of \algname{EF21-P} to distributed non-smooth settings.} 
We extend the theory of \algname{EF21-P}, originally developed for single-node scenarios \citep{NonsmoothEF21-P}, to the distributed setting. We prove optimal rates of $\cO\rb{\nfr{1}{\sqrt{T}}}$ for Polyak and constant stepsizes, and a suboptimal rate of $\cO\rb{\nfr{\log T}{\sqrt{T}}}$ for decreasing stepsizes, while establishing communication complexity bounds that match those of classical distributed subgradient methods. This addresses a crucial gap in the theoretical understanding of Error Feedback methods in non-smooth distributed optimization.

{$\bullet$ \bf Introduction of \algname{MARINA-P} for non-smooth objectives.} 
Building upon the recent work of \citet{gruntkowska2024improving}, we extend the applicability of \algname{MARINA-P} beyond smooth non-convex problems to non-smooth convex settings. We establish optimal rates of $\cO\rb{\nfr{1}{\sqrt{T}}}$ for Polyak and constant stepsizes, and a suboptimal rate of $\cO\rb{\nfr{\log T}{\sqrt{T}}}$ for decreasing stepsizes.

{$\bullet$ \bf Superior performance of \algname{MARINA-P} with correlated compressors.} 
Through our empirical studies, we demonstrate that \algname{MARINA-P}, when used with correlated compressors, outperforms \algname{EF21-P} in the non-smooth regime. This result extends the superiority of correlated compressors, previously established for smooth non-convex problems, to non-smooth convex optimization, providing with efficient tools for handling non-smooth objectives in federated settings.

{$\bullet$ \bf Support for diverse stepsize schedules.} 
We provide theoretical guarantees for both \algname{EF21-P} and \algname{MARINA-P} with constant, decreasing, and Polyak stepsizes. This contribution bridges the gap between theoretical advances and practical deep learning scenarios, where adaptive learning rates are commonplace, while maintaining optimal convergence rates.

To the best of our knowledge, our work presents the first theoretical results for distributed non-smooth optimization incorporating s2w compression and adaptive stepsizes, while achieving provably optimal convergence rates.

\section{EF21-P}\label{sec:ef21-p}
We now present the first major contribution of our paper: a distributed version of \algname{EF21-P}for the non-smooth setting. 

Let us first recap the standard single-node \algname{EF21-P} algorithm, which aims to solve \eqref{eq:finite_sum} via the iterative process:
\begin{eqnarray}\label{eq:ef21-smooth}
\squeeze
x^{t+1} &=& x^t-\gamma_t \nabla f(w^t)\\
w^{t+1} &=& w^t+\mathcal{C}^t\left(x^{t+1}-w^t\right), \nonumber
\end{eqnarray}
where $\gamma_t>0$ is a stepsize, $x^0 \in \mathbb{R}^d$ is the initial iterate, $w^0=x^0 \in \mathbb{R}^d$ is the initial iterate shift, and $\mathcal{C}^t$ is an instantiation of a randomized contractive compressor $\mathcal{C}$ sampled at time $t$. 
This method was proposed as a primal\footnote{Since it operates in the primal space of model parameters} counterpart to the standard \algname{EF21}. It has proven particularly useful in bidirectional settings where primal compression is performed on the server side, allowing for the decoupling of primal and dual compression parameter constants. For more details, we refer the reader to the original paper \citep{EF21-P}.
\citet{NonsmoothEF21-P} first extended \algname{EF21-P} to the non-smooth setting. Their key modification was replacing the "smooth" update step with a "non-smooth" one:
\begin{eqnarray}\label{eq:ef21-nonsmooth}
\squeeze
x^{t+1} = x^t- \gamma_t \partial f(w^t). 
\end{eqnarray}
They proved an optimal rates of $\cO\rb{\nfr{1}{\sqrt{T}}}$ for Polyak and constant stepsizes, and a suboptimal rate of $\cO\rb{\nfr{\log T}{\sqrt{T}}}$ for decreasing stepsizes, but only for the single-node regime.
In Algorithm \ref{alg:EF21-P}, we extend these results to the distributed setting, allowing for parallel computation of subgradients $\partial f(w^t)$.

\begin{algorithm}[H]
    \caption{\algname{EF21-P} (distributed version)}\label{alg:EF21-P}
    \begin{algorithmic}[1]
        \State \textbf{Input:} initial points $w^0 = x^0 \in \mathbb{R}^d$, stepsize $\gamma_0 > 0$
        \For{$t = 0, 1, 2, \ldots, T$}
        \For{$i = 1, \ldots, n$ \textbf{on Workers in parallel}}
        \State Receive compressed difference $\Delta^t$ from server
        \State Compute local subgradient $g_i^t = \partial f_i(w^t)$ and send it to server
        \EndFor
        \State \textbf{On Server:}
        \State Receive $g_i^t$ from workers
        \State Choose stepsize $\gamma_t$ (can be set according to \eqref{eq:EF21-P-D:constant_stepsize}, \eqref{eq:gamma_t_polyak}, or \eqref{eq:EF21-P:decr_stepsize})
        \State $x^{t+1} = x^t - \gamma_t \suminn g_i^t$
        \State Compute $\Delta^{t+1} = \mathcal{C}(x^{t+1} - w^t)$ and broadcast it to workers
        \State $w^{t+1} = w^{t} + \Delta^{t+1}$
        \For{$i = 1, \ldots, n$ \textbf{on Workers in parallel}}
        \State $w^{t+1} = w^{t} + \Delta^{t+1}$ 
        \EndFor
        \EndFor
        \State \textbf{Output:} $x^T$
    \end{algorithmic}
\end{algorithm}

At each iteration of distributed \algname{EF21-P}, the workers calculate $\partial f_i(w^t)$ and transmit it to the server. The server then averages the subgradients and updates the global model $x^t$. Subsequently, it computes the compressed difference $\Delta^{t+1} = \cC_i^t(x^{t+1}-w^t)$ and broadcasts the same vector $\Delta^{t+1}$ to all workers. Both the server and workers then use $\Delta^{t+1}$ to update their internal states $w^t$. 
Note that this procedure ensures that the states $w^t$ remain synchronized between workers and the server.

We now present the convergence result for our distributed \algname{EF21-P} algorithm.

\begin{theorem}\label{thm:main:EF21-P-D}
Let Assumptions \ref{as:existence_of_minimizer}, \ref{as:fi_convexity} and \ref{as:fi_lipschitzness} hold.
Define a Lyapunov function
$V^t \eqdef \sqnorm{\xt - x^*} + \frac{1}{\lambda_* \theta} \sqnorm{w^t - \xt}$,
where $\lambda_* \eqdef \fr{\sqrt{1-\alpha}} {1 - \sqrt{1-\alpha}}$ and $\theta \eqdef 1 - \sqrt{1-\alpha}$.
Define also a constant 
$B_* \eqdef 1 + 2\fr{\sqrt{1-\alpha}} {1 - \sqrt{1-\alpha}}$.
Let $\cb{w^t}$ be the sequence produced by \algname{EF21-P} (Algorithm \ref{alg:EF21-P}). Define $\avg{w}^T \eqdef \frac{1}{T} \sum_{t=0}^{T-1} w^t$ and $\what{w}^T \eqdef \frac{1}{\sum_{t=0}^{T-1} \gamma_t}\sum_{t=0}^{T-1} \gamma_t w^t$.

\textbf{1. (Constant stepsize).} If $\gamma_t \eqdef \gamma >0$, then
\begin{eqnarray}\label{eq:EF21-P-D:constant_stepsize_rate}
\squeeze
\Exp{f(\avg{w}^T) - f(x^*)} \le \frac{V^0}{2\gamma T} + \frac{B_* L_0^2 \gamma}{2}.
\end{eqnarray}
If, moreover, optimal $\gamma$ is chosen i.e. 
\begin{eqnarray}\label{eq:EF21-P-D:constant_stepsize}
\squeeze
\gamma \eqdef \fr{1}{\sqrt{T}} \sqrt{\fr{V^0}{B_* L_0^2}},
\end{eqnarray}
 then 
\begin{eqnarray}
\squeeze
\Exp{f(\avg{w}^T) - f(x^*)} \le \frac{\sqrt{B_* L_0^2 V^0}}{\sqrt{T}}.
\end{eqnarray}

\textbf{2. Polyak stepsize.}
If $\gamma_t$ is chosen as
\begin{eqnarray}\label{eq:gamma_t_polyak}
\squeeze
\gamma_t &\eqdef& \fr{f(w^t) - f(x^*)}{B_* \sqnorm{\pf{w^t}}},
\end{eqnarray}
then
\begin{eqnarray}\label{eq:EF21-P-D:polyak_stepsize_rate}
\squeeze
\Exp{f(\avg{w}^T) - f(x^*)} \le \frac{\sqrt{B_* L_0^2 V^0}}{\sqrt{T}}.
\end{eqnarray}

\textbf{3. (Decreasing stepsize).} If $\gamma_t$ is chosen as
\begin{eqnarray}\label{eq:EF21-P:decr_stepsize} 
\gamma_t \eqdef \fr{\gamma_0}{\sqrt{t+1}},
\end{eqnarray}
 then
\begin{eqnarray}
\squeeze
\Exp{f(\what{w}^T) - f(x^*)} \leq \frac{V^0 + 2\gamma_0^2 B_* L_0^2 \log(T + 1)}{\gamma_0\sqrt{T}}.
\end{eqnarray}
If, moreover, optimal $\gamma_0$ is chosen i.e.
\begin{eqnarray} 
\gamma_0 \eqdef \sqrt{\fr{V_0}{2B_* L_0^2 \log(T+1)}},
\end{eqnarray} then 
\begin{eqnarray}
\squeeze
\Exp{f(\what{w}^T) - f(x^*)} \leq 2\sqrt{2B_* L_0^2 V_0}\sqrt{\frac{\log(T+1)}{T}}.
\end{eqnarray}

\end{theorem}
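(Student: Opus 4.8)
The plan is to reduce all three stepsize regimes to a single one-step descent inequality for $V^t$ and then apply standard subgradient-method bookkeeping. Write $g^t \eqdef \suminn g_i^t = \suminn \partial f_i(w^t)$; this is a valid subgradient of $f$ at $w^t$ and satisfies $\twonorm{g^t} \le L_0$, so the server update is simply $\xtpo = \xt - \gamma_t g^t$. The target inequality, taken conditionally on the filtration $\cF_t$ (everything up to iteration $t$, so that the only randomness is the compressor $\cC$ at step $t$, and for the Polyak rule $\gamma_t$ is $\cF_t$-measurable), is
\[
\Exp{V^{t+1} \mid \cF_t} \le V^t - 2\gamma_t\rb{f(w^t) - f(x^*)} + B_* L_0^2 \gamma_t^2 .
\]

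To prove it I would expand the two pieces of $V^{t+1}$ separately. For the distance term, $\sqnorm{\xtpo - x^*} = \sqnorm{\xt - x^*} - 2\gamma_t\lin{g^t, \xt - x^*} + \gamma_t^2\sqnorm{g^t}$; splitting $\xt - x^* = (w^t - x^*) + (\xt - w^t)$ and applying the subgradient inequality $\lin{g^t, w^t - x^*} \ge f(w^t) - f(x^*)$ yields the function-value decrease plus a cross term $-2\gamma_t\lin{g^t, \xt - w^t}$. For the shift term, $w^{t+1} - \xtpo = \cC(\xtpo - w^t) - (\xtpo - w^t)$, so the contractive property gives $\Exp{\sqnorm{w^{t+1} - \xtpo} \mid \cF_t} \le (1-\alpha)\sqnorm{\xtpo - w^t}$; expanding $\xtpo - w^t = (\xt - w^t) - \gamma_t g^t$ through $\sqnorm{a+b} \le (1+\beta)\sqnorm{a} + (1+\beta^{-1})\sqnorm{b}$ with the tuned choice $\beta = (1-\alpha)^{-1/2} - 1$ makes the contraction factor exactly $\sqrt{1-\alpha} = \lambda_* \theta$ and the residual coefficient exactly $1/\theta$.

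The delicate bookkeeping is the main obstacle. After multiplying the shift bound by the Lyapunov weight $\fr{1}{\lambda_* \theta}$, the coefficient of $\sqnorm{\xt - w^t}$ becomes $1$, so relative to $V^t$ it leaves a surplus $\rb{1 - \fr{1}{\lambda_* \theta}}\sqnorm{\xt - w^t} = -\fr{1}{\lambda_*}\sqnorm{\xt - w^t}$. I would then spend exactly this negative term on the cross term via Young's inequality $-2\gamma_t\lin{g^t, \xt - w^t} \le \fr{1}{\lambda_*}\sqnorm{\xt - w^t} + \lambda_* \gamma_t^2 \sqnorm{g^t}$, so that all $\sqnorm{\xt - w^t}$ contributions cancel identically. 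Collecting the three surviving $\gamma_t^2\sqnorm{g^t}$ terms — coefficient $1$ from the distance expansion, $\lambda_*$ from this Young step, and $\lambda_*$ from the weighted shift residual (since $\fr{1}{\lambda_* \theta}\cdot\fr{1-\alpha}{\theta} = \lambda_*$) — gives precisely $(1 + 2\lambda_*)\gamma_t^2\sqnorm{g^t} = B_* \gamma_t^2 \sqnorm{g^t} \le B_* L_0^2 \gamma_t^2$. Making every constant line up with the stated $\lambda_*, \theta, B_*$ is where the care lies.

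Given the descent inequality, the three cases are routine. For the \textbf{constant} stepsize $\gamma_t = \gamma$, I would sum over $t = 0, \ldots, T-1$, telescope, drop $\Exp{V^T} \ge 0$, divide by $2\gamma T$, and invoke Jensen's inequality $f(\avg{w}^T) \le \fr{1}{T}\sum_{t} f(w^t)$; minimizing the resulting $\fr{V^0}{2\gamma T} + \fr{B_* L_0^2 \gamma}{2}$ over $\gamma$ gives \eqref{eq:EF21-P-D:constant_stepsize} and the $\cO(1/\sqrt{T})$ rate. For the \textbf{Polyak} rule, substituting $\gamma_t$ collapses the right-hand side to $V^t - \fr{\rb{f(w^t) - f(x^*)}^2}{B_*\sqnorm{g^t}}$; bounding $\sqnorm{g^t} \le L_0^2$, summing, and then combining Jensen for the concave square root with $f(\avg{w}^T) \le \fr{1}{T}\sum_{t} f(w^t)$ produces $\Exp{f(\avg{w}^T) - f(x^*)} \le \sqrt{B_* L_0^2 V^0 / T}$. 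For the \textbf{decreasing} rule $\gamma_t = \gamma_0/\sqrt{t+1}$, I would switch to the $\gamma_t$-weighted iterate $\what{w}^T$, use $\sum_{t=0}^{T-1}\gamma_t \ge \gamma_0\sqrt{T}$ and $\sum_{t=0}^{T-1}\gamma_t^2 \le 2\gamma_0^2\log(T+1)$, and finally optimize $\gamma_0$ to obtain the $\cO\rb{\sqrt{\log(T+1)/T}}$ bound.
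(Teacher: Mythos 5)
Your proposal is correct and follows essentially the same route as the paper's proof: the same two key bounds (the distance expansion with the subgradient inequality plus Young's inequality on the cross term, and the contractive-compressor bound with the tuned splitting parameter $s_*=(1-\alpha)^{-1/2}-1$), the same Lyapunov combination, and the same three-case bookkeeping; the only cosmetic difference is that you fix $\lambda=\lambda_*$ from the outset and verify the cancellations directly, whereas the paper carries a generic $\lambda$ through Lemmas \ref{le:EF21-P-D:main_lemma}--\ref{le:EF21-P-D:combination} and optimizes it afterwards. All of your constant checks ($1-\theta=\lambda_*\theta$, $\beta/(\lambda_*\theta)=\lambda_*$, $1+2\lambda_*=B_*$) match the paper's.
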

Let us analyze the obtained results. The constant $B_* \eqdef 1 + 2\fr{\sqrt{1-\alpha}} {1 - \sqrt{1-\alpha}} \le \fr{4}{\alpha} -1$ is a decreasing function in $\alpha$, which aligns with intuition since larger values of $\alpha$ correspond to less aggressive compression regimes.
For both constant \eqref{eq:EF21-P-D:constant_stepsize} and Polyak \eqref{eq:gamma_t_polyak} stepsizes, we achieve the optimal rate of $\cO\rb{\nfr{1}{\sqrt{T}}}$ known for uncompressed subgradient methods \citep{Nesterov2013, NIPS2015_7fec306d}. However, achieving this rate requires either knowing the total number of iterations $T$ in advance (for constant stepsize) or knowing the optimal value $f(x^*)$ (for Polyak stepsize), which may be impractical in many applications.
When neither $T$ nor $f(x^*)$ is known, one can employ the decreasing stepsize strategy \eqref{eq:EF21-P:decr_stepsize}. This approach leads to a suboptimal convergence rate of $\cO\rb{\nfr{\log T}{\sqrt{T}}}$ -- a well-known limitation of subgradient methods \citep{Nesterov2013, NonsmoothEF21-P}. 

For both constant and Polyak stepsizes, the following corollary provides explicit complexity bounds, characterizing both the number of iterations and the total communication cost needed to obtain an $\varepsilon$-approximate solution.
\begin{corollary}\label{cor:main:EF21-P-D}
Let the conditions of the Theorem \ref{thm:main:EF21-P-D} are met.
If $\gamma_t$ is set according to \eqref{eq:EF21-P-D:constant_stepsize} or \eqref{eq:gamma_t_polyak} (constant or Polyak stepsizes) 
then \algname{EF21-P} (Algorithm \ref{alg:EF21-P}) requires
\begin{eqnarray}
T = \cO\rb{\frac{L_0^2 R_0^2}{\alpha\eps^2}}
\end{eqnarray}
iterations/communication rounds in order to achieve $\Exp{f(\avg{w}^T) - f(x^*)} \le \eps$
, and the expected total communication cost per worker is $\cO\left(d+\zeta_{\cC}T\right)$.
\end{corollary}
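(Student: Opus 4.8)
The plan is to read the iteration complexity directly off the rate bounds already proved in Theorem \ref{thm:main:EF21-P-D} and then translate the number of rounds into a per-worker downlink cost via the expected density of the broadcast messages. For both the optimal constant stepsize \eqref{eq:EF21-P-D:constant_stepsize} and the Polyak stepsize \eqref{eq:gamma_t_polyak}, the theorem gives the \emph{same} guarantee $\Exp{f(\avg{w}^T) - f(x^*)} \le \nfr{\sqrt{B_* L_0^2 V^0}}{\sqrt{T}}$, so a single argument covers both cases. First I would impose the target accuracy by requiring the right-hand side to be at most $\eps$; rearranging $\nfr{\sqrt{B_* L_0^2 V^0}}{\sqrt{T}} \le \eps$ yields $T \ge \nfr{B_* L_0^2 V^0}{\eps^2}$, i.e. $T = \cO\rb{\nfr{B_* L_0^2 V^0}{\eps^2}}$ rounds suffice to reach $\Exp{f(\avg{w}^T) - f(x^*)} \le \eps$.

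Next I would simplify the two problem-dependent quantities $B_*$ and $V^0$. For $V^0$, recall the Lyapunov function $V^t = \sqnorm{\xt - x^*} + \nfr{1}{\lambda_* \theta}\sqnorm{w^t - \xt}$; since the algorithm is initialized with $w^0 = x^0$, the second term vanishes at $t=0$, leaving $V^0 = \sqnorm{x^0 - x^*} = R_0^2$. For $B_*$, I would invoke the bound already recorded after the theorem, namely $B_* = 1 + 2\nfr{\sqrt{1-\alpha}}{1 - \sqrt{1-\alpha}} \le \nfr{4}{\alpha} - 1 = \cO\rb{\nfr{1}{\alpha}}$. Substituting both into the expression for $T$ gives $T = \cO\rb{\nfr{L_0^2 R_0^2}{\alpha \eps^2}}$, which is exactly the claimed iteration/communication-round complexity.

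Finally I would convert rounds into communication cost. In each round the server broadcasts the single compressed vector $\Delta^{t+1} = \cC(x^{t+1} - w^t)$ to the workers, and by the definition of expected density its expected number of nonzero coordinates is at most $\zeta_{\cC}$; hence each of the $T$ rounds contributes $\zeta_{\cC}$ floats in expectation, for a total of $\cO\rb{\zeta_{\cC} T}$. Adding the one-time $\cO(d)$ cost of distributing the common initialization $w^0 = x^0$ (after which the states $w^t$ stay synchronized purely through the shared increments $\Delta^{t+1}$), the expected per-worker downlink cost is $\cO\rb{d + \zeta_{\cC} T}$.

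I do not expect a substantive obstacle, since the corollary is essentially a substitution into the theorem followed by a linearity-of-expectation count. The only points requiring care are correctly evaluating $V^0 = R_0^2$ from the initialization $w^0 = x^0$, applying the pre-stated estimate $B_* = \cO(1/\alpha)$ rather than carrying the compressor-dependent constant, and justifying the additive $d$ term from the initial synchronization so that it is not erroneously folded into the per-round cost.
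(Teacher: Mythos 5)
Your proposal is correct and follows essentially the same route as the paper's proof: read off the common rate $\nfr{\sqrt{B_* L_0^2 V^0}}{\sqrt{T}}$ valid for both stepsizes, solve for $T$, substitute $V^0 = R_0^2$ (from $w^0 = x^0$) and $B_* \le \nfr{4}{\alpha} - 1$, and charge $\zeta_{\cC}$ expected floats per broadcast round plus the initial $\cO(d)$ synchronization. Your accounting of the additive $d$ term is in fact slightly more explicit than the paper's, which simply states the cost $d + \zeta_{\cC}T$ without elaboration.
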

Let us analyze the implications of Corollary \ref{cor:main:EF21-P-D}. In the uncompressed case ($\alpha=1$), our algorithm achieves the optimal rate of standard Subgradient Methods (\algname{SM}) \citep{Nesterov2013} for first-order non-smooth optimization. With Top$K$ compression ($\zeta_{\cC} = K$), the communication complexity becomes $\cO\rb{\nfr{dL_0^2R_0^2}{\varepsilon^2}}$, matching the worst-case complexity of distributed \algname{SM}. This indicates that \algname{EF21-P} with Top$K$ compression cannot improve upon \algname{SM}'s complexity regardless of the compression parameter $\alpha$ -- a fundamental limitation in communication-compressed non-smooth optimization. Our findings align with \citet{balkanski2018parallelization}, who demonstrated that parallelization provides no worst-case benefits for non-smooth optimization.

From a practical perspective, \algname{EF21-P}'s main limitation stems from broadcasting identical compressed differences $\Delta_t$ to all workers, potentially leading to poor approximations of $x^{t+1}$ by $w^t + \Delta_t$. The \algname{MARINA-P} algorithm \citep{gruntkowska2024improving}, originally developed for smooth non-convex problems, addresses this limitation. In the following section, we extend \algname{MARINA-P} to the non-smooth setting.
\section{MARINA-P}\label{subsubsec:marina-p}

Building upon the foundations of the standard \algname{MARINA} algorithm \citep{MARINA, szlendakpermutation}, \citet{gruntkowska2024improving} introduced \algname{MARINA-P}, a primal counterpart designed to operate in the model parameter space. This section presents an extension of \algname{MARINA-P} to the non-smooth convex setting.
\begin{algorithm}[H]
\caption{\algname{MARINA-P}}\label{alg:MARINA-P}
\begin{algorithmic}[1]
\State \textbf{Input:} initial point $x^0 \in \mathbb{R}^d$, initial model shifts $w_i^0 = x^0$ for all $i \in [n]$, stepsize $\gamma_0 > 0$, probability $0 < p \leq 1$, compressors $\mathcal{Q}_i^t \in \mathbb{U}(\omega)$ for all $i \in [n]$
\For{$t = 0, 1, \ldots, T$}
    \For{$i = 1, \ldots, n$ \textbf{on Workers in parallel}}
        \State Compute local subgradient $g_i^t = \partial f_i(w_i^t)$ and send it to server
    \EndFor
    \State \textbf{On Server:}

    \State Receive $g_i^t$ from workers
    \State Choose stepsize $\gamma_t$ (can be set according to \eqref{eq:MARINA-P:constant_stepsize}, \eqref{eq:marina_polyak_type}, or \eqref{eq:marina_decr_type})
    \State $x^{t+1} = x^t - \gamma_t  \suminn g_i^t$ \label{eq:alg:marina_p:update}
    \State Sample $c^t \sim \text{Bernoulli}(p)$
    \If{$c^t = 0$}
        \State Send $\mathcal{Q}_i^t(x^{t+1} - x^t)$ to worker $i$ for $i \in [n]$
    \Else
        \State Send $x^{t+1}$ to all workers
    \EndIf
    \For{$i = 1, \ldots, n$ \textbf{on Workers in parallel}}
        \State $w_i^{t+1} = \begin{cases} 
            x^{t+1} & \text{if } c^t = 1 \\
            w_i^t + \mathcal{Q}_i^t(x^{t+1} - x^t) & \text{if } c^t = 0
        \end{cases}$\label{eq:wtpo_update}
    \EndFor
\EndFor
\State \textbf{Output:} $x^T$
\end{algorithmic}
\end{algorithm}
At each iteration, workers compute subgradients $\p f_i(w_i^t)$ and transmit them to the server. The server aggregates these subgradients and updates the global model $x^t$. With probability $p$ (typically small), the server sends the uncompressed updated model $x^{t+1}$ to all workers. Otherwise, each worker $i$ receives a compressed vector $\cQ_i^t(x^{t+1} - x^t)$. Workers then update their local models $w_i^{t+1}$ based on the received information.
A key feature of \algname{MARINA-P} is that the compressed vectors $\cQ_1^t(x^{t+1} - x^t), \ldots, \cQ_n^t(x^{t+1} - x^t)$ can differ across workers. This distinction is crucial for the algorithm's practical superiority, as it allows for potentially better approximations of $x^{t+1}$ compared to methods like \algname{EF21-P}, especially when the compressors $\cQ_1, \ldots, \cQ_n$ are correlated.

We now present the main convergence results for \algname{MARINA-P} in the non-smooth convex setting.
\begin{theorem}\label{thm:main:MARINA-P-D}
Let Assumptions \ref{as:existence_of_minimizer}, \ref{as:fi_convexity} and \ref{as:fi_lipschitzness} hold. Define a Lyapunov function
$V^t \eqdef \sqnorm{\xt - x^*} + \frac{1}{\lambda_* p} \suminn \sqnorm{w_i^t - \xt}$,
where $\lambda_* \eqdef \fr{\avg{L}_0}{\wt{L}_0}\sqrt{\fr{(1-p)\omega}{p}}$.
Define also a constant 
$\wt{B}_* \eqdef \avg{L}_0^2 + 2{\avg{L}_0}{\wt{L}_0}\sqrt{\fr{(1-p)\omega}{p}}$.
Let $\cb{w_i^t}$ be the sequence produced by \algname{MARINA-P}(Algorithm \ref{alg:MARINA-P}). 
Define $\avg{w}_i^T \eqdef \frac{1}{T} \sum_{t=0}^{T-1} w_i^t$ and $\what{w}_i^T \eqdef \frac{1}{\sum_{t=0}^{T-1} \gamma_t}\sum_{t=0}^{T-1} \gamma_t w_i^t$ for all $i \in [n]$.

\textbf{1. (Constant stepsize).} If $\gamma_t \eqdef \gamma >0$, then
\begin{eqnarray}\label{eq:main:MARINA-P-D:constant_stepsize_rate}
\squeeze
\Exp{\suminn f_i(\avg{w}_i^T) - f(x^*)} \le \frac{V^0}{2\gamma T} + \frac{\wt{B}_* \gamma}{2}.
\end{eqnarray}
If, moreover, optimal $\gamma$ is chosen i.e. 
\begin{eqnarray}\label{eq:MARINA-P:constant_stepsize}
\squeeze
\gamma \eqdef \fr{1}{\sqrt{T}} \sqrt{\fr{V^0}{\wt{B}_*}},
\end{eqnarray}
 then 
\begin{eqnarray}\label{eq:main:cor:MARINA-P-D:constant_stepsize_rate}
\squeeze
\Exp{\suminn f_i(\avg{w}_i^T) - f(x^*)} \le \frac{\sqrt{\wt{B}_* V^0}}{\sqrt{T}}.
\end{eqnarray}

\textbf{2. Polyak stepsize.}
If $\gamma_t$ is chosen as
\begin{eqnarray}\label{eq:marina_polyak_type}
\gamma_t \eqdef \fr{\suminn f_i(w_i^t) - f(x^*)}{\sqnorm{\suminn \pfi{w_i^t}} \rb{1 + 2\fr{\sqrt{\suminn \twonorm{\pfi{w_i^t}}^2}}{\twonorm{\suminn \pfi{w_i^t}}}  \sqrt{\fr{(1-p)\omega}{p}}} } 
,
\end{eqnarray}
then
\begin{eqnarray}\label{eq:main:MARINA-P-D:polyak_stepsize_rate}
\squeeze
\Exp{\suminn f_i(\avg{w}_i^T) - f(x^*)} \le \frac{\sqrt{\wt{B}_* V^0}}{\sqrt{T}}.
\end{eqnarray}
\textbf{3. (Decreasing stepsize).} If $\gamma_t$ is chosen as 
\begin{eqnarray}\label{eq:marina_decr_type}
\gamma_t \eqdef \fr{\gamma_0}{\sqrt{t+1}},
\end{eqnarray}
 then
\begin{eqnarray}
\squeeze
\Exp{\suminn f_i(\what{w}_i^T) - f(x^*)} \leq \frac{V^0 + 2\gamma_0^2 \wt{B}_* \log(T + 1)}{\gamma_0\sqrt{T}}.
\end{eqnarray}
If, moreover, optimal $\gamma_0$ is chosen i.e. 
\begin{eqnarray}
\gamma_0 \eqdef \sqrt{\fr{V_0}{2\wt{B}_* \log(T+1)}},
\end{eqnarray}
 then 
\begin{eqnarray}
\squeeze
\Exp{\suminn f_i(\what{w}_i^T) - f(x^*)} \leq 2\sqrt{2\wt{B}_* V_0}\sqrt{\frac{\log(T+1)}{T}}.
\end{eqnarray}

\end{theorem}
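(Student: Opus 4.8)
The plan is to prove a single one-step Lyapunov descent inequality and then deduce all three stepsize regimes from it, exactly mirroring the structure used for single-node \algname{EF21-P} in \citep{NonsmoothEF21-P} and for smooth \algname{MARINA-P} in \citep{gruntkowska2024improving}. Writing $g_i^t \eqdef \pfi{w_i^t}$, $g^t \eqdef \suminn g_i^t$, $\delta_t \eqdef \suminn f_i(w_i^t) - f(x^*)$, $S^t \eqdef \suminn \sqnorm{w_i^t - \xt}$ and $\cG^t \eqdef \suminn \sqnorm{g_i^t}$, the target is
\[
\Expu{t}{V^{t+1}} \le V^t - 2\gamma_t \delta_t + \gamma_t^2 \wt{B}_*.
\]
Once this holds the three cases are routine: the initial value is $V^0 = \sqnorm{x^0 - x^*} = R_0^2$ because $w_i^0 = x^0$, so $\suminn \sqnorm{w_i^0 - x^0} = 0$, and Jensen's inequality applied to each convex $f_i$ turns the running averages $\avg{w}_i^T$ and $\what{w}_i^T$ into the left-hand sides of the claimed bounds.

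To establish the descent I would start from the server update $\xtpo = \xt - \gamma_t g^t$ and expand $\sqnorm{\xtpo - x^*} = \sqnorm{\xt - x^*} - 2\gamma_t \lin{g^t, \xt - x^*} + \gamma_t^2 \sqnorm{g^t}$. The crucial point is that the subgradients are evaluated at the shifted points $w_i^t$, not at $\xt$. Splitting $\xt - x^* = (w_i^t - x^*) + (\xt - w_i^t)$ and using convexity of each $f_i$ gives $\lin{g^t, \xt - x^*} \ge \delta_t + \suminn \lin{g_i^t, \xt - w_i^t}$, and the cross term is controlled by Cauchy--Schwarz across workers followed by Young's inequality, $-\suminn \lin{g_i^t, \xt - w_i^t} \le \tfrac{\beta}{2} S^t + \tfrac{1}{2\beta}\cG^t$; this is precisely what forces $S^t$ into the Lyapunov function. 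In parallel I would take the conditional expectation of the shift update \eqref{eq:wtpo_update} over both the Bernoulli draw and the unbiased compressors: the event $c^t=1$ zeroes the shift, while on $c^t=0$ unbiasedness annihilates the cross term and Definition \ref{def:unbiased_compressor} yields $\Expu{t}{S^{t+1}} = (1-p)S^t + (1-p)\omega \gamma_t^2 \sqnorm{g^t}$.

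Combining the two recursions with the weight $\tfrac{1}{\lambda_* p}$ attached to $S^t$, the choice $\beta = \tfrac{1}{\gamma_t \lambda_*}$ makes the $S^t$ coefficients cancel exactly (the factor $1-p$ is exactly what $\tfrac{1}{\lambda_* p}$ is designed to absorb), leaving the pure error term $\gamma_t^2\big(\sqnorm{g^t} + \lambda_* \cG^t + \tfrac{(1-p)\omega}{\lambda_* p}\sqnorm{g^t}\big)$. Bounding $\sqnorm{g^t}\le \avg{L}_0^2$ and $\cG^t \le \wt{L}_0^2$ via Assumption \ref{as:fi_lipschitzness}, and choosing $\lambda_* = \tfrac{\avg{L}_0}{\wt{L}_0}\sqrt{\tfrac{(1-p)\omega}{p}}$ to balance the two $\lambda_*$-dependent pieces by AM--GM, collapses the bracket to exactly $\wt{B}_*$. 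I expect this coupling, i.e. choosing the Young parameter against the compression-induced growth of $S^t$ so that the shift term telescopes cleanly after averaging out the compressor randomness, to be the main obstacle; the rest is mechanical.

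For the constant and decreasing stepsizes I would sum the descent over $t$, telescope $V^t$ (using $V^T\ge 0$), and divide by $2\sum_t \gamma_t$, invoking $\sum_{t=0}^{T-1}\gamma_t^2 = \cO(\gamma_0^2 \log T)$ and $\sum_{t=0}^{T-1}\gamma_t \ge \gamma_0\sqrt{T}$ for the $\gamma_0/\sqrt{t+1}$ schedule; optimizing $\gamma$ (resp.\ $\gamma_0$) then produces the stated rates. For the Polyak stepsize the only change is to keep the error term in its sharp per-iteration form $C_t \eqdef \sqnorm{g^t} + 2\twonorm{g^t}\sqrt{\cG^t}\sqrt{\tfrac{(1-p)\omega}{p}}$, obtained by balancing the cross-term and compression contributions against the \emph{actual} subgradient norms rather than their Lipschitz upper bounds; this is exactly the denominator in \eqref{eq:marina_polyak_type}, and the choice $\gamma_t = \delta_t/C_t$ maximizes the one-step decrease, giving $\Expu{t}{V^{t+1}}\le V^t - \delta_t^2/C_t$. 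Telescoping yields $\sum_{t=0}^{T-1}\Exp{\delta_t^2/C_t}\le V^0$, and since $C_t \le \wt{B}_*$, the Cauchy--Schwarz step $\big(\sum_t \delta_t\big)^2 \le \big(\sum_t \delta_t^2/C_t\big)\big(\sum_t C_t\big) \le V^0\, T\, \wt{B}_*$ combined with Jensen delivers the optimal $\cO(1/\sqrt{T})$ rate.
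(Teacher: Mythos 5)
Your proposal follows essentially the same route as the paper's proof: the same two key bounds (the iterate-distance expansion using per-worker convexity plus Young's inequality on the cross term, and the Bernoulli/unbiasedness shift recursion), the same Lyapunov combination with weight $\tfrac{1}{\lambda p}$ so that the shift terms recombine exactly, the same optimization of $\lambda$ and Lipschitz bounds collapsing the error coefficient to $\wt{B}_*$, and the same handling of the three stepsize regimes. The only cosmetic difference is in the Polyak case, where you telescope $\sum_t \delta_t^2/C_t \le V^0$ and then invoke $\sum_t C_t \le T\wt{B}_*$ inside Cauchy--Schwarz, whereas the paper first bounds $C_t \le \wt{B}_*$ in each one-step decrease and then telescopes; both yield the identical $\sqrt{\wt{B}_* V^0/T}$ bound.
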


\begin{remark}
For both \algname{EF21-P} and \algname{MARINA-P}, the Polyak stepsize can be efficiently implemented in the distributed setting without additional per-iteration communication overhead. This is because the subgradient values $\pfi{w^t}$ (for \algname{EF21-P}) and $\pfi{w_i^t}$ (for \algname{MARINA-P}) are already computed by the clients and transmitted to the server as part of the algorithm's regular operations.
\end{remark}
Let us analyze these results. The constant $\wt{B}_* \eqdef \avg{L}_0^2 + 2{\avg{L}_0}{\wt{L}_0}\sqrt{\fr{(1-p)\omega}{p}}$ depends on both compression parameters $\omega$ and $p$. Smaller values of $\omega$ correspond to less aggressive compression, while larger values of $p$ indicate more frequent uncompressed communication -- both cases lead to smaller $\wt{B}_*$ and consequently faster convergence.
For both constant \eqref{eq:MARINA-P:constant_stepsize} and Polyak \eqref{eq:marina_polyak_type} stepsizes, we obtain the optimal rate of $\cO\rb{\nfr{1}{\sqrt{T}}}$ \citep{Nesterov2013, NIPS2015_7fec306d}. As with \algname{EF21-P}, achieving this rate requires either knowing the total iterations $T$ (for constant stepsize) or the optimal value $f(x^*)$ (for Polyak stepsize) in advance. 
When such knowledge is unavailable, the decreasing stepsize strategy offers a practical alternative, though it results in a suboptimal $\cO\rb{\nfr{\log T}{\sqrt{T}}}$ convergence rate -- a characteristic limitation of subgradient methods \citep{Nesterov2013}.
It is worth noting that implementing the Polyak stepsize only requires an estimate of $f(x^*)$, rather than knowledge of the Lipschitz constant $L_0$. This characteristic is common among Polyak stepsizes \citep{loizou2021stochastic}.

For the constant and Polyak stepsize regimes, the following corollary establishes complexity bounds and characterizes the communication costs required to achieve an $\varepsilon$-approximate solution.
\begin{corollary}\label{cor:main:MARINA-P-D}
Let the conditions of the Theorem \ref{thm:main:MARINA-P-D} are met and $p = \nfr{\zeta_{\cQ}}{d}$.
If $\gamma_t$ is set according to \eqref{eq:MARINA-P:constant_stepsize} or \eqref{eq:marina_polyak_type} (constant or Polyak stepsizes) 
then \algname{MARINA-P} (Algorithm \ref{alg:MARINA-P}) requires
\begin{eqnarray}
T = \cO\rb{\frac{R^2_0}{\eps^2}\rb{\avg{L}_0^2 + {\avg{L}_0}{\wt{L}_0}\sqrt{\omega\rb{\fr{d}{\zeta_{\cQ}}-1}}}}
\end{eqnarray}
iterations/communication rounds in order to achieve $\Exp{\suminn f_i(\avg{w}_i^T) - f(x^*)} \le \eps$
, and the expected total communication cost per worker is $\cO\left(d+\zeta_{\cQ}T\right)$.
\end{corollary}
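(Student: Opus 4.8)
The plan is to obtain this corollary as a direct specialization of Theorem~\ref{thm:main:MARINA-P-D}. For both the constant stepsize~\eqref{eq:MARINA-P:constant_stepsize} and the Polyak stepsize~\eqref{eq:marina_polyak_type}, parts~1 and~2 of the theorem yield the \emph{same} guarantee $\Exp{\suminn f_i(\avg{w}_i^T) - f(x^*)} \le \nfr{\sqrt{\wt{B}_* V^0}}{\sqrt{T}}$, so I can treat both stepsize regimes at once. The first step is to evaluate the Lyapunov function at initialization: since Algorithm~\ref{alg:MARINA-P} sets $w_i^0 = x^0$ for every $i \in [n]$, each term $\sqnorm{w_i^0 - x^0}$ vanishes, and hence $V^0 = \sqnorm{x^0 - x^*} = R_0^2$.

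With $V^0 = R_0^2$ in hand, the second step is to invert the rate. Requiring $\nfr{\sqrt{\wt{B}_* R_0^2}}{\sqrt{T}} \le \eps$ and solving for $T$ gives $T \ge \nfr{\wt{B}_* R_0^2}{\eps^2}$, i.e. $T = \cO\rb{\nfr{\wt{B}_* R_0^2}{\eps^2}}$. It then remains to rewrite $\wt{B}_* \eqdef \avg{L}_0^2 + 2\avg{L}_0\wt{L}_0\sqrt{\nfr{(1-p)\omega}{p}}$ under the prescribed choice $p = \nfr{\zeta_{\cQ}}{d}$. Using the identity $\nfr{1-p}{p} = \nfr{d}{\zeta_{\cQ}} - 1$, the compression-dependent term becomes $2\avg{L}_0\wt{L}_0\sqrt{\omega\rb{\nfr{d}{\zeta_{\cQ}}-1}}$; absorbing the factor $2$ into the $\cO(\cdot)$ notation recovers exactly the stated iteration/communication-round count.

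For the communication cost, the third step is a per-iteration expectation over the Bernoulli variable $c^t$. With probability $p$ the server broadcasts the full vector $x^{t+1}$ (cost $d$ floats per worker), and with probability $1-p$ it sends $\cQ_i^t(x^{t+1}-x^t)$, whose expected support is bounded by the expected density $\zeta_{\cQ}$. Hence the expected per-round downlink cost per worker is $p\, d + (1-p)\,\zeta_{\cQ}$, which under $p = \nfr{\zeta_{\cQ}}{d}$ equals $\zeta_{\cQ} + (1-p)\zeta_{\cQ} \le 2\zeta_{\cQ} = \cO(\zeta_{\cQ})$. Summing over the $T$ rounds and adding the one-time $\cO(d)$ cost of initializing the shifts $w_i^0 = x^0$ yields the claimed $\cO\rb{d + \zeta_{\cQ} T}$ total.

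I do not anticipate a genuine obstacle here, as the corollary is essentially bookkeeping layered on top of the theorem. The two points that require a little care are \emph{(i)} correctly observing $V^0 = R_0^2$, which hinges on the synchronized initialization $w_i^0 = x^0$ and would otherwise leave a spurious shift term, and \emph{(ii)} keeping the communication accounting in expectation consistent with the Bernoulli switch, so that the $d$-cost full broadcasts (occurring a $p$-fraction of the time) and the $\zeta_{\cQ}$-cost compressed messages combine to $\cO(\zeta_{\cQ})$ per round precisely because of the balanced choice $p = \nfr{\zeta_{\cQ}}{d}$.
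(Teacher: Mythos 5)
Your proposal is correct and follows essentially the same route as the paper: invert the $\sqrt{\wt{B}_* V^0/T}$ rate after noting $V^0 = R_0^2$ from the synchronized initialization, substitute $p = \zeta_{\cQ}/d$ into $\wt{B}_*$ via $(1-p)/p = d/\zeta_{\cQ}-1$, and bound the expected per-round downlink cost by $dp + (1-p)\zeta_{\cQ} \le 2\zeta_{\cQ}$. The paper's appendix proof performs exactly this bookkeeping, so no further comparison is needed.
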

This corollary reveals several important properties. With Rand$K$ compression ($\zeta_{\cQ}=K$, $\omega = \nfr{d}{K} - 1$) \citep{beznosikov2023biased}, \algname{MARINA-P} achieves communication complexity $\cO\rb{\nfr{d \wt{L}_0^2R_0^2}{\eps^2}}$. Under the condition $\wt{L}_0^2 = \cO\rb{L_0}$, this matches the optimal per-worker complexity of standard \algname{SM}, up to constant factors \citep{Nesterov2013}.
A notable feature of our complexity result is its independence from the number of workers $n$ in the non-smooth setting -- a known phenomenon in subgradient methods \citep{NIPS2015_7fec306d, balkanski2018parallelization}. This contrasts with \algname{MARINA-P}'s behavior in smooth non-convex settings \citep{gruntkowska2024improving}, where complexity scales as $\cO(\frac{1}{n})$. The absence of theoretical bounds predicting such scaling behavior in non-smooth distributed settings presents an interesting direction for future research.

\algname{MARINA-P}'s primary advantage over \algname{EF21-P} lies in its ability to employ worker-specific compression operators $\cQ_i$, enabling more accurate approximations of the global model, particularly when using correlated compressors. The following section examines various constructions of $\cQ_i$ that leverage this flexibility to enhance practical performance.

\subsection{Three Ways to Compress: A Recap}\label{subsec:three_compressors}

In our experiments, we will examine three distinct approaches to constructing the compressors $\{\cQ_i\}$ in \algname{MARINA-P}, as outlined in \citep{gruntkowska2024improving}:

\textbf{1. Same Compressor.} The conventional method where the server broadcasts an identical compressed message to all workers. Using a single Rand$K$ compressor $\cQ$, we have $\cQ_1^t(x^{t+1}-x^t) = \dots =\cQ_n^t(x^{t+1}-x^t)=\cQ^t(x^{t+1}-x^t)$ for all workers $i \in [n]$. This approach, while simple, limits the amount of information conveyed.

\textbf{2. Independent Compressors.} This strategy employs a set of independent Rand$K$ compressors $\cQ_i, i \in [n]$, generating distinct, independent sparse vectors $\cQ_1(x), \ldots, \cQ_n(x)$ for input $x \in \R^d$. This method allows for more diverse information transmission but lacks coordination between compressors.

\textbf{3. Correlated Compressors.} Introduced by \citet{szlendakpermutation}, this approach uses coordinated compressors, with Perm$K$ being a key example. For $d \geq n$ and $d = qn$, $q \in \mathbb{N}_{>0}$, Perm$K$ is defined as:
\begin{definition}[Perm$K$]
Let $\pi = (\pi_1,\dots,\pi_d)$ be a random permutation of $\{1, \dots, d\}$. For $x \in \R^d$ and $i \in [n]$:
\begin{eqnarray}
\cQ_i(x) \eqdef n \times \sum \limits_{j = q (i - 1) + 1}^{q i} x_{\pi_j} e_{\pi_j}.
\end{eqnarray}
\end{definition}
Perm$K$ ensures $\frac{1}{n} \sum_{i=1}^n \cQ_i(x) = x$ deterministically, offering superior theoretical and practical performance in \algname{MARINA-P} compared to the other two approaches. This method exploits correlation between compressors to achieve better approximation of the global model and improved communication efficiency.
\begin{figure*}[t]
\centering
\includegraphics[width=0.9\textwidth]{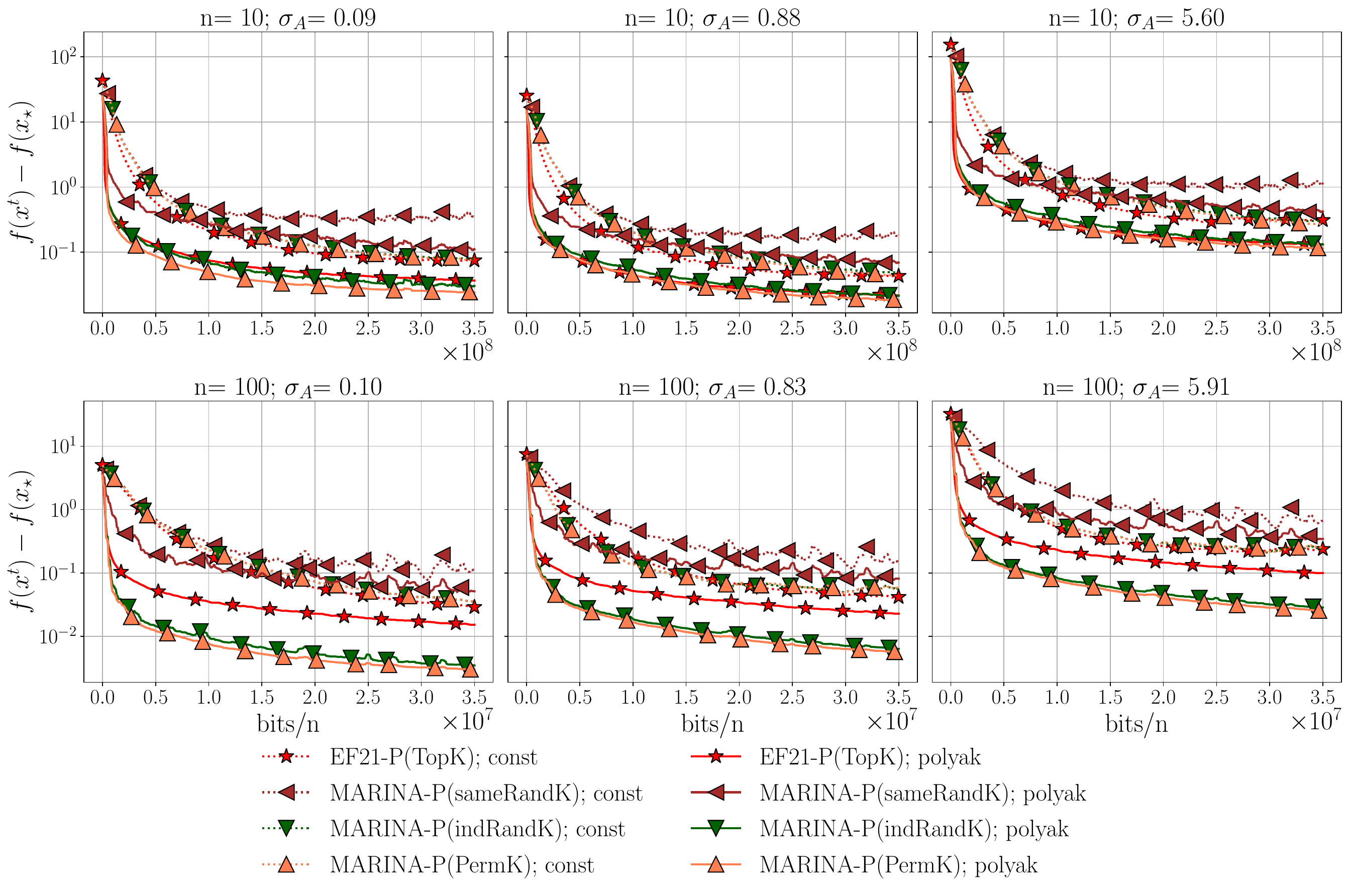}
\caption{Performance comparison of \algname{EF21-P} with Top$K$ and \algname{MARINA-P} with sameRand$K$, indRand$K$, and Perm$K$ compressors ($K = \nicefrac{d}{n}$). The left column of the legend corresponds to experiments with constant stepsizes, while the right column shows results with Polyak stepsizes. All stepsizes were set to the largest theoretically acceptable value multiplied by an individually tuned constant factor, selected from the set $\{2^{-9}, 2^{-8}, \dots, 2^{7}\}$.}
\label{fig:main_marina_ef21}
\end{figure*}
\section{Experiments}\label{sec:exps}
To verify our theoretical results, we conducted experiments comparing \algname{MARINA-P} with different compressor configurations from Section \ref{subsec:three_compressors} (sameRand$K$, indRand$K$, and Perm$K$)  against \algname{EF21-P} with Top$K$ compression. We consider a synthetic non-smooth convex finite sum function $f(x) = \suminn \onenorm{\bA_i x}$, where $\bA_i \in \mathbb{R}^{d \times d}$. We set the dimension $d = 1000$  and tested scenarios with different numbers of nodes $n \in \{10, 100\}$ and different data heterogeneity regimes, controlled by the data dissimilarity measure
\begin{eqnarray}
\sigma_A \eqdef \sqrt{\suminn \sqnorm{\bA_i} - \rb{\sumjnn \twonorm{\bA_j}}^2 }.
\end{eqnarray}
For all configurations, we set $K = \nicefrac{d}{n}$ to ensure a fair comparison of communication costs.
Figure \ref{fig:main_marina_ef21} presents our results, showing that \algname{MARINA-P} with correlated compressors (indRand$K$ and Perm$K$) consistently outperforms other configurations across different node counts and data heterogeneity levels. This performance advantage is particularly pronounced for a large number of clients (e.g., $n = 100$).
For a more detailed description of the experimental setup, we refer readers to Appendix \ref{sec:experiments_extra}.

\section{Conclusion and Future Directions}
In this paper, we have presented a comprehensive analysis of distributed non-smooth optimization with server-to-worker compression. We extended \algname{EF21-P} to the distributed setting and introduced a non-smooth version of \algname{MARINA-P}, providing theoretical guarantees for both algorithms under constant, decreasing, and Polyak stepsizes. To the best of our knowledge, this work presents the first theoretical results for distributed non-smooth optimization that incorporate server-to-worker compression and adaptive stepsizes. Our empirical studies demonstrate the superior performance of \algname{MARINA-P} with correlated compressors in non-smooth settings. 

While our work advances non-smooth federated optimization with server-to-worker compression, several important avenues remain unexplored. Future research could address worker-to-server compression or bidirectional compression schemes. Additionally, incorporating local update steps \citep{demidovichmethods} into our framework could further reduce communication overhead. 

\section*{Acknowledgements}
The research reported in this publication was supported by funding from King Abdullah University of Science and Technology (KAUST): i) KAUST Baseline Research Scheme, ii) Center of Excellence for Generative AI, under award number 5940, iii) SDAIA-KAUST Center of Excellence in Artificial Intelligence and Data Science
\newpage
\bibliographystyle{plainnat}
\bibliography{arxiv_marina-n}

\newpage


\appendix

\part*{APPENDIX}

\section{Experiments: Missing Details and Extra Results}\label{sec:experiments_extra}

In this section, we provide missing details on the experimental setting from Section \ref{sec:exps}. The source code is available in the following GitHub repository: \url{https://anonymous.4open.science/r/MARINA-P_project_source_code-670F/}.

To verify our theoretical results, we conducted experiments comparing \algname{MARINA-P} with different compressor configurations from Section \ref{subsec:three_compressors} (sameRand$K$, indRand$K$, and Perm$K$)  against \algname{EF21-P} with Top$K$ compression. 

\paragraph{Hardware and Software.} All algorithms were implemented in Python 3.10. We utilized three different CPU cluster node types:
\begin{enumerate}
\item AMD EPYC 7702 64-Core;
\item Intel(R) Xeon(R) Gold 6148 CPU @ 2.40GHz;
\item Intel(R) Xeon(R) Gold 6248 CPU @ 2.50GHz.
\end{enumerate}

\begin{algorithm}[H]
\caption{Synthetic datasets generation routine}
\label{alg:matrix_generation}
\begin{algorithmic}[1]
    \State \textbf{Parameters:} number nodes $n$, dimension $d$, parameter $\mu=10^{-6}$, and noise scale $s$.
    \For{$i = 1, \dots, n$}
    \State Generate random noises $\nu_i^s = 1 + s \xi_i^s $,  i.i.d. $\xi_i^s\sim \mathcal {N}(0, 1)$
    \State Take the initial tridiagonal matrix
    \[\bA_i = \frac{\nu_i^s}{4}\left( \begin{array}{cccc}
        2 & -1 & & 0\\
        -1 & \ddots & \ddots & \\
        & \ddots & \ddots & -1 \\
        0 & & -1 & 2 \end{array} \right) \in \R^{d \times d}\]
    \EndFor
    \State Take the mean of matrices $\bA = \frac{1}{n}\sum_{i=1}^n \bA_i$
    \State Find the minimum eigenvalue $\lambda_{\min}(\bA)$
    \For{$i = 1, \dots, n$}
    \State Update matrix $\bA_i = \bA_i + (\mu - \lambda_{\min}(\bA)) \bI$
    \EndFor
    \State Sample starting point $x^0 \sim \mathcal {N}(0, \bI)$
    \State \textbf{Output:} matrices $\bA_1, \cdots, \bA_n$, starting point $x^0$
\end{algorithmic}
\end{algorithm}

\paragraph{Objective and Datasets.}
The primary goal of these numerical experiments is to illustrate our theoretical findings and motivate further practical comparisons of \algname{MARINA-P} against other baselines.

We consider a finite sum function $f(x) = \suminn f_i(x)$, consisting of synthetic non-smooth convex  
functions 
$$f_i(x) \eqdef \onenorm{\bA_i x},$$
where $\bA_i \in \mathbb{R}^{d \times d}$ and $\bA_i = \bA_i^{\top}$ is the training data that belongs to the device/worker $i$.
This objective was chosen for its simplicity to synthetically emulate the behavior of distributed training and to collect all required theoretical metrics, such as function suboptimality $f(w^t) - f(x^*)$. 
For this function, it is known that $x^* = (0,0,\dots,0)^{\top}$, $f(x^*) = 0$. Each subgradient $\pfi{x}$ can be explicitly written (and computed) as $\pfi{x} = \bA_i^{\top} \sign\rb{\bA_i x}$ (see Example 3.44 of the book \citep{beck2017first} for proof details), where $\sign$ is the componentwise sign operator, i.e. 
\begin{eqnarray}
\sign({x})_i= \begin{cases}1, & x_i \geq 0 \\ -1, & x_i<0\end{cases}.
\end{eqnarray}
Note, that $\pf{x}$ can be computed as  $\pf{x} = \suminn \pfi{x} $.
In all experiments of this section, we have $d = 1000$.

We generated synthetic matrices $\cb{\bA_i}_{i=1}^{n}$ (training data) via Algorithm \ref{alg:matrix_generation}
This data generation routine was inspired by a similar one used for solving synthetic quadratic problems (see Algorithm 11 in \citep{richtarik20223pc}).
However, we introduced several minor modifications to the original algorithm for the needs of this project.
We generated optimization problems having different numbers of nodes $n \in \{10, 100\}$ and different data heterogeneity regimes, controlled by the empirically proposed data dissimilarity measure 
\begin{eqnarray}
\sigma_A \eqdef \sqrt{\suminn \sqnorm{\bA_i} - \rb{\sumjnn \twonorm{\bA_j}}^2 }.
\end{eqnarray}
From the definition, it follows that the case of similar (or even identical) functions $f_i$ relates to the small (or even 0) value of $\sigma_A$, whereas in the case of completely different $f_i$ (which relate to heterogeneous data regime) $\sigma_A$ can be large.
In our experiments, homogeneity of each optimization task is controlled by noise scale $s$ introduced in Algorithm \ref{alg:matrix_generation}. 
Indeed, for the noise scale $s=0$, all matrices $\bA_i$ are equal, whereas with the increase of the noise scale, functions become less "similar" and $\sigma_A$ rises. We take noise scales $s \in\{0.1, 1.0, 10.0\}$. 
Table \ref{tbl:sigma_A_summary} summarizes the $\sigma_A$ values corresponding to these noise scales for $n \in \cb{10, 100}$.

\begin{table}[h]
    \caption{Summary of the data heterogeneity $\sigma_{A}$ values for different number of nodes $n$ and various noise scales $s$.} 
    \label{tbl:sigma_A_summary}
    \centering
    \begin{tabular}{l l l l}
        \toprule
        \diagbox[width=1.2cm ,  height=0.5cm]{ $n$ }{\raisebox{0.5ex}{ $s$}}  & $0.1$ & $1.0$ & $10.0$ \\
        \midrule            
        $10$            & $0.09$  & $0.88$ & $5.60$      \\  
        $100$         & $0.10$ & $0.83$ & $5.91$ \\
        \bottomrule
    \end{tabular}
\end{table}

\paragraph{Baselines and Hyperparameters.}

For each dataset (determined by values $n$ and $\sigma_A$), we run the following baselines:
\begin{enumerate}
\item \algname{EF21-P} with Top$K$ compressor;
\item \algname{MARINA-P} with sameRand$K$ compressor;
\item \algname{MARINA-P} with indRand$K$ compressors;
\item \algname{MARINA-P} with Perm$K$ compressors.
\end{enumerate}
where sameRand$K$, indRand$K$, and Perm$K$ are defined as described in subsection \ref{subsec:three_compressors}.

In all experiments, we set $K = \nicefrac{d}{n}$ and for \algname{MARINA-P} we additionally choose $p=\nicefrac{K}{d}$ to ensure a fair comparison of communication costs. Indeed, whereas for \algname{EF21-P} with Top$K$, parameter $K$ (and therefore $\zeta_{\cQ}=K$) is deterministic and fixed throughout the optimization process, in the case of \algname{MARINA-P}, $K$ is random, but $\zeta_{\cQ}= dp + (1-p)K = d\rb{\nfr{K}{d}} + \rb{1 - \nfr{K}{d}}K \le 2K = \cO(K)$, meaning that the choice of $p = \nfr{K}{d}$ on expectation guarantees similar communication costs for \algname{EF21-P} and \algname{MARINA-P}.

For all algorithms, at each iteration (communication round) we updated the following metrics being tracked throughout the whole optimization procedure:
\begin{enumerate}
\item Function suboptimality $f(x^t) - f(x^*)$;
\item Number of bits per worker send from server to clients (titled as ``bits/n'' on corresponding Figure \ref{fig:main_marina_ef21_appndx}).
\end{enumerate}
We employed 64-bit precision in our experiments. Our communication model assumes that the server transfers $(65 + \log_2(d))q$ bits to each worker, where $q$ represents the number of non-zero entries retained after sparsification. This total is broken down as follows:
\begin{itemize}
    \item 64 bits allocated for each non-zero value;
    \item 1 bit for the sign of each entry;
    \item $\log_2(d)$ bits to encode the position of each non-zero entry.
\end{itemize}
The same communication model was also used in \citep{horvath2022natural}.

For each value of $n \in \{10, 100\}$, we allocated an individual communication budget: $3.5 \cdot 10^8$ bits for $n=10$ and $3.5 \cdot 10^7$ bits for $n=100$. Each algorithm was terminated upon reaching its respective budget.

In experiments utilizing constant stepsizes, we set the stepsize to the largest theoretically acceptable value, multiplied by an individually tuned factor. This factor was selected from the set $\{2^{-9}, 2^{-8}, \dots, 2^{7}\}$. For experiments employing adaptive stepsizes, we similarly tuned a constant factor. During the optimization procedure, this factor was multiplied by the theoretically defined adaptive stepsize at each iteration.
Tables \ref{tab:stepsizes} and \ref{fig:stepsize_factors} summarize the theoretical stepsize formulas and optimal tuned stepsize multiplicative factors.

\begin{table*}[t]
\centering
\footnotesize
\caption{Summary of theoretical stepsize formulas for \algname{EF21-P} and \algname{MARINA-P} algorithms.}
\label{tab:stepsizes}    
\begin{threeparttable}
\begin{tabular}{ l c c c c }
    \toprule
    \diagbox[width=3.0cm,  height=1cm]{Method}{\raisebox{0.8ex}{   Stepsize type}} & Constant & Decreasing & Polyak & Reference \\ 
    \midrule
    \algname{EF21-P} & $\fr{1}{\sqrt{T}} \sqrt{\fr{V^0}{B_* L_0^2}}$ & $\fr{\gamma_0}{\sqrt{t+1}}$ & $\fr{f(w^t) - f(x^*)}{B_* \sqnorm{\pf{w^t}}}$ & \eqref{eq:EF21-P-D:constant_stepsize}, \eqref{eq:gamma_t_polyak} \\
    \algname{MARINA-P} & $\fr{1}{\sqrt{T}} \sqrt{\fr{V^0}{\wt{B}_*}}$ & $\fr{\gamma_0}{\sqrt{t+1}}$ & $\fr{\suminn f_i(w_i^t) - f(x^*)}{\sqnorm{\suminn \pfi{w_i^t}} \rb{1 + 2\fr{\sqrt{\suminn \twonorm{\pfi{w_i^t}}^2}}{\twonorm{\suminn \pfi{w_i^t}}}  \sqrt{\fr{(1-p)\omega}{p}}} }$ & \eqref{eq:MARINA-P:constant_stepsize}, \eqref{eq:marina_polyak_type} \\
    \bottomrule
\end{tabular}
\begin{tablenotes}
    \item[1] For the decreasing stepsize, optimal $\gamma_0 = \sqrt{\fr{V_0}{2B_* L_0^2 \log(T+1)}}$ for \algname{EF21-P} \quad and \quad $\gamma_0 = \sqrt{\fr{V_0}{2\wt{B}_* \log(T+1)}}$ for \algname{{MARINA-P}};
    \item[2] $B_*$, $\wt{B}_*$, and other constants are defined in the respective theorems.
\end{tablenotes}
\end{threeparttable}
\end{table*}

\begin{figure}[H]
\centering
\small
\footnotesize
\begin{minipage}{0.48\textwidth}
\centering
\caption{Constant stepsize; $n = 10$.}
\label{tab:const_stepsize_n10}    
\begin{tabular}{ l l l l }
    \toprule
    \diagbox[width=3.3cm, height=0.8cm]{Method}{\raisebox{0.1ex}{$s$}} & 0.1 & 1.0 & 10.0 \\
    \midrule
    \algname{EF21-P} with Top$K$ & 0.5 & 1.0 & 1.0 \\
    \algname{MARINA-P} sameRand$K$ & 0.03125 & 0.03125 & 0.03125 \\
    \algname{MARINA-P} indRand$K$ & 0.03125 & 0.03125 & 0.03125 \\
    \algname{MARINA-P} Perm$K$ & 0.03125 & 0.03125 & 0.03125 \\
    \bottomrule
\end{tabular}
\end{minipage}
\hfill
\begin{minipage}{0.48\textwidth}
\centering
\caption{Constant stepsize; $n = 100$.}
\label{tab:const_stepsize_n100}    
\begin{tabular}{ l l l l }
    \toprule
    \diagbox[width=3.3cm, height=0.8cm]{Method}{\raisebox{0.1ex}{$s$}} & 0.1 & 1.0 & 10.0 \\ 
    \midrule
    \algname{EF21-P} with Top$K$ & 4.0 & 4.0 & 8.0 \\
    \algname{MARINA-P} sameRand$K$ & 0.03125 & 0.03125 & 0.03125 \\
    \algname{MARINA-P} indRand$K$ & 0.03125 & 0.0625 & 0.0625 \\
    \algname{MARINA-P} Perm$K$ & 0.03125 & 0.0625 & 0.0625 \\
    \bottomrule
\end{tabular}
\end{minipage}

\vspace{1cm}

\begin{minipage}{0.48\textwidth}
\centering
\caption{Polyak stepsize; $n = 10$.}
\label{tab:polyak_stepsize_n10}    
\begin{tabular}{ l l l l }
    \toprule
    \diagbox[width=3.3cm, height=1cm]{Method}{\raisebox{0.5ex}{$s$}} & 0.1 & 1.0 & 10.0 \\ 
    \midrule
    \algname{EF21-P} with Top$K$ & 16.0 & 16.0 & 16.0 \\
    \algname{MARINA-P} sameRand$K$ & 2.0 & 2.0 & 2.0 \\
    \algname{MARINA-P} indRand$K$ & 2.0 & 2.0 & 2.0 \\
    \algname{MARINA-P} Perm$K$ & 2.0 & 2.0 & 2.0 \\
    \bottomrule
\end{tabular}
\end{minipage}
\hfill
\begin{minipage}{0.48\textwidth}
\centering
\caption{Polyak stepsize; $n = 100$.}
\label{tab:polyak_stepsize_n100}    
\begin{tabular}{ l l l l }
    \toprule
    \diagbox[width=3.3cm, height=0.8cm]{Method}{\raisebox{0.1ex}{$s$}} & 0.1 & 1.0 & 10.0 \\ 
    \midrule
    \algname{EF21-P} with Top$K$ & 16.0 & 16.0 & 16.0 \\
    \algname{MARINA-P} sameRand$K$ & 2.0 & 2.0 & 2.0 \\
    \algname{MARINA-P} indRand$K$ & 2.0 & 2.0 & 2.0 \\
    \algname{MARINA-P} Perm$K$ & 2.0 & 2.0 & 2.0 \\
    \bottomrule
\end{tabular}
\end{minipage}
\caption{Optimal stepsize multiplicative factors for different methods, number of nodes, and heterogeneity levels.}
\label{fig:stepsize_factors}
\end{figure}
For \algname{MARINA-P}, we initialize $w^0_i = x^0$ for all $i \in [n]$, and $w^0 = x^0$ for \algname{EF21-P}. This choice results in $V^0 = R^2 = \sqnorm{x^0 - x^*} = \sqnorm{x^0}$, allowing explicit computation of $V^0$ constants in all cases.

In our experiments, we estimated the Lipschitz smoothness constants $L_{0,i}$ as $L_{0,i} \sim \twonorm{\bA_i}$. Although this approximation is not theoretically precise, we adopted it primarily for computational simplicity. Moreover, our constant multiplier tuning process compensates for any inaccuracies in the estimated $L_{0,i}$. It's worth noting that the $L_{0,i} \sim \twonorm{\bA_i}$ estimation is reasonably close to the worst-case bound, as demonstrated by:
$$\twonorm{\pfi{x}} = \twonorm{{\bA_i^{\top}\sign\rb{\bA_i x} }}\le \twonorm{\bA_i^{\top}}\twonorm{\sign\rb{\bA_i x}} \le \twonorm{\bA_i}\sqrt{d}.$$ 
We also defined $L_{0}$ as $L_{0} = \suminn L_{i,0}$.

\begin{figure*}[t]
\centering
\includegraphics[width=0.9\textwidth]{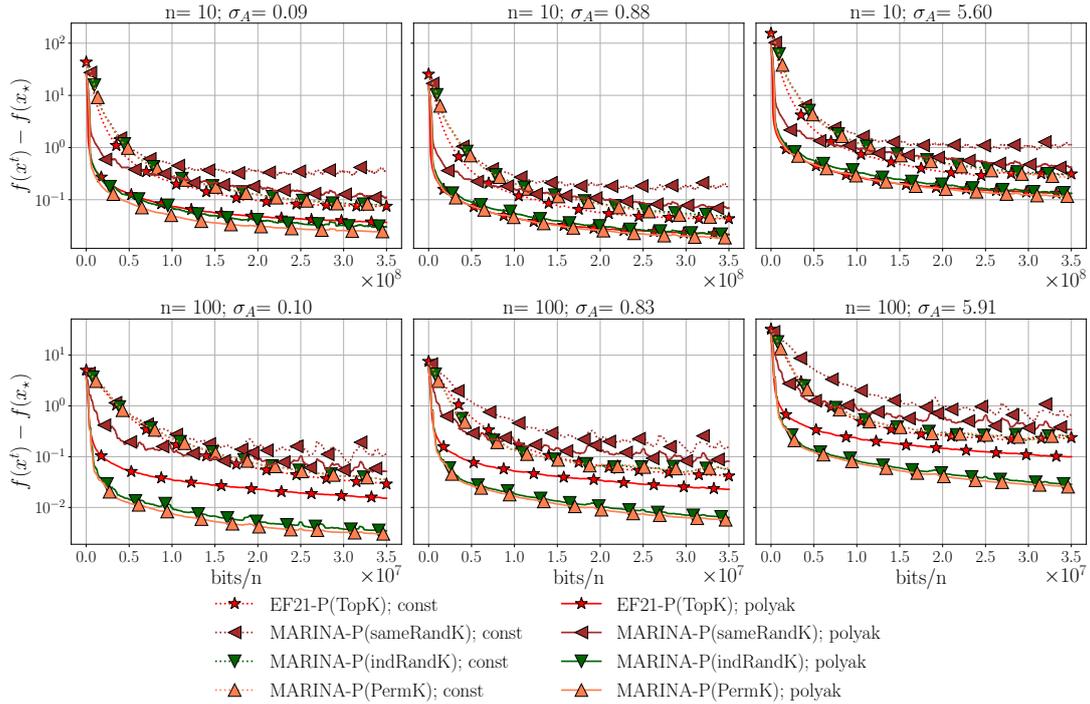}
\caption{Performance comparison of \algname{EF21-P} with Top$K$ and \algname{MARINA-P} with sameRand$K$, indRand$K$, and Perm$K$ compressors ($K = \nicefrac{d}{n}$). The left column of the legend corresponds to experiments with constant stepsizes, while the right column shows results with Polyak stepsizes. All stepsizes were set to the largest theoretically acceptable value multiplied by an individually tuned constant factor, selected from the set $\{2^{-9}, 2^{-8}, \dots, 2^{7}\}$.}
\label{fig:main_marina_ef21_appndx}
\end{figure*}

\paragraph{Comparison of Convergence Behavior.}
We now present a more detailed version of the convergence comparison initially introduced in Section \ref{sec:exps} of the main draft. Our experiments compare the performance of \algname{EF21-P} with Top$K$ and \algname{MARINA-P} with sameRand$K$, indRand$K$, and Perm$K$ compressors across the different datasets described in the previous section.
Figure \ref{fig:main_marina_ef21_appndx} illustrates the following key observations:
\begin{enumerate}
\item \textbf{Superiority of correlated compressors in the non-smooth convex setting.}
For both constant and Polyak stepsizes, \algname{MARINA-P} with Perm$K$ compressors slightly outperforms \algname{MARINA-P} with indRand$K$ compressors, showing significant improvement over the conventional approach using the sameRand$K$ scheme. This observation suggests that correlated compressors indeed ensure better approximation of the compressed difference $\frac{1}{n} \sum_{i=1}^n \cQ_i(\xtpo - \xt) \approx \xtpo - \xt$ (with equality in the case of Perm$K$), leading to superior convergence performance in practice. This behavior aligns well with experiments in the smooth non-convex setting from \citep{gruntkowska2024improving}.

\item \textbf{Superior convergence behavior with adaptive stepsizes.}
Each pair of experiments differing only in stepsize strategy (e.g., \algname{EF21-P} with Top$K$, represented in Figure \ref{fig:main_marina_ef21_appndx} with the same color and marker but different linestyles) demonstrates the practical efficiency of adaptive stepsize schemes. This marks the first time in the literature that such behavior has been experimentally confirmed in the communication-efficient distributed non-smooth convex setting.

\item \textbf{\algname{MARINA-P} with correlated compressors and Polyak stepsize outperforms for all datasets.} 
Figure \ref{fig:main_marina_ef21_appndx} reveals that while all algorithms under constant stepsizes exhibit similar convergence behavior (slightly outperformed by \algname{EF21-P} with Polyak stepsize), \algname{MARINA-P} with correlated compressors and Polyak stepsize demonstrates superior performance. This advantage is particularly pronounced when $n=100$.
\end{enumerate}

These experimental results validate our theoretical findings and highlight the practical advantages of \algname{MARINA-P} with correlated compressors and adaptive stepsizes in the non-smooth convex distributed optimization setting.

\section{Basic Facts and Inequalities}
\textbf{Useful inequalities:}
For all $x, y, x_1, \ldots, x_n \in \mathbb{R}^d$, $s > 0$ and $\alpha \in (0, 1]$, we have:
\begin{eqnarray}
    \twonorm{\suminn x_i} &\le& \suminn \twonorm{x_i},\label{eq:jensen_1} \\
    \sqnorm{\suminn x_i} &\le& \suminn \sqnorm{x_i},\label{eq:jensen} \\
    \twonorm{x + y}^2 &\leq& (1 + s)\twonorm{x}^2 + (1 + s^{-1})\twonorm{y}^2, \label{eq:gen_young} \\
    \twonorm{x + y}^2 &\leq& 2\twonorm{x}^2 + 2\twonorm{y}^2, \label{eq:young}\\
    \langle x, y \rangle &\leq& \frac{\twonorm{x}^2}{2s} + \frac{s\twonorm{y}^2}{2}, \label{eq:langles_gen_young}\\
    (1 - \alpha)\left(1 + \frac{\alpha}{2}\right) &\leq& 1 - \frac{\alpha}{2},\label{eq:gen_young_cons1} \\
    (1 - \alpha)\left(1 + \frac{4}{\alpha}\right) &\leq& \frac{4}{\alpha},\label{eq:gen_young_cons2} \\
    \langle a, b \rangle &=& \frac{1}{2}\left(\twonorm{a}^2 + \twonorm{b}^2 - \twonorm{a - b}^2\right).
\end{eqnarray}

\textbf{Tower property:} For any random variables $X$ and $Y$, we have
\begin{eqnarray}\label{eq:tower_property}
    \Exp{\Exp{X \mid Y}} = \Exp{X}. 
\end{eqnarray}

\textbf{Cauchy-Bunyakovsky-Schwarz inequality:} For any random variables $X$ and $Y$, we have
\begin{eqnarray}\label{eq:cauchy_bunyakovsky_schwarz}
    \abs{\Exp{XY}} \le \sqrt{\Exp{X^2}\Exp{Y^2}}. 
\end{eqnarray}

\textbf{Variance decomposition:} For any random vector $X \in \mathbb{R}^d$ and any non-random $c \in \mathbb{R}^d$, we have
\begin{eqnarray}\label{eq:bvd}
    \Exp{\twonorm{X - c}^2} = \Exp{\twonorm{X - \Exp{X}}^2} + \twonorm{\Exp{X} - c}^2.
\end{eqnarray}

\textbf{Jensen’s inequality:} For any random vector $X \in \mathbb{R}^d$ and any convex function $g:\R^d\mapsto\R$, we have
\begin{eqnarray}\label{eq:jensen_general}
    g(\Exp{X}) \le \Exp{g(X)}.
\end{eqnarray}

\begin{lemma}[Lemma 3 of \citet{richtarik2021ef21}]\label{le:optimal_t-Peter} Let $0<p< 1$ and for $s>0$ let $\theta(s)$ and $\beta(s)$ be defined as 
    \begin{eqnarray}
        \theta(s) &\eqdef& 1 - (1 - p)(1 + s), \qquad
        \beta(s)  \eqdef (1 - p)(1 + s^{-1}).\notag
    \end{eqnarray} Then the solution of the optimization problem
    \begin{eqnarray}\label{eq:98g_(89fd8gf9d} \min_{s} \left\{ \fr{\beta(s)}{\theta(s)} \;:\; 0<s<\fr{p}{1-p}\right\}
    \end{eqnarray}
    is given by $s^* = \fr{1}{\sqrt{1-p}}-1$. Furthermore, $\theta(s^*) = 1-\sqrt{1-p}$, $\beta(s^*) = \fr{1-p}{1-\sqrt{1-p}}$ and
    \begin{eqnarray}\label{eq:n98fhgd98hfd}\sqrt{\fr{\beta(s^*)}{\theta(s^*)}} = \fr{1}{\sqrt{1-p}} -1 = \fr{1}{p} + \fr{\sqrt{1-p}}{p} - 1 \leq \fr{2}{p}-1.
    \end{eqnarray}
    In the trivial case $p = 1$, we have $\fr{\beta(s)}{\theta(s)} = 0$ for any $s > 0$, and \eqref{eq:n98fhgd98hfd} is satisfied.
\end{lemma}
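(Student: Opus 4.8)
The plan is to treat the stated minimization as a one-dimensional smooth optimization problem on the open interval where the objective is well defined, solve the first-order condition in closed form, and then substitute back. First I would note that the constraint $0<s<\frac{p}{1-p}$ is exactly the set on which $\theta(s)>0$: since $\theta(s)=1-(1-p)(1+s)$, positivity is equivalent to $1+s<\frac{1}{1-p}$, that is $s<\frac{p}{1-p}$. On this interval both $\theta(s)>0$ and $\beta(s)>0$, so $g(s)\eqdef\frac{\beta(s)}{\theta(s)}$ is smooth and positive, and it is equivalent, and cleaner, to minimize $\log g$. Writing $q\eqdef 1-p$, I would use the forms $\theta(s)=p-qs$ and $\beta(s)=q\frac{s+1}{s}$.

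Next I would impose stationarity, $(\log g)'(s)=\frac{\beta'(s)}{\beta(s)}-\frac{\theta'(s)}{\theta(s)}=0$. From $\beta'(s)=-q/s^2$ and $\theta'(s)=-q$ one gets $\frac{\beta'}{\beta}=-\frac{1}{s(s+1)}$ and $\frac{\theta'}{\theta}=-\frac{q}{p-qs}$, so the condition collapses to $q\,s(s+1)=p-qs$, i.e. $q(s+1)^2=p+q=1$. Hence $(s+1)^2=\frac{1}{1-p}$ and, taking the positive root, $s^*=\frac{1}{\sqrt{1-p}}-1$, which is the claimed minimizer.

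To upgrade this critical point to the global minimizer I would show that $\log g$ is strictly convex on the interval. Differentiating $(\log g)'$ term by term gives $\frac{d}{ds}\bigl(-\frac{1}{s(s+1)}\bigr)=\frac{2s+1}{(s^2+s)^2}>0$ and $\frac{d}{ds}\bigl(\frac{q}{p-qs}\bigr)=\frac{q^2}{(p-qs)^2}>0$, so $(\log g)''>0$ and the stationary point is the unique minimizer. Admissibility of $s^*$ is immediate: $s^*>0$ because $\frac{1}{\sqrt{1-p}}>1$, and $s^*<\frac{p}{1-p}$ because $\frac{1}{\sqrt{1-p}}<\frac{1}{1-p}$ (equivalently $\sqrt{1-p}<1$). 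Substituting back, $\theta(s^*)=1-\sqrt{1-p}$, and using $1+\frac{1}{s^*}=\frac{1}{1-\sqrt{1-p}}$ gives $\beta(s^*)=\frac{1-p}{1-\sqrt{1-p}}$, whence $\sqrt{\frac{\beta(s^*)}{\theta(s^*)}}=\frac{\sqrt{1-p}}{1-\sqrt{1-p}}$. The remaining identities then follow from the factorization $p=(1-\sqrt{1-p})(1+\sqrt{1-p})$, which rewrites this quantity as $\frac{1}{p}+\frac{\sqrt{1-p}}{p}-1$, and the bound $\frac{2}{p}-1$ follows since $\sqrt{1-p}\le 1$. The degenerate case $p=1$ is handled directly, as then $\beta\equiv 0$ and $\theta\equiv 1$.

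The calculations are routine, so the only genuinely delicate step is the passage from a stationary point to a global minimum: the first-order condition alone does not exclude a maximum or a saddle. I expect the strict convexity of $\log g$ above to be the cleanest justification; alternatively, one can observe that $g(s)\to+\infty$ both as $s\to 0^+$, where $\beta(s)\to+\infty$, and as $s\to\bigl(\frac{p}{1-p}\bigr)^-$, where $\theta(s)\to 0^+$, which already forces the unique interior critical point to be the global minimizer.
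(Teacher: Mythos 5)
This lemma is imported into the paper as ``Lemma~3 of \citet{richtarik2021ef21}'' and is stated without proof, so there is no in-paper argument to compare against; your proposal supplies a complete, self-contained verification, and it is correct. The reduction of the first-order condition for $\log(\beta/\theta)$ to $q(s+1)^2=p+q=1$ (with $q=1-p$) is the right computation and immediately yields $s^*=\nicefrac{1}{\sqrt{1-p}}-1$; the strict convexity of $\log g$ via $(\log g)''(s)=\frac{2s+1}{(s^2+s)^2}+\frac{q^2}{(p-qs)^2}>0$, or alternatively the blow-up of $g$ at both endpoints of $\bigl(0,\frac{p}{1-p}\bigr)$, correctly upgrades the critical point to the unique global minimizer — you are right that this is the one step that genuinely needs justification. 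The substituted values $\theta(s^*)=1-\sqrt{1-p}$ and $\beta(s^*)=\frac{1-p}{1-\sqrt{1-p}}$ check out.

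One point you should make explicit rather than fold into ``the remaining identities then follow'': your (correct) computation gives $\sqrt{\beta(s^*)/\theta(s^*)}=\frac{\sqrt{1-p}}{1-\sqrt{1-p}}$, which via $p=(1-\sqrt{1-p})(1+\sqrt{1-p})$ equals $\frac{1}{p}+\frac{\sqrt{1-p}}{p}-1\leq\frac{2}{p}-1$, matching the last two members of the chain in \eqref{eq:n98fhgd98hfd}. But the first member of that chain, $\frac{1}{\sqrt{1-p}}-1$, is \emph{not} equal to this quantity (for $p=\nicefrac12$ it evaluates to $\sqrt{2}-1\approx 0.414$ versus $\approx 2.414$); it is in fact $\sqrt{\theta(s^*)/\beta(s^*)}$, the reciprocal, and coincides with $s^*$ itself. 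This is a typo inherited in the statement as printed. Your proof establishes the correct value and the final bound, which is what is actually used downstream (cf.\ the choice $\lambda_*=\sqrt{\beta/\theta}=\frac{\sqrt{1-\alpha}}{1-\sqrt{1-\alpha}}$ in the \algname{EF21-P} analysis), but as written your sentence tacitly endorses an identity that is false; it would be cleaner to state that the middle expression should read $\frac{\sqrt{1-p}}{1-\sqrt{1-p}}$.
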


\section{Missing Proofs For \algname{EF21-P}}

In this section, we present the detailed proofs for the theoretical results of \algname{EF21-P} (Algorithm \ref{alg:EF21-P}). Before delving into the proofs, we first discuss how our contribution extends the original results \citep{NonsmoothEF21-P} on \algname{EF21-P} for the non-smooth convex setting.

Recall that the standard single-node \algname{EF21-P} algorithm \citep{EF21-P} in the smooth case takes the form:
\begin{eqnarray}\label{eq:appendix:ef21-smooth}
x^{t+1} &=& x^t-\gamma_t \nabla f(w^t)\\ 
w^{t+1} &=& w^t+\mathcal{C}^t\left(x^{t+1}-w^t\right). \nonumber
\end{eqnarray}

The key modification introduced by \citet{NonsmoothEF21-P} was to replace the "smooth" update step \eqref{eq:appendix:ef21-smooth} with a "non-smooth" one:
\begin{eqnarray}\label{eq:appendix:ef21-nonsmooth}
x^{t+1} = x^t- \gamma_t \partial f(w^t), 
\end{eqnarray}
resulting in Algorithm \ref{alg:EF21-P-S}.

\begin{figure}[H]
\begin{minipage}[t]{0.42\textwidth}
\begin{algorithm}[H]
    \caption{\algname{EF21-P} (single-node version)}\label{alg:EF21-P-S}
    \begin{algorithmic}[1]
        \State \textbf{Input:} initial points $w^0, x^{0}\in \mathbb{R}^d$, stepsize $\gamma_0 > 0$
        \For{$t = 0, 1, 2, \ldots, T$}
            \State Compute subgradient $g^t = \partial f(w^t)$
			\State Choose stepsize $\gamma_t$ (can be set according to \eqref{eq:EF21-P-D:constant_stepsize}, \eqref{eq:gamma_t_polyak}, or \eqref{eq:EF21-P:decr_stepsize})
            \State $x^{t+1} = x^t - \gamma_t  g^t$
            \State Compute $\Delta^{t+1} = \mathcal{C}\left(x^{t+1} - w^t\right)$
            \State $w^{t+1} = w^{t} + \Delta^{t+1}$
        \EndFor
        \State \textbf{Output:} $x^T$
    \end{algorithmic}
\end{algorithm}
\end{minipage}
\hfill
\begin{minipage}[t]{0.54\textwidth}
\begin{algorithm}[H]
    \caption{\algname{EF21-P} (distributed version)}\label{alg:EF21-P-D}
    \begin{algorithmic}[1]
        \State \textbf{Input:} initial points $w^0 = x^0 \in \mathbb{R}^d$, stepsize $\gamma_0 > 0$
        \For{$t = 0, 1, 2, \ldots, T$}
        \For{$i = 1, \ldots, n$ \textbf{on Workers in parallel}}
        \State Receive compressed difference $\Delta^t$ from server
        \State Compute local subgradient $g_i^t = \partial f_i(w^t)$ and send it to server
        \EndFor
        \State \textbf{On Server:}
        \State Receive $g_i^t$ from workers
        \State Choose stepsize $\gamma_t$ (can be set according to \eqref{eq:EF21-P-D:constant_stepsize}, \eqref{eq:gamma_t_polyak}, or \eqref{eq:EF21-P:decr_stepsize})
        \State $x^{t+1} = x^t - \gamma_t \suminn g_i^t$
        \State Compute $\Delta^{t+1} = \mathcal{C}(x^{t+1} - w^t)$ and broadcast it to workers
        \State $w^{t+1} = w^{t} + \Delta^{t+1}$
        \For{$i = 1, \ldots, n$ \textbf{on Workers in parallel}}
        \State $w^{t+1} = w^{t} + \Delta^{t+1}$ 
        \EndFor
        \EndFor
        \State \textbf{Output:} $x^T$
    \end{algorithmic}
\end{algorithm}
\end{minipage}
\end{figure}

As outlined in Section \ref{sec:ef21-p} of the main text, our primary contribution to the exploration of \algname{EF21-P} is algorithmic. In Algorithm \ref{alg:EF21-P-D}, we extend these results to the distributed setting, allowing for parallel computation of subgradients $\partial f(w^t)$. However, in both Algorithm \ref{alg:EF21-P-S} and \ref{alg:EF21-P-D}, the gradient-like step \eqref{eq:appendix:ef21-nonsmooth} and state update step 
\begin{eqnarray}\label{eq:appendix:ef21-nonsmooth_wtpo}
w^{t+1} = w^t + \Delta^{t+1} 
\end{eqnarray}
remain fundamentally the same.

Given that both single-node and distributed regimes result in the same update steps \eqref{eq:appendix:ef21-nonsmooth} and \eqref{eq:appendix:ef21-nonsmooth_wtpo}, the original proof by \citet{NonsmoothEF21-P} for Algorithm \ref{alg:EF21-P-S} remains applicable to our Algorithm \ref{alg:EF21-P-D}. Nevertheless, for completeness, we provide proofs for all necessary lemmas and theorems, following the approach in \citep{NonsmoothEF21-P}.

Our proof technique proceeds as follows: we first establish two key bounds in Lemma \ref{le:EF21-P-D:main_lemma}. We then combine these bounds to obtain a descent lemma (Lemma \ref{le:EF21-P-D:combination}). Finally, we leverage this descent lemma to establish convergence results (Theorem \ref{thm:EF21-P-D} and Corollary \ref{cor:EF21-P-D}) for different stepsize schedules.

\begin{lemma}[Key bounds]\label{le:EF21-P-D:main_lemma}
Let Assumptions \ref{as:existence_of_minimizer} and \ref{as:fi_convexity} hold. Define $W^t \eqdef \cb{w_1^t, \dots, w_n^t}$.
Then, for a single iteration of \algname{EF21-P} (Algorithm \ref{alg:EF21-P}) with $\gamma_t > 0$, we have the following bounds:

\textbf{1.}
\begin{eqnarray}\label{eq:le:EF21-P-D:main_lemma_term_1}
\Exp{\sqnorm{\xtpo - x^*} \mid x^t, W^t} &\le& \sqnorm{\xt - x^*} - 2\gamma_t \left( f(w^t) - f(x^*) \right) + \frac{1}{\lambda} \sqnorm{w^t - \xt} \nonumber \\
&& + (1 + \lambda) \gamma_t^2 \twonorm{\pf{w^t}}^2,
\end{eqnarray}
where $\lambda>0$;

\textbf{2.}
\begin{eqnarray}\label{eq:le:EF21-P-D:main_lemma_term_2}
 \Exp{\sqnorm{w^{t+1} - x^{t+1}} \mid x^t, W^t} \le  (1 - \theta)  \sqnorm{w^{t} - x^t} + \gamma_t^2 \beta  \twonorm{\pf{w^t}}^2,
\end{eqnarray}
where $\theta \eqdef 1 - \sqrt{1-\alpha}$ and $\beta \eqdef \fr{1 - \alpha}{1 - \sqrt{1-\alpha}}$.
\end{lemma}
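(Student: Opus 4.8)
The plan is to prove the two bounds independently; both reduce to expanding a squared norm and invoking the elementary inequalities collected in the appendix. Throughout, write $g^t \eqdef \suminn g_i^t = \suminn \pfi{w^t}$. Since $f = \suminn f_i$ with each $f_i$ convex, $g^t$ is a genuine subgradient of $f$ at $w^t$, i.e. $g^t \in \partial f(w^t)$, and this is exactly what the statement denotes by $\pf{w^t}$. Observe also that, conditioned on $x^t, W^t$, the iterate $\xtpo = \xt - \gamma_t g^t$ is deterministic: the only randomness at iteration $t$ enters through the compressor $\cC$ in the $w$-update. Hence the conditional expectation in \eqref{eq:le:EF21-P-D:main_lemma_term_1} is vacuous, and only \eqref{eq:le:EF21-P-D:main_lemma_term_2} genuinely uses expectation. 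Note that neither bound needs Assumption~\ref{as:fi_lipschitzness}, consistent with the lemma's hypotheses.

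For the first bound, I would expand
$$\sqnorm{\xtpo - x^*} = \sqnorm{\xt - x^*} - 2\gamma_t\lin{g^t, \xt - x^*} + \gamma_t^2\sqnorm{g^t}.$$
The key manoeuvre is to route the inner product through the shifted point $w^t$, writing $\xt - x^* = (w^t - x^*) + (\xt - w^t)$. Convexity of $f$ together with $g^t \in \partial f(w^t)$ gives $\lin{g^t, w^t - x^*} \ge f(w^t) - f(x^*)$, which produces the $-2\gamma_t(f(w^t) - f(x^*))$ term. The residual cross term $-2\gamma_t\lin{g^t, \xt - w^t}$ is controlled by Young's inequality \eqref{eq:langles_gen_young} with parameter $s = 1/\lambda$, splitting it into $\lambda\gamma_t^2\sqnorm{g^t} + \tfrac{1}{\lambda}\sqnorm{w^t - \xt}$; collecting the two $\gamma_t^2\sqnorm{g^t}$ contributions yields the factor $(1+\lambda)$ and finishes \eqref{eq:le:EF21-P-D:main_lemma_term_1}.

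For the second bound, the crucial observation is that $w^{t+1} - x^{t+1}$ is precisely the compression error: substituting $w^{t+1} = w^t + \cC(\xtpo - w^t)$ gives $w^{t+1} - \xtpo = \cC(\xtpo - w^t) - (\xtpo - w^t)$, so the contractive property \eqref{eq:contractive_compressor} yields $\Exp{\sqnorm{w^{t+1} - \xtpo} \mid x^t, W^t} \le (1-\alpha)\sqnorm{\xtpo - w^t}$. I would then write $\xtpo - w^t = (\xt - w^t) - \gamma_t g^t$ and apply \eqref{eq:gen_young} with a free parameter $s$, obtaining $(1-\alpha)(1+s)\sqnorm{w^t - \xt} + (1-\alpha)(1+s^{-1})\gamma_t^2\sqnorm{g^t}$. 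Choosing $s$ to minimize the contraction factor is exactly the optimization solved in Lemma~\ref{le:optimal_t-Peter} (with $p$ there in the role of $\alpha$): the optimal $s^* = 1/\sqrt{1-\alpha} - 1$ gives $(1-\alpha)(1+s^*) = \sqrt{1-\alpha} = 1 - \theta$ and $(1-\alpha)(1+(s^*)^{-1}) = \tfrac{1-\alpha}{1-\sqrt{1-\alpha}} = \beta$, delivering \eqref{eq:le:EF21-P-D:main_lemma_term_2} (the degenerate case $\alpha = 1$ is the trivial $p=1$ branch of that lemma).

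None of the individual steps is technically hard; the substance of the lemma lies in how the free parameters are distributed. The point to get right in the first bound is the decomposition through $w^t$: the subgradient is evaluated at the shifted iterate $w^t$, not at $\xt$, so convexity must be applied at $w^t$ and the discrepancy $\xt - w^t$ absorbed via Young's inequality at the cost of the $\tfrac{1}{\lambda}\sqnorm{w^t - \xt}$ term — precisely the term the downstream Lyapunov function $V^t$ is engineered to cancel. The second bound's only subtlety is recognizing the telescoping-friendly optimal split, which Lemma~\ref{le:optimal_t-Peter} hands to us directly.
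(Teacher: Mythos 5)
Your proposal is correct and follows essentially the same route as the paper's proof: part 1 via expanding the square, routing the inner product through $w^t$, applying convexity at $w^t$ and Young's inequality on the cross term; part 2 via the contractive property applied to the compression error followed by the generalized Young split with the optimal $s^* = 1/\sqrt{1-\alpha}-1$. Your side remarks (the conditional expectation in the first bound being vacuous, and Assumption~\ref{as:fi_lipschitzness} not being needed here) are also accurate.
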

\begin{proof}
We prove each bound separately.

\textbf{1.} To establish the first bound, we begin by applying the definition of subgradient:
\begin{eqnarray}
f(x^*) &\geq& f(w^t) + \lin{\pf{w^t}, x^* - w^t},
\end{eqnarray}
which implies:
\begin{eqnarray}
\lin{\pf{w^t}, w^t - x^*} &\geq& f(w^t) - f(x^*).
\end{eqnarray}

Next, we apply \eqref{eq:langles_gen_young} with $s\eqdef \lambda \gamma_t$:
\begin{eqnarray}\label{eq:ef21_lema_1_eq_1}
2\gamma_t \lin{\pf{w^t}, w^t - \xt} &\leq& \lambda \gamma_t^2 \twonorm{\pf{w^t}}^2 + \frac{1}{\lambda} \sqnorm{w^t - \xt}.
\end{eqnarray}
where $\lambda>0$ is a constant to be specified later.

Using the linearity of inner product, we derive:
\begin{eqnarray}\label{eq:ef21_lema_1_eq_2}
-2\gamma_t \lin{\pf{w^t}, x^* - \xt} &=& -2\gamma_t \lin{\pf{w^t}, w^t - x^*} + 2\gamma_t \lin{\pf{w^t}, w^t - \xt} \nonumber \\
&\letext{\eqref{eq:ef21_lema_1_eq_1}}& -2\gamma_t \left( f(w^t) - f(x^*) \right) + \lambda \gamma_t^2 \twonorm{\pf{w^t}}^2 + \frac{1}{\lambda} \sqnorm{w^t - \xt}.
\end{eqnarray}

Finally, we establish the first bound \eqref{eq:le:EF21-P-D:main_lemma_term_1}:

\begin{eqnarray}
\Exp{\sqnorm{\xtpo - x^*} \mid x^t, W^t} &=& \Exp{\sqnorm{\xt - \gamma_t \pf{w^t} - x^*} \mid x^t, W^t} \nonumber \\
&=& \sqnorm{\xt - x^*} - 2\gamma_t \lin{\pf{w^t}, \xt - x^*} + \gamma_t^2 \twonorm{\pf{w^t}}^2 \nonumber \\
&\letext{\eqref{eq:ef21_lema_1_eq_2}}& \sqnorm{\xt - x^*} - 2\gamma_t \left( f(w^t) - f(x^*) \right) + \frac{1}{\lambda} \sqnorm{w^t - \xt} \nonumber \\
&& + (1 + \lambda) \gamma_t^2 \twonorm{\pf{w^t}}^2.
\end{eqnarray}

\textbf{2.} For the second bound, we proceed as follows:

\begin{eqnarray}
 \Exp{\sqnorm{w^{t+1} - x^{t+1}} \mid x^t, W^t} &=&  \Exp{\sqnorm{w^{t} - \cC(x^{t+1} - w^{t}) - x^{t+1}} \mid x^t, W^t} \nonumber \\
&\leq& (1 - \alpha)  \sqnorm{w^{t} - x^{t+1}} \nonumber \\
&=& (1 - \alpha)   \sqnorm{w^{t} - x^t + \gamma_t \pf{w^t}} \nonumber \\
&\leq& (1 - \alpha)(1 + s)  \sqnorm{w^{t} - x^t} + \gamma_t^2 (1 - \alpha)(1 + s^{-1})  \twonorm{\pf{w^t}}^2 \nonumber \\
&\leq& (1 - \theta(s))  \sqnorm{w^{t} - x^t} + \gamma_t^2 \beta(s)  \twonorm{\pf{w^t}}^2, \nonumber \\
\end{eqnarray}
where $\theta(s) \eqdef 1 - (1-\alpha)(1+s)$ and $\beta(s) \eqdef (1-\alpha)(1+s^{-1})$.

Following \citep{richtarik2021ef21}, the optimal $s$, minimizing $\frac{(1-\alpha)(1+1 / s)}{1-(1-\alpha)(1+s)}$, is $s_* = \fr{1}{\sqrt{1-\alpha}} - 1$, resulting in $\theta \eqdef 1-(1-\alpha)(1+s_*) = 1 - \sqrt{1-\alpha}$ and $\beta\eqdef (1-\alpha)(1+1 / s_*) = \fr{1 - \alpha}{1 - \sqrt{1-\alpha}}$.

Therefore, we can establish the second bound \eqref{eq:le:EF21-P-D:main_lemma_term_2}:
\begin{eqnarray}
 \Exp{\sqnorm{w^{t+1} - x^{t+1}} \mid x^t, W^t} \le  (1 - \theta)  \sqnorm{w^{t} - x^t} + \gamma_t^2 \beta  \twonorm{\pf{w^t}}^2.
\end{eqnarray}
\end{proof}

With these two key bounds established in Lemma \ref{le:EF21-P-D:main_lemma}, we can now proceed to the descent lemma. This lemma describes the one-step behavior of Algorithm \ref{alg:EF21-P} for any $\gamma_t>0$ and will be crucial in establishing our convergence rates.

\begin{lemma}[Descent lemma]\label{le:EF21-P-D:combination}
Let the conditions of Lemma \ref{le:EF21-P-D:main_lemma} hold. Define the Lyapunov function
\begin{eqnarray}\label{eq:EF21-P:lyapunov_def}
V^t_{\lambda} \eqdef \sqnorm{\xt - x^*} + \frac{1}{\lambda \theta}  \sqnorm{w^{t} - x^t},
\end{eqnarray}
where $\lambda>0$ and $\theta \eqdef 1 - \sqrt{1-\alpha}$.
 Then 
\begin{eqnarray}
\Exp{V^{t+1}_{\lambda} \mid x^t, W^t} \le V^{t}_{\lambda} - 2\gamma_t \left(f(w^t) - f(x^*)\right) + \rb{1 + \lambda + \fr{\beta} {\lambda \theta}} \gamma_t^2 \twonorm{\pf{w^t}}^2,
\end{eqnarray}
where $\beta \eqdef \fr{1 - \alpha}{1 - \sqrt{1-\alpha}}$.
\end{lemma}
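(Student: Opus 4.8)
The plan is to obtain the descent inequality as a direct linear combination of the two key bounds already proved in Lemma~\ref{le:EF21-P-D:main_lemma}. Since the Lyapunov function is $V^t_{\lambda} = \sqnorm{\xt - x^*} + \frac{1}{\lambda\theta}\sqnorm{w^t - \xt}$, the natural move is to add the first bound \eqref{eq:le:EF21-P-D:main_lemma_term_1} to $\frac{1}{\lambda\theta}$ times the second bound \eqref{eq:le:EF21-P-D:main_lemma_term_2}, so that the left-hand side reassembles into $\Exp{V^{t+1}_{\lambda}\mid x^t, W^t}$.

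Concretely, I would first write
\[
\Exp{V^{t+1}_{\lambda}\mid x^t, W^t} = \Exp{\sqnorm{x^{t+1} - x^*}\mid x^t, W^t} + \frac{1}{\lambda\theta}\Exp{\sqnorm{w^{t+1} - x^{t+1}}\mid x^t, W^t},
\]
and then substitute the two bounds. This reproduces the term $\sqnorm{\xt - x^*}$, the suboptimality term $-2\gamma_t\rb{f(w^t) - f(x^*)}$, and the gradient-norm term with coefficient $\rb{1 + \lambda + \frac{\beta}{\lambda\theta}}\gamma_t^2$ essentially verbatim, since the scaling $\frac{1}{\lambda\theta}$ converts the $\gamma_t^2\beta\twonorm{\pf{w^t}}^2$ contribution of the second bound into $\frac{\beta}{\lambda\theta}\gamma_t^2\twonorm{\pf{w^t}}^2$, which adds to the $(1+\lambda)\gamma_t^2\twonorm{\pf{w^t}}^2$ from the first bound.

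The one step deserving attention is collecting the two contributions proportional to $\sqnorm{w^t - \xt}$: the term $\frac{1}{\lambda}\sqnorm{w^t - \xt}$ coming from the first bound, and the term $\frac{1-\theta}{\lambda\theta}\sqnorm{w^t - \xt}$ coming from the scaled contraction factor in the second bound. The decisive algebraic identity is $\frac{1}{\lambda} + \frac{1-\theta}{\lambda\theta} = \frac{1}{\lambda\theta}\rb{\theta + (1-\theta)} = \frac{1}{\lambda\theta}$, which is exactly the coefficient in the definition of $V^t_{\lambda}$. This cancellation — the reason the contraction factor $(1-\theta)$ combines with the excess $\frac{1}{\lambda}$ term to recover precisely $\frac{1}{\lambda\theta}$ — is the only place where the specific weight $\frac{1}{\lambda\theta}$ in the Lyapunov function is pinned down, and it is the crux of the argument rather than a genuine obstacle. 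Once verified, the right-hand side collapses to $V^t_{\lambda} - 2\gamma_t\rb{f(w^t) - f(x^*)} + \rb{1 + \lambda + \frac{\beta}{\lambda\theta}}\gamma_t^2\twonorm{\pf{w^t}}^2$, completing the proof.
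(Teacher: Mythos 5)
Your proposal is correct and follows exactly the paper's own argument: decompose $\Exp{V^{t+1}_{\lambda}\mid x^t, W^t}$ via the definition of the Lyapunov function, substitute bound \eqref{eq:le:EF21-P-D:main_lemma_term_1} plus $\frac{1}{\lambda\theta}$ times bound \eqref{eq:le:EF21-P-D:main_lemma_term_2}, and recombine. The identity $\frac{1}{\lambda} + \frac{1-\theta}{\lambda\theta} = \frac{1}{\lambda\theta}$ that you single out is indeed the step that makes the coefficients reassemble into $V^t_{\lambda}$, and your bookkeeping of the gradient-norm terms matches the paper's verbatim.
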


\begin{proof}
Recall that Lemma \ref{le:EF21-P-D:main_lemma} provides us with two key bounds:

\begin{eqnarray}\label{thm:EF21-P-D:main:term_one}
\Exp{\sqnorm{\xtpo - x^*} \mid x^t, W^t} &\le& \sqnorm{\xt - x^*} - 2\gamma_t \left( f(w^t) - f(x^*) \right) + \frac{1}{\lambda} \sqnorm{w^t - \xt} \nonumber \\
&&+ (1 + \lambda) \gamma_t^2 \twonorm{\pf{w^t}}^2,
\end{eqnarray}
and
\begin{eqnarray}\label{thm:EF21-P-D:main:term_two}
\Exp{\sqnorm{w^{t+1} - x^{t+1}} \mid x^t, W^t} \le  (1 - \theta)  \sqnorm{w^{t} - x^t} + \gamma_t^2 \beta  \twonorm{\pf{w^t}}^2.
\end{eqnarray}

To obtain our descent lemma, we combine \eqref{thm:EF21-P-D:main:term_one} with $\fr{1}{\lambda \theta}$ times \eqref{thm:EF21-P-D:main:term_two}:

\begin{eqnarray}
&&\Exp{V^{t+1}_{\lambda} \mid \xt, W^t}\nonumber \\
&& \eqtext{\eqref{eq:EF21-P:lyapunov_def}}\Exp{\sqnorm{\xtpo - x^*} \mid x^t, W^t} + \frac{1}{\lambda \theta}  \Exp{\sqnorm{w^{t+1} - x^{t+1}} \mid x^t, W^t} \nonumber \\
&&\leq \sqnorm{\xt - x^*} - 2\gamma_t \left( f(w^t) - f(x^*) \right) + \frac{1}{\lambda} \sqnorm{w^{t} - x^t} + (1 + \lambda) \gamma_t^2 \twonorm{\pf{w^t}}^2\nonumber \\
&& \quad+ \frac{1}{\lambda \theta} \rb{(1 - \theta)  \sqnorm{w^{t} - x^t} + \gamma_t^2 \beta  \twonorm{\pf{w^t}}^2} \nonumber \\
&&= \sqnorm{\xt - x^*} + \frac{1}{\lambda \theta}  \sqnorm{w^{t} - x^t} - 2\gamma_t \left(f(w^t) - f(x^*)\right) \nonumber \\
&& \quad + (1 + \lambda) \gamma_t^2 \twonorm{\pf{w^t}}^2 + \fr{\gamma_t^2 \beta} {\lambda \theta}  \twonorm{\pf{w^t}}^2 \nonumber \\
&&= V^t_{\lambda} - 2\gamma_t \left(f(w^t) - f(x^*)\right)  + (1 + \lambda) \gamma_t^2 \twonorm{\pf{w^t}}^2 + \fr{\gamma_t^2 \beta} {\lambda \theta}  \twonorm{\pf{w^t}}^2. \nonumber \\
\end{eqnarray}
\end{proof}

\subsection{Proof of Theorem \ref{thm:main:EF21-P-D}}

Having established the descent lemma, we now proceed to the theorem, which characterizes the convergence behavior of \algname{EF21-P} under various stepsize schedules.

Before we state and prove the theorem, it is important to make a notational remark to avoid confusion.

\begin{remark}
In Lemmas \ref{le:EF21-P-D:main_lemma} and \ref{le:EF21-P-D:combination}, we used an auxiliary term $\lambda > 0$ arising from the application of Young's inequality. This term also appeared in the definition of the Lyapunov function $V_{\lambda}^{t}$. In the following theorem, we will show how to choose this $\lambda$ optimally and denote it as $\lambda_*$. Consequently, we define a Lyapunov function $V^{t}$ such that $V^{t} \eqdef V^{t}_{\lambda_*}$. For simplicity of notation, we will use $V^{t}$ instead of $V^{t}_{\lambda_*}$ in the theorem statement and proof.
\end{remark}

\begin{theorem}[Theorem \ref{thm:main:EF21-P-D}]\label{thm:EF21-P-D}
Let Assumptions \ref{as:existence_of_minimizer}, \ref{as:fi_convexity} and \ref{as:fi_lipschitzness} hold. Define a Lyapunov function
$V^t \eqdef \sqnorm{\xt - x^*} + \frac{1}{\lambda_* \theta} \sqnorm{w^t - \xt}$,
where $\lambda_* \eqdef \fr{\sqrt{1-\alpha}} {1 - \sqrt{1-\alpha}}$ and $\theta \eqdef 1 - \sqrt{1-\alpha}$.
Define also a constant 
$B_* \eqdef 1 + 2\fr{\sqrt{1-\alpha}} {1 - \sqrt{1-\alpha}}$.
Let $\cb{w^t}$ be the sequence produced by \algname{EF21-P} (Algorithm \ref{alg:EF21-P}). Define $\avg{w}^T \eqdef \frac{1}{T} \sum_{t=0}^{T-1} w^t$ and $\what{w}^T \eqdef \frac{1}{\sum_{t=0}^{T-1} \gamma_t}\sum_{t=0}^{T-1} \gamma_t w^t$.

     \textbf{1. (Constant stepsize).} If $\gamma_t \eqdef \gamma >0$, then
\begin{eqnarray}\label{eq:EF21-P-D:constant_stepsize_rate}
\Exp{f(\avg{w}^T) - f(x^*)} \le \frac{V^0}{2\gamma T} + \frac{B_* L_0^2 \gamma}{2}.
\end{eqnarray}
If, moreover, optimal $\gamma$ is chosen i.e. 
\begin{eqnarray}
\gamma \eqdef \fr{1}{\sqrt{T}} \sqrt{\fr{V^0}{B_* L_0^2}},
\end{eqnarray}
 then 
\begin{eqnarray}\label{eq:EF21-P-D:constant_stepsize_rate_optimal}
\Exp{f(\avg{w}^T) - f(x^*)} \le \frac{\sqrt{B_* L_0^2 V^0}}{\sqrt{T}}.
\end{eqnarray}

     \textbf{2. (Polyak stepsize).}
If $\gamma_t$ is chosen as
\begin{eqnarray}\label{eq:appndx:gamma_t_polyak}
\gamma_t &\eqdef& \fr{f(w^t) - f(x^*)}{B_* \sqnorm{\pf{w^t}}},
\end{eqnarray}
then
\begin{eqnarray}\label{eq:appndx:EF21-P-D:polyak_stepsize_rate}
\Exp{f(\avg{w}^T) - f(x^*)} \le \frac{\sqrt{B_* L_0^2 V^0}}{\sqrt{T}}.
\end{eqnarray}

     \textbf{3. (Decreasing stepsize).} If $\gamma_t$ is chosen as 
     \begin{eqnarray}
     \gamma_t \eqdef \fr{\gamma_0}{\sqrt{t+1}},
     \end{eqnarray}
      then
\begin{eqnarray}
\Exp{f(\what{w}^T) - f(x^*)} \leq \frac{V^0 + 2\gamma_0^2 B_* L_0^2 \log(T + 1)}{\gamma_0\sqrt{T}}.
\end{eqnarray}
If, moreover, optimal $\gamma_0$ is chosen i.e. 
\begin{eqnarray}
\gamma_0 \eqdef \sqrt{\fr{V_0}{2B_* L_0^2 \log(T+1)}},
\end{eqnarray}
then 
\begin{eqnarray}\label{eq:appndx:EF21-P-D:decr_stepsize_rate}
\Exp{f(\what{w}^T) - f(x^*)} \leq 2\sqrt{2B_* L_0^2 V_0}\sqrt{\frac{\log(T+1)}{T}}.
\end{eqnarray}

\end{theorem}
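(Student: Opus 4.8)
The plan is to start from the one-step descent inequality of Lemma \ref{le:EF21-P-D:combination},
\[
\Exp{V^{t+1}_{\lambda} \mid x^t, W^t} \le V^{t}_{\lambda} - 2\gamma_t \rb{f(w^t) - f(x^*)} + \rb{1 + \lambda + \fr{\beta}{\lambda \theta}} \gamma_t^2 \twonorm{\pf{w^t}}^2,
\]
and first to optimize the free Young parameter $\lambda$. Minimizing $\lambda + \fr{\beta}{\lambda\theta}$ by AM--GM gives $\lambda_* = \sqrt{\beta/\theta} = \fr{\sqrt{1-\alpha}}{1-\sqrt{1-\alpha}}$, at which the bracket collapses to $B_* = 1 + 2\fr{\sqrt{1-\alpha}}{1-\sqrt{1-\alpha}}$, recovering the constants in the statement. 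Combining this with the uniform subgradient bound $\twonorm{\pf{w^t}}^2 \le L_0^2$ and taking full expectations via the tower property \eqref{eq:tower_property} yields the master recursion
\[
\Exp{V^{t+1}} \le \Exp{V^t} - 2\gamma_t \Exp{f(w^t) - f(x^*)} + B_* \gamma_t^2 L_0^2,
\]
which serves as the workhorse for all three stepsize regimes.

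For the constant stepsize $\gamma_t \equiv \gamma$, I would telescope this recursion over $t = 0, \dots, T-1$, drop the nonnegative $\Exp{V^T}$, divide by $2\gamma T$, and invoke Jensen's inequality \eqref{eq:jensen_general} with convexity of $f$ to replace $\fr1T\sum_t f(w^t)$ by $f(\avg{w}^T)$; this gives \eqref{eq:EF21-P-D:constant_stepsize_rate}. Minimizing $\fr{V^0}{2\gamma T} + \fr{B_* L_0^2\gamma}{2}$ over $\gamma$ (again AM--GM) produces the optimal $\gamma = \fr{1}{\sqrt T}\sqrt{V^0/(B_* L_0^2)}$ and the $\cO\rb{\nfr{1}{\sqrt{T}}}$ rate.

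The Polyak case is where the main subtlety lies. Substituting $\gamma_t = \fr{f(w^t)-f(x^*)}{B_*\sqnorm{\pf{w^t}}}$ makes $B_*\gamma_t^2\twonorm{\pf{w^t}}^2$ equal to $\gamma_t\rb{f(w^t)-f(x^*)}$, so the gradient/gap terms partially cancel and the recursion becomes $\Exp{V^{t+1}\mid x^t, W^t} \le V^t - \gamma_t\rb{f(w^t)-f(x^*)}$. Substituting $\gamma_t$ and using $\twonorm{\pf{w^t}}^2 \le L_0^2$ turns the decrement into $\fr{(f(w^t)-f(x^*))^2}{B_* L_0^2}$, so telescoping yields $\sum_{t=0}^{T-1}\Exp{(f(w^t)-f(x^*))^2} \le B_* L_0^2 V^0$. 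The delicate step is converting this control of the \emph{squared} gaps back into a bound on $\Exp{f(\avg{w}^T)-f(x^*)}$: I would chain Jensen (so $(\Exp{f(w^t)-f(x^*)})^2 \le \Exp{(f(w^t)-f(x^*))^2}$) with Cauchy--Schwarz in the form $\sum_t a_t \le \sqrt{T}\sqrt{\sum_t a_t^2}$ applied to $a_t = \Exp{f(w^t)-f(x^*)}\ge 0$, and finally Jensen once more for the average $\avg{w}^T$, delivering \eqref{eq:appndx:EF21-P-D:polyak_stepsize_rate}. Handling the randomness of $\gamma_t$ (it depends on $w^t$) and keeping the conditioning consistent throughout is the part needing the most care.

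Finally, for $\gamma_t = \gamma_0/\sqrt{t+1}$, I would telescope the master recursion into $2\sum_{t=0}^{T-1}\gamma_t\Exp{f(w^t)-f(x^*)} \le V^0 + B_* L_0^2 \sum_{t=0}^{T-1}\gamma_t^2$, then apply the weighted Jensen inequality for $\what{w}^T$ to pass to $\Exp{f(\what{w}^T)-f(x^*)}$. The two elementary sum estimates $\sum_{t=0}^{T-1}\gamma_t \ge \gamma_0\sqrt T$ (since $\nfr{1}{\sqrt{t+1}}\ge \nfr{1}{\sqrt T}$) and $\sum_{t=0}^{T-1}\gamma_t^2 \le 2\gamma_0^2\log(T+1)$ (harmonic-sum bound) give the $\cO\rb{\nfr{\log T}{\sqrt{T}}}$ rate, and a closing AM--GM optimization over $\gamma_0$ produces the stated optimal choice.
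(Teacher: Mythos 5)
Your proposal is correct and follows essentially the same route as the paper's proof: the same descent lemma, the same optimization of the Young parameter $\lambda$ yielding $\lambda_* = \sqrt{\beta/\theta}$ and $B_*$, the same telescoping plus Jensen argument for the constant and decreasing stepsizes, and the same Jensen--Cauchy--Schwarz chain to convert the telescoped squared gaps into a bound on $\Exp{f(\avg{w}^T) - f(x^*)}$ in the Polyak case. The only cosmetic difference is that you fix $\lambda = \lambda_*$ before deriving the master recursion, whereas the paper carries a generic $\lambda$ through and optimizes it at the end; the resulting bounds are identical.
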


\begin{proof}
We will prove each part of the theorem separately, starting with some general bounds that will be useful throughout the proof.

From Assumption \ref{as:fi_lipschitzness}, we can infer that $f$ is $L_0$-Lipschitz with $L_0 \le \suminn L_{0,i}$ and
\begin{eqnarray}\label{eq:L_0_bound}
\twonorm{\partial f(x)} &\le L_{0} \quad \forall x \in \mathbb{R}^d.
\end{eqnarray}

Now, we proceed to prove each part of the theorem.

\textbf{1. (Constant stepsize).}
Using \eqref{eq:L_0_bound}, Lemma \ref{le:EF21-P-D:combination}, the tower property of expectation \eqref{eq:tower_property}, and choosing constant stepsize $\gamma_t \eqdef \gamma > 0$, we obtain

\begin{eqnarray}\label{eq:EF21-P-D:descent_eq1}
\Exp{V^{t+1}} \le \Exp{V^{t}} - 2\gamma \Exp{f(w^t) - f(x^*)} + \rb{1 + \lambda + \fr{\beta} {\lambda \theta}} \gamma^2 \Exp{\twonorm{\pf{w^t}}^2},
\end{eqnarray}
where $\lambda>0$, $\theta \eqdef 1 - \sqrt{1-\alpha}$ and $\beta \eqdef \fr{1 - \alpha}{1 - \sqrt{1-\alpha}}$.

From the inequality \eqref{eq:EF21-P-D:descent_eq1}, we have
\begin{eqnarray}\label{eq:EF21-P-D:descent_eq2}
\Exp{V^{t+1}} \leq \Exp{V^t} - 2\gamma \Exp{f(w^t) - f(x^*)} + B_{\lambda} {L}_0^2 \gamma^2,
\end{eqnarray}
where $B_{\lambda} \eqdef 1+\lambda + \fr{\beta}{\lambda \theta}$.

Since $f$ is convex, by Jensen's inequality \eqref{eq:jensen_general}, we have 
\begin{eqnarray}\label{eq:EF21-P:proof:constant_stepsize_bound_gen}
\Exp{f(\what{w}^T) - f(x^*)} &\le& \Exp{\frac{1}{T} \sum_{t=0}^{T-1} f({w}^t) - f(x^*)} \nonumber \\
&\leq& \frac{1}{T} \sum_{t=0}^{T-1} \Exp{f(w^t) - f(x^*)} \nonumber \\
&\letext{\eqref{eq:EF21-P-D:descent_eq2}}& \frac{\Exp{V^0} - \Exp{V^T}}{2\gamma T} + \frac{B_{\lambda}L_0^2 \gamma}{2} \nonumber \\
&\letext{$V^T \geq 0$}& \frac{V^0}{2\gamma T} + \frac{B_{\lambda}L_0^2 \gamma}{2}.
\end{eqnarray}

To optimize this bound, we need to find the optimal $\lambda$. Note that $\phi(\lambda) \eqdef 1+\lambda + \fr{\beta}{\lambda \theta}$ is a convex function on $(0, +\infty)$ for any fixed values $\beta>0$ and $\theta \in (0,1]$.

Therefore, we define the optimal $\lambda$ value (denoted $\lambda_*$) as
\begin{eqnarray}\label{eq:const:EF21_P:lambda_star_choice}
\lambda_* \eqdef \argmin_{\lambda > 0} \left( 1+\lambda + \fr{\beta}{\lambda \theta} \right) = \sqrt{\fr{\beta}{\theta}} = \fr{\sqrt{1-\alpha}} {1 - \sqrt{1-\alpha}}.
\end{eqnarray}

Next, we define the optimal $B_{\lambda}$ value (denoted $B_*$) as
\begin{eqnarray}\label{eq:EF21_P:constant_optimal_B}
B_* \eqdef B_{\lambda_*}= 1 + 2\sqrt{\fr{\beta}{\theta}} = 1+2\fr{\sqrt{1-\alpha}} {1 - \sqrt{1-\alpha}}.
\end{eqnarray}

Plugging \eqref{eq:EF21_P:constant_optimal_B} into \eqref{eq:EF21-P:proof:constant_stepsize_bound_gen}, we get 

\begin{eqnarray}\label{eq:proof:EF21-P:constant_stepsize_bound_opt_lambda} 
\Exp{\suminn f_i(\avg{w}^T) - f(x^*)} \letext{\eqref{eq:EF21_P:constant_optimal_B},  \eqref{eq:EF21-P:proof:constant_stepsize_bound_gen}} \frac{V^0}{2\gamma T} + \frac{B_{*}L_0^2 \gamma}{2}.
\end{eqnarray}

Thus, we have established \eqref{eq:EF21-P-D:constant_stepsize_rate}.

To derive the optimal rate \eqref{eq:EF21-P-D:constant_stepsize_rate_optimal}, we need to find the optimal $\gamma$ stepsize (which we denote $\gamma_*$):

\begin{eqnarray}
\gamma_* \eqdef \argmin_{\gamma} \left( \frac{V^0}{2\gamma T} + \frac{B_*  \gamma}{2} \right) = \frac{1}{ \sqrt{T}} \sqrt{\frac{V^0}{B_* L_0^2}}.
\end{eqnarray}

Therefore, choosing $\gamma \eqdef \gamma_*$, \eqref{eq:proof:EF21-P:constant_stepsize_bound_opt_lambda} reduces to 
\begin{eqnarray}
\Exp{\suminn f_i(\avg{w}^T) - f(x^*)} \le \frac{V^0}{2\gamma_* T} + \frac{B_* \gamma_*}{2} &=& \frac{ \sqrt{V^0 B_* L_0^2}}{\sqrt{T}},
\end{eqnarray}
which gives us \eqref{eq:EF21-P-D:constant_stepsize_rate_optimal}.

\textbf{2. (Polyak stepsize).}

Using Lemma \ref{le:EF21-P-D:combination}, we have
\begin{eqnarray}\label{eq:EF21-P-D:polyak_eq1}
\Exp{V^{t+1}\mid x^t, W^t} \le {V^{t}} - 2\gamma_t {f(w^t) - f(x^*)} + \rb{1 + \lambda + \fr{\beta} {\lambda \theta}} \gamma_t^2 {\twonorm{\pf{w^t}}^2},
\end{eqnarray}
where $\lambda>0$, $\theta \eqdef 1 - \sqrt{1-\alpha}$ and $\beta \eqdef \fr{1 - \alpha}{1 - \sqrt{1-\alpha}}$.

We choose the Polyak stepsize $\gamma_t$ as the one that minimizes the right-hand side of \eqref{eq:EF21-P-D:polyak_eq1}:

\begin{eqnarray}\label{eq:EF21-P-D:gamma_t_expr_1}
\gamma_t &\eqdef& \argmin_{\gamma} \cb{{V^{t}} - 2\gamma \rb{f(w^t) - f(x^*)} + \rb{1 + \lambda + \fr{\beta} {\lambda \theta}} \gamma^2 {\twonorm{\pf{w^t}}^2}}\nonumber \\
&=& \fr{ f(w^t) - f(x^*)}{\rb{1 + \lambda + \fr{\beta} {\lambda \theta}}\sqnorm{\pf{w^t}}}.
\end{eqnarray}

Note that the denominator in \eqref{eq:EF21-P-D:gamma_t_expr_1} is a convex function of $\lambda$.
Therefore, similar to \eqref{eq:const:EF21_P:lambda_star_choice}, we can choose the optimal $\lambda$ as

\begin{eqnarray}
\lambda_* \eqdef \argmin_{\lambda > 0} \left(1 + \lambda + \fr{\beta} {\lambda \theta}  \right) = \sqrt{\fr{\beta}{\theta}},
\end{eqnarray}
and thus
\begin{eqnarray}
B_* = 1 + \lambda_* + \fr{\beta} {\lambda_* \theta} = 1 + 2\sqrt{\fr{\beta}{\theta}} = 1+2\fr{\sqrt{1-\alpha}} {1 - \sqrt{1-\alpha}}.
\end{eqnarray}

Therefore, we derive the final expression for our Polyak stepsize:
\begin{eqnarray}\label{eq:EF21-P-D:gamma_t_expr}
\gamma_t &\eqdef& \fr{ f(w^t) - f(x^*)}{B_* \sqnorm{ \pf{w^t}}}.
\end{eqnarray}

Next, plugging \eqref{eq:EF21-P-D:gamma_t_expr} into \eqref{eq:EF21-P-D:polyak_eq1} and using the tower property of expectation \eqref{eq:tower_property}, we obtain 

\begin{eqnarray}\label{eq:EF21-P_polyak_cons}
\Exp{V^{t+1}} &\letext{\eqref{eq:EF21-P-D:gamma_t_expr_1}, \eqref{eq:EF21-P-D:gamma_t_expr}}& \Exp{V^{t}} - \Exp{\fr{\rb{ f(w^t) - f(x^*)}^2} { \sqnorm{ \pf{w^t}} + 2 \twonorm{ \pf{w^t}}\sqrt{ \twonorm{\pf{w^t}}^2}  \sqrt{\fr{(1-p)\omega}{p}} }} \nonumber \\
&\letext{\eqref{eq:L_0_bound}}& \Exp{V^{t}} - \fr{\Exp{ \rb{ f(w^t) - f(x^*)}^2}}{L_0^2{B}_*},
\end{eqnarray}

Since $f$ is convex, by Jensen's inequality \eqref{eq:jensen_general} and the Cauchy-Bunyakovsky-Schwarz inequality \eqref{eq:cauchy_bunyakovsky_schwarz} with $X\eqdef  f({w}^t) - f(x^*)$ and $Y\eqdef 1$, we have 
\begin{eqnarray}
\Exp{ f_i(\avg{w}^T) - f(x^*)} &\letext{\eqref{eq:jensen_general}}& \Exp{ \frac{1}{T} \sum_{t=0}^{T-1} f(w^t) - f(x^*)} \nonumber \\
&\le& \frac{1}{T}\sum_{t=0}^{T-1} \Exp{ f(w^t) - f(x^*)}\nonumber \\
&\letext{\eqref{eq:cauchy_bunyakovsky_schwarz}}& \frac{1}{T}\sum_{t=0}^{T-1} \sqrt{\Exp{\rb{ f(w^t) - f(x^*)}^2 }}\nonumber \\
&\le& \sqrt{\frac{1}{T}\sum_{t=0}^{T-1}\Exp{\rb{ f(w^t) - f(x^*)}^2 }}\nonumber \\
&\letext{\eqref{eq:EF21-P_polyak_cons}}& \frac{\sqrt{B_*L_0^2}}{\sqrt{T}} \sqrt{{\Exp{V^0} - \Exp{V^T}} } \nonumber \\
&\leq& \frac{\sqrt{V^0B_*L_0^2}}{\sqrt{T}}. \nonumber \\
\end{eqnarray}

Thus, we have established \eqref{eq:appndx:EF21-P-D:polyak_stepsize_rate}.

\textbf{3. (Decreasing stepsize).}

By the same arguments as in the analysis for the constant stepsize case, we can get a bound   

\begin{eqnarray}\label{eq:EF21-P:distributed_nonsmooth_inequality_decreasing}
\Exp{V^{t+1}} \le \Exp{V^t} - 2\gamma_t \Exp{f(w^t) - f(x^*)} + {B}_{*}L_0^2 \gamma_t^2,
\end{eqnarray}
where ${B}_* \eqtext{\eqref{eq:EF21_P:constant_optimal_B}} = 1+2\fr{\sqrt{1-\alpha}} {1 - \sqrt{1-\alpha}}.$

If $\gamma_t \eqdef \frac{\gamma_0}{\sqrt{t+1}}$ with $\gamma_0 > 0$, then we can get the bounds
\begin{eqnarray}\label{eq:EF21-P:decr_stepsize_bounds}
\sum_{t=0}^{T-1} \gamma_t \geq \frac{\gamma_0\sqrt{T}}{2}, \quad \text{and} \quad
\sum_{t=0}^{T-1} \gamma_t^2 \leq 2\gamma_0^2 \log(T + 1).
\end{eqnarray}

Since $f$ is convex, by Jensen's inequality \eqref{eq:jensen_general}, we have 

\begin{eqnarray}\label{eq:EF21-P:sum_decr_st_bound}
\Exp{ f(\what{w}^T) - f(x^*)} &\letext{\eqref{eq:jensen_general}}& \Exp{ \frac{1}{\sum_{t=0}^{T-1} \gamma_t} \sum_{t=0}^{T-1} \gamma_t[f({w}^t) - f(x^*)]} \nonumber \\
&\letext{\eqref{eq:EF21-P:distributed_nonsmooth_inequality_decreasing}}& \frac{(\Exp{V^0} - \Exp{V^T}) + {B}_{*}L_0^2 \sum_{t=0}^{T-1} \gamma_t^2}{2 \sum_{t=0}^{T-1} \gamma_t} \nonumber \\
&\letext{$V^T \geq 0$}& \frac{V^0 + {B}_{*}L_0^2 \sum_{t=0}^{T-1} \gamma_t^2}{2 \sum_{t=0}^{T-1} \gamma_t} \nonumber \\
&\letext{\eqref{eq:EF21-P:decr_stepsize_bounds}}& \frac{V^0 + 2\gamma_0^2 {B}_{*}L_0^2 \log(T + 1)}{\gamma_0\sqrt{T}}.
\end{eqnarray}
The optimal $\gamma_0$ can be chosen by minimizing the right-hand side of \eqref{eq:EF21-P:sum_decr_st_bound}, i.e., 
\begin{eqnarray}
\gamma_* &=& \argmin_{\gamma_0 > 0} \left( \frac{V_0}{\gamma_0 \sqrt{T}} + \frac{2 \gamma_0 {B}_{*}L_0^2 \log(T+1)}{\sqrt{T}} \right) = \sqrt{\frac{V_0}{2 {B}_{*}L_0^2 \log(T+1)}}. \nonumber \\
\end{eqnarray}

Therefore, choosing $\gamma_0 \eqdef \gamma_*$, \eqref{eq:EF21-P:sum_decr_st_bound} reduces to 
\begin{eqnarray}
\Exp{ f(\what{w}^T) - f(x^*)} \le \frac{V_0}{\gamma_* \sqrt{T}} + \frac{2 \gamma_* \log(T+1)}{\sqrt{T}} &=& 2\sqrt{2 V_0} \sqrt{{B}_{*}L_0^2} \sqrt{\frac{\log(T+1)}{T}}, \nonumber \\
\end{eqnarray}
and we get \eqref{eq:appndx:EF21-P-D:decr_stepsize_rate}.
\end{proof}

Having established our main theorem, we can now derive a corollary that provides more practical insights into the performance of \algname{EF21-P}.

\subsection{Proof of Corollary \ref{cor:main:EF21-P-D}}

\begin{corollary}[Corollary \ref{cor:main:EF21-P-D}]\label{cor:EF21-P-D}
Let the conditions of Theorem \ref{thm:main:EF21-P-D} be met and $w^0 = x^0$.
If $\gamma_t$ is set according to \eqref{eq:EF21-P-D:constant_stepsize} or \eqref{eq:gamma_t_polyak} (constant or Polyak stepsizes) 
then \algname{EF21-P} (Algorithm \ref{alg:EF21-P-D}) requires
\begin{eqnarray}\label{cor:eq:ef21_p-complexity}
T = \cO\rb{\frac{L_0^2 R_0^2}{\alpha\eps^2}}
\end{eqnarray}
iterations/communication rounds in order to achieve $\Exp{f(\avg{w}^T) - f(x^*)} \le \eps$.
Moreover, under the assumption that the communication cost is proportional to the number of non-zero components of vectors transmitted from the server to workers, we have that the expected total communication cost per worker equals
\begin{eqnarray}
d + \zeta_{\cC}T = \cO\rb{d + \frac{\zeta_{\cC}L_0^2 R_0^2}{\alpha\eps^2}}.
\end{eqnarray}
\end{corollary}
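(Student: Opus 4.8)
The plan is to feed the $\cO(\nfr{1}{\sqrt{T}})$ rate of Theorem~\ref{thm:main:EF21-P-D} into the target accuracy $\eps$ and then count the communicated floats. First I would note that for both the optimal constant stepsize \eqref{eq:EF21-P-D:constant_stepsize} and the Polyak stepsize \eqref{eq:gamma_t_polyak}, the theorem delivers the \emph{same} bound $\Exp{f(\avg{w}^T) - f(x^*)} \le \nfr{\sqrt{B_* L_0^2 V^0}}{\sqrt{T}}$, so a single argument covers both regimes and I need only manipulate this one inequality.

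Next I would simplify the two problem-dependent constants appearing in this bound. Because the algorithm is initialized with $w^0 = x^0$, the shift term of the Lyapunov function vanishes at $t = 0$, so $V^0 = \sqnorm{x^0 - x^*} + \fr{1}{\lambda_* \theta}\sqnorm{w^0 - x^0} = \sqnorm{x^0 - x^*} = R_0^2$. For the compression constant, writing $u \eqdef \sqrt{1-\alpha}$ turns $B_*$ into $\nfr{(1+u)}{(1-u)}$, and the elementary inequality $(1+u)^2 \le 3 + u^2$ (valid for $u \le 1$) yields $B_* \le \nfr{4}{\alpha} - 1 \le \nfr{4}{\alpha}$, exactly the bound already recorded after Theorem~\ref{thm:main:EF21-P-D}.

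With these substitutions, imposing $\nfr{\sqrt{B_* L_0^2 R_0^2}}{\sqrt{T}} \le \eps$ is equivalent to $T \ge \nfr{B_* L_0^2 R_0^2}{\eps^2}$, and inserting $B_* \le \nfr{4}{\alpha}$ gives the advertised iteration complexity $T = \cO\rb{\nfr{L_0^2 R_0^2}{\alpha \eps^2}}$. For the communication accounting, I would observe that the only server-to-worker message at each round is the compressed difference $\Delta^{t+1} = \cC(x^{t+1} - w^t)$, whose expected number of nonzero entries is at most $\zeta_{\cC}$ by the definition of expected density; adding a single initial broadcast of $x^0$ (costing $d$ floats to synchronize $w^0$ across workers) and summing over the $T$ rounds via the tower property \eqref{eq:tower_property} produces the expected per-worker cost $d + \zeta_{\cC} T = \cO\rb{d + \nfr{\zeta_{\cC} L_0^2 R_0^2}{\alpha \eps^2}}$.

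There is no genuine obstacle here, since the result is a direct corollary of the theorem; the only point requiring minor care is the algebraic bound $B_* \le \nfr{4}{\alpha}$, which cleanly exposes the $\alpha$-dependence of the complexity and makes transparent that Top$K$ compression ($\zeta_{\cC} = K$) recovers, but cannot beat, the $\cO\rb{\nfr{d L_0^2 R_0^2}{\eps^2}}$ cost of distributed \algname{SM}.
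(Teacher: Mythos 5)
Your proposal is correct and follows essentially the same route as the paper's proof: substitute $V^0 = R_0^2$ using $w^0 = x^0$, bound $B_* \le \nfr{4}{\alpha} - 1$, invert the $\cO(\nfr{1}{\sqrt{T}})$ rate, and count $d + \zeta_{\cC}T$ floats per worker. Your rewriting of $B_*$ as $\nfr{(1+u)}{(1-u)} = \nfr{(1+u)^2}{\alpha}$ with $u = \sqrt{1-\alpha}$ is a slightly different but equivalent algebraic path to the same bound the paper obtains by expanding $\nfr{\sqrt{1-\alpha}}{(1-\sqrt{1-\alpha})} = \nfr{\sqrt{1-\alpha}(1+\sqrt{1-\alpha})}{\alpha}$.
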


\begin{proof}
From \eqref{eq:EF21-P-D:constant_stepsize_rate_optimal} and \eqref{eq:appndx:EF21-P-D:polyak_stepsize_rate}, we have the convergence rate 
\begin{eqnarray}\label{eq:ef21-p:rate_constant_polyak}
\Exp{f(\avg{w}^T) - f(x^*)} \le \frac{\sqrt{B_* L_0^2 V^0}}{\sqrt{T}},
\end{eqnarray}
where

$V^0 = \sqnorm{x^0 - x^*} + \frac{1}{\lambda_* \theta} \sqnorm{w^0 - x^0}$,
with $\lambda_* \eqdef \fr{\sqrt{1-\alpha}} {1 - \sqrt{1-\alpha}}$ and $\theta \eqdef 1 - \sqrt{1-\alpha}$.

$B_* \eqdef 1 + 2\fr{\sqrt{1-\alpha}} {1 - \sqrt{1-\alpha}}$, resulting in a complexity 
\begin{eqnarray}\label{eq:ef21-p:complexity_constant_polyak}
T = \cO\rb{\frac{{{B}_*L^2_0 V^0}}{{\eps}^2}}
\end{eqnarray}
required to achieve $\Exp{f(\avg{w}^T) - f(x^*)} \le \eps$.
Assuming $w^0 = x^0$, we get
\begin{eqnarray}\label{eq:ef21_p_substitutions}
V^0 = R^2_0 = \sqnorm{x^0 - x^*}.
\end{eqnarray}

Further, note 
\begin{eqnarray}\label{eq:B_star_ef21-P}
B_* &=& 1 + 2\fr{\sqrt{1-\alpha}} {1 - \sqrt{1-\alpha}} \nonumber \\
&=& 1 + 2\fr{\sqrt{1-\alpha}(1 + \sqrt{1-\alpha})}{\alpha} \nonumber \\
&=& 1 + 2\rb{\fr{\sqrt{1-\alpha} + 1 - \alpha}{\alpha}}\nonumber \\
&\le& \fr{4}{\alpha} -1.
\end{eqnarray}

Plugging \eqref{eq:ef21_p_substitutions} and \eqref{eq:B_star_ef21-P} into \eqref{eq:ef21-p:complexity_constant_polyak}, we get \eqref{cor:eq:ef21_p-complexity}.

The expected total communication cost per worker is
\begin{eqnarray}
d + \zeta_{\cC}T = \cO\rb{d + \frac{\zeta_{\cC}L_0^2 R_0^2}{\alpha\eps^2}}.
\end{eqnarray}

\end{proof}

This concludes our analysis of the \algname{EF21-P} algorithm. We have established its convergence rates for different stepsize schedules and derived complexity bounds. 
In the next section, we will proceed to analyze the \algname{MARINA-P} algorithm.

\section{Missing Proofs For \algname{MARINA-P}}

In this section, we present the detailed proofs for the theoretical results for \algname{MARINA-P} algorithm. Our proof technique proceeds as follows: we first establish two key bounds in Lemma \ref{le:MARINA-P-D:main_lemma}. We then combine these bounds to obtain a descent lemma (Lemma \ref{le:MARINA-P-D:combination}). Finally, we leverage this descent lemma to establish convergence results (Theorem \ref{thm:MARINA-P-D} and Corollary \ref{cor:MARINA-P-D}) for different stepsize schedules.

\begin{lemma}[Key bounds]\label{le:MARINA-P-D:main_lemma}
Let Assumptions \ref{as:existence_of_minimizer} and \ref{as:fi_convexity} hold. Define $W^t \eqdef \cb{w_1^t, \dots, w_n^t}$.
Then, for a single iteration of \algname{MARINA-P} (Algorithm \ref{alg:MARINA-P}) with $\gamma_t > 0$, we have the following bounds:

\textbf{1.}

\begin{eqnarray}\label{eq:le:MARINA-P-D:main_lemma_term_1}
\Exp{\sqnorm{\xtpo - x^*} \mid \xt, W^t} &\le&  \sqnorm{\xt - x^*} - 2\gamma_t \left(\suminn f_i(w_i^t) - f(x^*)\right) + \lambda \gamma_t^2 \suminn \twonorm{\pfi{w_i^t}}^2 \nonumber \\ 
&&+ \frac{1}{\lambda} \suminn \sqnorm{w_i^t - x^t} + \gamma_t^2 \twonorm{\suminn\pfi{w_i^t}}^2,
\end{eqnarray}
where $\lambda>0$;

\textbf{2.}

\begin{eqnarray}\label{eq:le:MARINA-P-D:main_lemma_term_2}
\suminn \Exp{\sqnorm{w_i^{t+1} - x^{t+1}} \mid x^t, W^t} \le (1-p) \suminn \sqnorm{w_i^t - x^t} + (1-p)\omega \gamma_t^2 \sqnorm{\suminn \pfi{w_i^t}}.\nonumber\\
\end{eqnarray}
\end{lemma}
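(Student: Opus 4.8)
The plan is to prove the two bounds separately, mirroring the structure of Lemma~\ref{le:EF21-P-D:main_lemma} but adapting to two features of \algname{MARINA-P}: the model shifts $w_i^t$ now differ across workers, and the compressors $\mathcal{Q}_i^t \in \mathbb{U}(\omega)$ are \emph{unbiased} rather than contractive. Throughout, I condition on $x^t$ and $W^t$, under which the gradient step $x^{t+1} = x^t - \gamma_t \suminn \pfi{w_i^t}$ is \emph{deterministic}; the only randomness is the Bernoulli draw $c^t$ and the compressor samples $\{\mathcal{Q}_i^t\}$.

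For the first bound, I would start from the exact identity $\sqnorm{x^{t+1} - x^*} = \sqnorm{x^t - x^*} - 2\gamma_t \lin{\suminn \pfi{w_i^t}, x^t - x^*} + \gamma_t^2 \sqnorm{\suminn \pfi{w_i^t}}$, which already delivers the last term $\gamma_t^2 \sqnorm{\suminn \pfi{w_i^t}}$. I then split each inner product as $\lin{\pfi{w_i^t}, x^t - x^*} = \lin{\pfi{w_i^t}, w_i^t - x^*} + \lin{\pfi{w_i^t}, x^t - w_i^t}$. The first piece is lower-bounded by $f_i(w_i^t) - f_i(x^*)$ via the subgradient inequality (Assumption~\ref{as:fi_convexity}), and after averaging with $f(x^*) = \suminn f_i(x^*)$ this produces the term $-2\gamma_t(\suminn f_i(w_i^t) - f(x^*))$. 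The second piece I control with Young's inequality \eqref{eq:langles_gen_young} applied with parameter $s \eqdef \nfr{1}{\lambda \gamma_t}$, which is exactly the choice that yields the matching pair $\lambda \gamma_t^2 \suminn \twonorm{\pfi{w_i^t}}^2$ and $\frac{1}{\lambda}\suminn \sqnorm{w_i^t - x^t}$. Collecting the three contributions gives \eqref{eq:le:MARINA-P-D:main_lemma_term_1}.

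For the second bound, I would condition on $c^t$. When $c^t = 1$ (probability $p$) we have $w_i^{t+1} = x^{t+1}$, contributing zero. When $c^t = 0$ (probability $1-p$), I write $w_i^{t+1} - x^{t+1} = (w_i^t - x^t) - (\delta - \mathcal{Q}_i^t \delta)$ with the common vector $\delta \eqdef x^{t+1} - x^t$. Expanding the square and taking expectation over $\mathcal{Q}_i^t$, the cross term vanishes \emph{exactly} because $\mathbb{E}[\mathcal{Q}_i^t \delta] = \delta$ (unbiasedness, Definition~\ref{def:unbiased_compressor}), leaving $\sqnorm{w_i^t - x^t} + \mathbb{E}[\twonorm{\mathcal{Q}_i^t \delta - \delta}^2] \le \sqnorm{w_i^t - x^t} + \omega \twonorm{\delta}^2$. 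Multiplying by $(1-p)$, averaging via $\suminn$, and substituting $\twonorm{\delta}^2 = \gamma_t^2 \sqnorm{\suminn \pfi{w_i^t}}$ yields \eqref{eq:le:MARINA-P-D:main_lemma_term_2}.

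The step requiring the most care is the vanishing of the cross term in the second bound: this is precisely where unbiasedness (rather than contractivity) is exploited, and it is the structural reason \algname{MARINA-P} need not carry an error-feedback state. I would also be careful that $\delta = x^{t+1} - x^t$ is identical across workers, so $\twonorm{\delta}^2$ factors out as the single quantity $\gamma_t^2 \sqnorm{\suminn \pfi{w_i^t}}$ rather than a per-worker sum; and that only the \emph{marginal} properties of each $\mathcal{Q}_i^t$ are invoked, so no independence or correlation assumption across workers is needed for these bounds (correlation, as in Perm$K$, enters only later in the smooth-case variance arguments, not here).
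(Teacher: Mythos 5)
Your proposal is correct and follows essentially the same route as the paper's proof: the subgradient inequality plus Young's inequality \eqref{eq:langles_gen_young} with the matched pair $\lambda\gamma_t^2\suminn\twonorm{\pfi{w_i^t}}^2$ and $\frac{1}{\lambda}\suminn\sqnorm{w_i^t-x^t}$ for the first bound, and conditioning on $c^t$ followed by the variance decomposition (cross term vanishing by unbiasedness of $\mathcal{Q}_i^t$) for the second. Your remarks that $\delta = x^{t+1}-x^t$ is common to all workers and that no cross-worker independence is needed are both accurate and consistent with the paper's argument.
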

\begin{proof}
We prove each bound separately.

\textbf{1.}

To establish the first bound, we begin by applying the definition of subgradient:
\begin{eqnarray}\label{eq:smooth_convex_bound}
f_i(x^*) \geq f_i(w_i^t) + \lin{\pfi{w_i^t}, x^* - w_i^t} \quad \forall i \in [n].
\end{eqnarray}
Summing over all $i \in [n]$, we obtain
\begin{eqnarray}
\suminn f_i(x^*) \geq \suminn f_i(w_i^t) + \suminn \lin{\pfi{w_i^t}, x^* - w_i^t},
\end{eqnarray}
which implies
\begin{eqnarray}\label{eq:convexity_sum}
\suminn \lin{\pfi{w_i^t}, w_i^t - x^*} \geq \suminn f_i(w_i^t) - f(x^*).
\end{eqnarray}

Next, we apply \eqref{eq:langles_gen_young} with $s\eqdef \lambda \gamma_t$:
\begin{eqnarray}\label{eq:young_langles_application}
2\gamma_t \suminn \lin{\pfi{w_i^t}, w_i^t - x^t} \le \lambda \gamma_t^2 \suminn \twonorm{\pfi{w_i^t}}^2 + \frac{1}{\lambda} \suminn \sqnorm{w_i^t - x^t},
\end{eqnarray}
where $\lambda>0$ is a constant to be specified later.

Using the linearity of inner product, we derive
\begin{eqnarray}\label{eq:key_inner_product}
&&-2\gamma_t \lin{\suminn \pfi{w_i^t}, \xt - x^*} \nonumber \\
&&= -2\gamma_t \suminn \lin{\pfi{w_i^t}, \xt - x^*} \nonumber \\
&&= -2\gamma_t \suminn \lin{\pfi{w_i^t}, w_i^t - x^*} + 2\gamma_t \suminn \lin{\pfi{w_i^t}, w_i^t - \xt} \nonumber \\
&&\letext{\eqref{eq:convexity_sum}, \eqref{eq:young_langles_application}} -2\gamma_t \left(\suminn f_i(w_i^t) - f(x^*)\right) + \lambda \gamma_t^2 \suminn \twonorm{\pfi{w_i^t}}^2 + \frac{1}{\lambda} \suminn \sqnorm{w_i^t - \xt}.\nonumber \\ 
\end{eqnarray}

Finally, we establish the first bound \eqref{eq:le:MARINA-P-D:main_lemma_term_1}:
\begin{eqnarray}
&&\Exp{\sqnorm{\xtpo - x^*} \mid \xt, W^t} \nonumber \\
&&\eqtext{\eqref{eq:alg:marina_p:update}} \Exp{\sqnorm{\xt - \gamma_t \suminn \pfi{w_i^t} - x^*} \mid \xt, W^t} \nonumber \\
&&= \sqnorm{\xt - x^*} - 2\gamma_t \lin{\suminn \pfi{w_i^t}, \xt - x^*} + \gamma_t^2 \sqnorm{\suminn \pfi{w_i^t}} \nonumber \\
&&\letext{\eqref{eq:key_inner_product}} \sqnorm{\xt - x^*} - 2\gamma_t \left(\suminn f_i(w_i^t) - f(x^*)\right) + \lambda \gamma_t^2 \suminn \twonorm{\pfi{w_i^t}}^2 + \frac{1}{\lambda} \suminn \sqnorm{w_i^t - x^t} \nonumber \\ 
&& \quad + \gamma_t^2 \twonorm{\suminn\pfi{w_i^t}}^2.\nonumber \\
\end{eqnarray}

\textbf{2.}

For the second bound, we consider the definition of $w_i^{t+1}$ from step \eqref{eq:wtpo_update} of Algorithm \ref{alg:MARINA-P}:
\begin{eqnarray}
w_i^{t+1} = \begin{cases} 
    x^{t+1} & \text{with probability } p, \\
    w_i^t + \cQ_i^t(x^{t+1} - x^t) & \text{with probability } 1-p.\end{cases}
\end{eqnarray}
Applying the variance decomposition \eqref{eq:bvd} and tower property \eqref{eq:tower_property}, we establish the second bound \eqref{eq:le:MARINA-P-D:main_lemma_term_2}:
\begin{eqnarray}
&&\suminn \Exp{\sqnorm{w_i^{t+1} - x^{t+1}} \mid x^t, W^t} \nonumber \\
&& \eqtext{\eqref{eq:tower_property}} (1-p) \suminn \Exp{\sqnorm{w_i^t + \cQ_i(x^{t+1} - x^t) - x^{t+1}} \mid x^t, W^t} \nonumber \\
&&= (1-p) \suminn \Exp{\sqnorm{w_i^t - x^t - \cQ_i(x^{t+1} - x^t) - (x^{t+1} - x^t)} \mid x^t, W^t} \nonumber \\
&&\eqtext{\eqref{eq:bvd}} (1-p) \suminn \sqnorm{w_i^t - x^t} + (1-p) \suminn \Exp{\sqnorm{\cQ_i(x^{t+1} - x^t) - (x^{t+1} - x^t)} \mid x^t, W^t} \nonumber \\
&&\le (1-p) \suminn \sqnorm{w_i^t - x^t} + (1-p)\omega \sqnorm{x^{t+1} - x^t} \nonumber \\
&&\eqtext{\eqref{eq:alg:marina_p:update}} (1-p) \suminn \sqnorm{w_i^t - x^t} + (1-p)\omega \gamma_t^2 \sqnorm{\suminn \pfi{w_i^t}}.
\end{eqnarray}
\end{proof}

With these two key bounds established in Lemma \ref{le:MARINA-P-D:main_lemma}, we can now proceed to the descent lemma. This lemma describes the one-step behavior of Algorithm \ref{alg:MARINA-P} for any $\gamma_t>0$ and will be crucial in establishing our convergence rates.

\begin{lemma}[Descent lemma]\label{le:MARINA-P-D:combination}
Let the conditions of Lemma \ref{le:MARINA-P-D:main_lemma} hold. Define the Lyapunov function
\begin{eqnarray}\label{eq:lyapunov_def}
V^t_{\lambda} \eqdef \sqnorm{\xt - x^*} + \frac{1}{\lambda p} \suminn \sqnorm{w_i^t - \xt},
\end{eqnarray}
where $\lambda>0$ is a constant. Then 
 \begin{eqnarray}
\Exp{V^{t+1}_{\lambda} \mid x^t, W^t} &\le& V^{t}_{\lambda} - 2\gamma_t \left(\suminn f_i(w_i^t) - f(x^*)\right) + \lambda \gamma_t^2 \suminn \twonorm{\pfi{w_i^t}}^2 \nonumber \\ 
&& \quad + \gamma_t^2 \left(1 + \frac{(1-p)\omega}{p\lambda}\right) \sqnorm{\suminn \pfi{w_i^t}}.
\end{eqnarray}
\end{lemma}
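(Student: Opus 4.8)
The plan is to obtain the claimed one-step inequality as a weighted linear combination of the two estimates furnished by Lemma~\ref{le:MARINA-P-D:main_lemma}, mirroring the strategy used for \algname{EF21-P} in Lemma~\ref{le:EF21-P-D:combination}. Since the Lyapunov function decomposes as $V^{t+1}_{\lambda} = \sqnorm{\xtpo - x^*} + \fr{1}{\lambda p}\suminn \sqnorm{w_i^{t+1} - \xtpo}$, taking the conditional expectation $\Exp{\cdot \mid \xt, W^t}$ splits it into exactly the two quantities already controlled by \eqref{eq:le:MARINA-P-D:main_lemma_term_1} and \eqref{eq:le:MARINA-P-D:main_lemma_term_2}. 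The only design choice is the weight $\fr{1}{\lambda p}$ placed on the second bound, and I would verify that this choice is precisely what makes the proof close.

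First I would add $\fr{1}{\lambda p}$ times \eqref{eq:le:MARINA-P-D:main_lemma_term_2} to \eqref{eq:le:MARINA-P-D:main_lemma_term_1}. This produces two occurrences of the state-drift term $\suminn \sqnorm{w_i^t - \xt}$: a coefficient $\fr{1}{\lambda}$ inherited from the first bound, and a coefficient $\fr{1-p}{\lambda p}$ arising from the $(1-p)$ contraction factor in the second bound. The key simplification --- the only place where the particular scaling matters --- is the identity $\fr{1}{\lambda} + \fr{1-p}{\lambda p} = \fr{p + (1-p)}{\lambda p} = \fr{1}{\lambda p}$, which I would check directly. This is exactly what reassembles the drift term: the combined coefficient on $\suminn \sqnorm{w_i^t - \xt}$ is again $\fr{1}{\lambda p}$, so together with $\sqnorm{\xt - x^*}$ it reconstitutes $V^t_{\lambda}$.

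It remains to collect the gradient-norm contributions. The two $\gamma_t^2 \sqnorm{\suminn \pfi{w_i^t}}$ terms --- the coefficient $1$ from the first bound and the coefficient $\fr{(1-p)\omega}{\lambda p}$ from the weighted second bound --- combine into the factor $1 + \fr{(1-p)\omega}{p\lambda}$, while the term $\lambda \gamma_t^2 \suminn \twonorm{\pfi{w_i^t}}^2$ and the suboptimality term $-2\gamma_t\rb{\suminn f_i(w_i^t) - f(x^*)}$ pass through the first bound unchanged. Assembling these yields the stated inequality. There is no substantive obstacle here: the argument is a routine combination of two already-established bounds, and its entire content is the bookkeeping confirming that the weight $\fr{1}{\lambda p}$ is the unique choice collapsing the drift coefficient back to $\fr{1}{\lambda p}$. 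I would only take care to keep the conditioning on $(\xt, W^t)$ consistent throughout and to track the $\gamma_t^2$ terms separately from the drift terms during the collection step.
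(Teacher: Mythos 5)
Your proposal is correct and follows exactly the paper's own argument: add $\tfrac{1}{\lambda p}$ times the second bound of Lemma~\ref{le:MARINA-P-D:main_lemma} to the first, use $\tfrac{1}{\lambda} + \tfrac{1-p}{\lambda p} = \tfrac{1}{\lambda p}$ to reassemble the drift term into $V^t_{\lambda}$, and collect the two $\gamma_t^2 \sqnorm{\suminn \pfi{w_i^t}}$ contributions into the factor $1 + \tfrac{(1-p)\omega}{p\lambda}$. No gaps; nothing further is needed.
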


\begin{proof}
Recall that Lemma \ref{le:MARINA-P-D:main_lemma} provides us with two key bounds:
\begin{enumerate}
\item
\begin{eqnarray}\label{thm:MARINA-P-D:main:term_one}
\Exp{\sqnorm{\xtpo - x^*} \mid \xt, W^t} &\leq& \sqnorm{\xt - x^*} - 2\gamma_t \left(\suminn f_i(w_i^t) - f(x^*)\right) + \frac{1}{\lambda} \suminn \sqnorm{w_i^t - \xt} \nonumber \\
&& + \lambda \gamma_t^2 \suminn \twonorm{\pfi{w_i^t}}^2 + \gamma_t^2 \sqnorm{\suminn \pfi{w_i^t}},\nonumber \\ 
\end{eqnarray}
where $\lambda>0$;
\item
\begin{eqnarray}\label{thm:MARINA-P-D:main:term_two}
\suminn \Exp{\sqnorm{w_i^{t+1} - x^{t+1}} \mid \xt, W^t} &\leq& (1-p) \suminn \sqnorm{w_i^t - \xt} + (1-p)\omega \gamma_t^2 \sqnorm{\suminn \pfi{w_i^t}}.\nonumber \\ 
\end{eqnarray}
\end{enumerate}

To obtain our descent lemma, we combine \eqref{thm:MARINA-P-D:main:term_one} with $\fr{1}{\lambda p}$ times \eqref{thm:MARINA-P-D:main:term_two}:
\begin{eqnarray}
&&\Exp{V^{t+1}_{\lambda} \mid \xt, W^t}\nonumber \\
&& \eqtext{\eqref{eq:lyapunov_def}} \Exp{\sqnorm{\xtpo - x^*} \mid \xt, W^t} + \frac{1}{\lambda p} \suminn \Exp{\sqnorm{w_i^{t+1} - x^{t+1}} \mid \xt, W^t} \nonumber \\
&&\leq \sqnorm{\xt - x^*} - 2\gamma_t \left(\suminn f_i(w_i^t) - f(x^*)\right) + \frac{1}{\lambda} \suminn \sqnorm{w_i^t - \xt} \nonumber \\
&& \quad+ \lambda \gamma_t^2 \suminn \twonorm{\pfi{w_i^t}}^2 + \gamma_t^2 \sqnorm{\suminn \pfi{w_i^t}} \nonumber \\
&& \quad+ \frac{1}{\lambda p} \left((1-p) \suminn \sqnorm{w_i^t - \xt} + (1-p)\omega \gamma_t^2 \sqnorm{\suminn \pfi{w_i^t}}\right)
\end{eqnarray}
\begin{eqnarray}
&&= \sqnorm{\xt - x^*} + \frac{1}{\lambda p} \suminn \sqnorm{w_i^t - \xt} - 2\gamma_t \left(\suminn f_i(w_i^t) - f(x^*)\right) \nonumber \\
&& \quad+ \lambda \gamma_t^2 \suminn \twonorm{\pfi{w_i^t}}^2 + \gamma_t^2 \left(1 + \frac{(1-p)\omega}{p\lambda}\right) \sqnorm{\suminn \pfi{w_i^t}} \nonumber \\
&&\eqtext{\eqref{eq:lyapunov_def}} V_{\lambda}^t - 2\gamma_t \left(\suminn f_i(w_i^t) - f(x^*)\right) \nonumber \\
&& \quad+ \lambda \gamma_t^2 \suminn \twonorm{\pfi{w_i^t}}^2 + \gamma_t^2 \left(1 + \frac{(1-p)\omega}{p\lambda}\right) \sqnorm{\suminn \pfi{w_i^t}}. \nonumber \\
\end{eqnarray}

This completes the proof of the descent lemma.
\end{proof}

\subsection{Proof of the Theorem \ref{thm:main:MARINA-P-D}}
With the descent lemma established, we can now proceed to the main theoretical result of this paper. Before we state and prove the theorem, it is important to make a notational remark to avoid confusion.

\begin{remark}
In Lemmas \ref{le:MARINA-P-D:main_lemma} and \ref{le:MARINA-P-D:combination}, we used an auxiliary term $\lambda > 0$ arising from the application of Young's inequality. This term also appeared in the definition of the Lyapunov function $V_{\lambda}^{t}$. In the following theorem, we will show how to choose this $\lambda$ optimally and denote it as $\lambda_*$. Consequently, we define a Lyapunov function $V^{t}$ such that $V^{t} \eqdef V^{t}_{\lambda_*}$. For simplicity of notation, we will use $V^{t}$ instead of $V^{t}_{\lambda_*}$ in the theorem statement and proof.
\end{remark}

Now, let us restate and prove the main theorem.
\begin{theorem}[Theorem \ref{thm:main:MARINA-P-D}]\label{thm:MARINA-P-D}
Let Assumptions \ref{as:existence_of_minimizer}, \ref{as:fi_convexity} and \ref{as:fi_lipschitzness} hold. Define a Lyapunov function
$V^t \eqdef \sqnorm{\xt - x^*} + \frac{1}{\lambda_* p} \suminn \sqnorm{w_i^t - \xt}$,
where $\lambda_* \eqdef \fr{\avg{L}_0}{\wt{L}_0}\sqrt{\fr{(1-p)\omega}{p}}$.
Define also a constant 
$\wt{B}_* \eqdef \avg{L}_0^2 + 2{\avg{L}_0}{\wt{L}_0}\sqrt{\fr{(1-p)\omega}{p}}$.
Let $\cb{w_i^t}$ be the sequence produced by \algname{MARINA-P} (Algorithm \ref{alg:MARINA-P}). Define $\avg{w}_i^T \eqdef \frac{1}{T} \sum_{t=0}^{T-1} w_i^t$ and $\what{w}_i^T \eqdef \frac{1}{\sum_{t=0}^{T-1} \gamma_t}\sum_{t=0}^{T-1} \gamma_t w_i^t$ for all $i \in [n]$. 

\textbf{1. (Constant stepsize).} If $\gamma_t \eqdef \gamma >0$, then
\begin{eqnarray}\label{eq:MARINA-P-D:constant_stepsize_rate}
\Exp{\suminn f_i(\avg{w}_i^T) - f(x^*)} \le \frac{V^0}{2\gamma T} + \frac{\wt{B}_* \gamma}{2}.
\end{eqnarray}
If, moreover, the optimal $\gamma$ is chosen, i.e., 
\begin{eqnarray}
\gamma \eqdef \fr{1}{\sqrt{T}} \sqrt{\fr{V^0}{\wt{B}_*}},
\end{eqnarray}
 then 
\begin{eqnarray}\label{eq:MARINA-P-D:constant_stepsize_rate_optimal}
\Exp{\suminn f_i(\avg{w}_i^T) - f(x^*)} \le \frac{\sqrt{\wt{B}_* V^0}}{\sqrt{T}}.
\end{eqnarray}

\textbf{2. (Polyak stepsize).}
If $\gamma_t$ is chosen as
\begin{eqnarray}
\gamma_t &\eqdef& \fr{\suminn f_i(w_i^t) - f(x^*)}{\sqnorm{\suminn \pfi{w_i^t}} + 2 \twonorm{\suminn \pfi{w_i^t}}\sqrt{\suminn \twonorm{\pfi{w_i^t}}^2}  \sqrt{\fr{(1-p)\omega}{p}}},
\end{eqnarray}
then
\begin{eqnarray}\label{eq:MARINA-P-D:polyak_stepsize_rate}
\Exp{\suminn f_i(\avg{w}_i^T) - f(x^*)} \le \frac{\sqrt{\wt{B}_* V^0}}{\sqrt{T}}.
\end{eqnarray}

\textbf{3. (Decreasing stepsize).} If $\gamma_t$ is chosen as 
\begin{eqnarray}
\gamma_t \eqdef \fr{\gamma_0}{\sqrt{t+1}},
\end{eqnarray}
 then
\begin{eqnarray}\label{eq:MARINA-P-D:decreasing_stepsize_rate}
\Exp{\suminn f_i(\avg{w}_i^T) - f(x^*)} \leq \frac{V^0 + 2\gamma_0^2 \wt{B}_* \log(T + 1)}{\gamma_0\sqrt{T}}.
\end{eqnarray}
If, moreover, the optimal $\gamma_0$ is chosen, i.e.,
\begin{eqnarray}
\gamma_0 \eqdef \sqrt{\fr{V_0}{2\wt{B}_* \log(T+1)}},
\end{eqnarray}
 then 
\begin{eqnarray}\label{eq:MARINA-P-D:decr_stepsize_rate}
\Exp{\suminn f_i(\avg{w}_i^T) - f(x^*)} \leq 2\sqrt{2\wt{B}_* V_0}\sqrt{\frac{\log(T+1)}{T}}.
\end{eqnarray}
\end{theorem}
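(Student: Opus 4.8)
The plan is to run the same three-stage argument used for \algname{EF21-P}, now starting from the descent lemma (Lemma \ref{le:MARINA-P-D:combination}). First I would collapse the two subgradient-dependent terms on its right-hand side using the uniform subgradient bounds implied by Assumption \ref{as:fi_lipschitzness}: since $\twonorm{\pfi{w_i^t}} \le L_{0,i}$, we have $\suminn \twonorm{\pfi{w_i^t}}^2 \le \wt{L}_0^2$ and, by the triangle inequality \eqref{eq:jensen_1}, $\sqnorm{\suminn \pfi{w_i^t}} \le \avg{L}_0^2$. Substituting these into Lemma \ref{le:MARINA-P-D:combination} and taking full expectation via the tower property \eqref{eq:tower_property} yields, for any fixed $\lambda>0$,
\[
\Exp{V^{t+1}_{\lambda}} \le \Exp{V^t_{\lambda}} - 2\gamma_t \Exp{\suminn f_i(w_i^t) - f(x^*)} + \gamma_t^2\rb{\lambda \wt{L}_0^2 + \rb{1 + \fr{(1-p)\omega}{p\lambda}}\avg{L}_0^2}.
\]
The coefficient of $\gamma_t^2$ is convex in $\lambda$ and minimized at $\lambda_* = \fr{\avg{L}_0}{\wt{L}_0}\sqrt{\fr{(1-p)\omega}{p}}$, where it equals exactly $\wt{B}_*$; fixing $\lambda = \lambda_*$ then gives the clean one-step inequality $\Exp{V^{t+1}} \le \Exp{V^t} - 2\gamma_t\Exp{A_t} + \wt{B}_*\gamma_t^2$, where I abbreviate $A_t \eqdef \suminn f_i(w_i^t) - f(x^*)$.

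For the constant-stepsize claim I would telescope this inequality over $t = 0,\dots,T-1$, drop $\Exp{V^T}\ge 0$, and divide by $2\gamma T$; convexity of each $f_i$ gives $f_i(\avg{w}_i^T) \le \fr{1}{T}\sum_t f_i(w_i^t)$, so summing over $i$ converts the averaged $A_t$ into $\suminn f_i(\avg{w}_i^T) - f(x^*)$ and establishes \eqref{eq:MARINA-P-D:constant_stepsize_rate}; optimizing the resulting $\fr{V^0}{2\gamma T} + \fr{\wt{B}_*\gamma}{2}$ over $\gamma$ gives the stated $\gamma$ and the $\cO(\nfr{1}{\sqrt{T}})$ rate. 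The decreasing-stepsize claim is identical except that I telescope against the weighted average $\what{w}_i^T$, use the elementary bounds $\sum_{t=0}^{T-1}\gamma_t \ge \nfr{\gamma_0\sqrt{T}}{2}$ and $\sum_{t=0}^{T-1}\gamma_t^2 \le 2\gamma_0^2\log(T+1)$, and optimize over $\gamma_0$, picking up the extra $\sqrt{\log(T+1)}$ factor.

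The Polyak case is where the real work lies. Rather than fixing $\lambda$ before choosing $\gamma_t$, I would keep $\lambda$ general, minimize the right-hand side of Lemma \ref{le:MARINA-P-D:combination} over $\gamma_t$ — giving $\gamma_t = \nfr{A_t}{C_{\lambda}^t}$ with $C_{\lambda}^t \eqdef \lambda \suminn\twonorm{\pfi{w_i^t}}^2 + \rb{1 + \fr{(1-p)\omega}{p\lambda}}\sqnorm{\suminn\pfi{w_i^t}}$ and per-step decrease $-\nfr{A_t^2}{C_{\lambda}^t}$ — and then minimize $C_{\lambda}^t$ over $\lambda$. This second minimization is exactly the computation that produces the stated stepsize \eqref{eq:marina_polyak_type}, whose denominator is $\min_{\lambda} C_{\lambda}^t = \sqnorm{\suminn\pfi{w_i^t}} + 2\twonorm{\suminn\pfi{w_i^t}}\sqrt{\suminn\twonorm{\pfi{w_i^t}}^2}\sqrt{\fr{(1-p)\omega}{p}}$. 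Substituting yields $\Exp{V^{t+1}} \le \Exp{V^t} - \Exp{\nfr{A_t^2}{C_*^t}}$, and the same Lipschitz bounds as above show $C_*^t \le \wt{B}_*$, so the decrease is at least $\nfr{\Exp{A_t^2}}{\wt{B}_*}$. Telescoping gives $\sum_t \Exp{A_t^2} \le \wt{B}_* V^0$, and then Jensen over $i$ and over $t$ together with the Cauchy--Schwarz step \eqref{eq:cauchy_bunyakovsky_schwarz} (with $X = A_t$, $Y=1$) to pass from $\Exp{A_t}$ to $\sqrt{\Exp{A_t^2}}$ delivers the $\nfr{\sqrt{\wt{B}_* V^0}}{\sqrt{T}}$ bound.

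The one delicate point I expect to be the main obstacle is the bookkeeping around the Young parameter $\lambda$ in the Polyak case: the pointwise-minimizing $\lambda$ depends on the realized ratio $\nfr{\sqnorm{\suminn\pfi{w_i^t}}}{\suminn\twonorm{\pfi{w_i^t}}^2}$ and hence varies with $t$, whereas the Lyapunov function $V^t$ is defined with the single fixed value $\lambda_*$, so a naive telescope with a time-varying Lyapunov parameter is not justified. I would resolve this by observing that the descent lemma holds for \emph{every} fixed $\lambda$, so I can apply it with $\lambda = \lambda_*$ and the corresponding fixed-$\lambda_*$ Polyak stepsize $\gamma_t = \nfr{A_t}{C_{\lambda_*}^t}$; the monotone bound $C_{\lambda_*}^t \le \wt{B}_*$ already yields decrease $\nfr{\Exp{A_t^2}}{\wt{B}_*}$, keeping the Lyapunov telescoping clean, and the per-iteration denominator differs from \eqref{eq:marina_polyak_type} only through the choice of $\lambda$, which affects neither the bound $C^t \le \wt{B}_*$ nor the final rate. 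Carefully checking that this substitution is legitimate — rather than trying to force the genuinely $t$-dependent minimizer into a fixed Lyapunov — is the part that requires the most care.
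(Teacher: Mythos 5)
Your treatment of the constant and decreasing stepsizes is correct and coincides with the paper's proof step for step: the same Lipschitz bounds $\suminn\twonorm{\pfi{w_i^t}}^2 \le \wt{L}_0^2$ and $\twonorm{\suminn\pfi{w_i^t}} \le \avg{L}_0$, the same minimization of the $\gamma_t^2$ coefficient over $\lambda$ producing $\lambda_*$ and $\wt{B}_*$, and the same telescoping and Jensen arguments.

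The Polyak case is where the real issue lies, and your diagnosis of the $\lambda$ bookkeeping is exactly right --- but your resolution does not prove Part 2 as stated. The stepsize \eqref{eq:marina_polyak_type} has denominator $D_t \eqdef \min_{\lambda>0} C_{\lambda}^t = \sqnorm{\suminn\pfi{w_i^t}} + 2\twonorm{\suminn\pfi{w_i^t}}\sqrt{\suminn\twonorm{\pfi{w_i^t}}^2}\sqrt{\nfr{(1-p)\omega}{p}}$, i.e., it is obtained by minimizing over a \emph{per-iteration} $\lambda$. Your substitute stepsize $\nfr{A_t}{C_{\lambda_*}^t}$ uses the fixed $\lambda_*$ and is therefore never larger than required by the fixed-$\lambda_*$ descent inequality; with it your telescoping is airtight (indeed $C_{\lambda_*}^t \le \wt{B}_*$ pointwise) and yields the claimed $\nfr{\sqrt{\wt{B}_* V^0}}{\sqrt{T}}$ bound --- but for a \emph{different, smaller} stepsize than the one in the theorem, since $D_t \le C_{\lambda_*}^t$. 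Plugging the stated $\gamma_t = \nfr{A_t}{D_t}$ into the fixed-$\lambda_*$ descent inequality gives a per-step change of $-\fr{A_t^2}{D_t}\rb{2 - \nfr{C_{\lambda_*}^t}{D_t}}$, which is not even guaranteed to be negative when $\sqnorm{\suminn\pfi{w_i^t}}$ is small relative to $\suminn\twonorm{\pfi{w_i^t}}^2$ (near-cancelling subgradients make $D_t$ tiny while $C_{\lambda_*}^t$ stays bounded away from zero), so the gap cannot be closed by the monotone bound $C^t \le \wt{B}_*$ alone. For what it is worth, the paper's own proof has precisely the defect you flagged: it minimizes over $\lambda$ per iteration, which silently changes the Lyapunov function from step to step, and then telescopes the fixed-$\lambda_*$ Lyapunov function anyway (compare \eqref{eq:distributed_theorem1_inequality_polyak}--\eqref{eq:distributed_theorem1_inequality2_polyak}). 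So your version is the one that is actually rigorous, but it establishes the rate for a modified stepsize; to match the theorem verbatim one must either justify the time-varying Lyapunov parameter (which, as you note, breaks the telescoping) or amend the stated stepsize to the fixed-$\lambda_*$ one.
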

\begin{proof}
We will prove each part of the theorem separately, starting with some general bounds that will be useful throughout the proof.

From Assumption \ref{as:fi_lipschitzness}, we can infer that
\begin{eqnarray}
\twonorm{\partial f_i(x)} &\le L_{0,i} \quad \forall x \in \mathbb{R}^d \;\;\text{and}\;\; \forall i\in [n].
\end{eqnarray}
This implies
\begin{eqnarray}\label{eq:proof_wtL0_bound}
\suminn \twonorm{\pfi{w_i^t}}^2 &\le& \wt{L}_0^2, \quad \forall w_i^t \in \R^d \text{ and } i\in [n],
\end{eqnarray}
where $\wt{L}_0 \eqdef \sqrt{\suminn L_{0,i}^2}$,
and 
\begin{eqnarray}\label{eq:proof_avgL0_bound}
\twonorm{\suminn \pfi{w_i^t}} &\letext{\eqref{eq:jensen_1}}& \suminn\twonorm{\pfi{w_i^t}} \le \avg{L}_0, \quad \forall w_i^t \in \R^d \text{ and } i\in [n],
\end{eqnarray}
where $\avg{L}_0 \eqdef \suminn L_{0,i}$.

Now, we proceed to prove each part of the theorem.

\textbf{1. (Constant stepsize).}

Using \eqref{eq:proof_wtL0_bound}, \eqref{eq:proof_avgL0_bound}, Lemma \ref{le:MARINA-P-D:combination}, the tower property of expectation \eqref{eq:tower_property}, and choosing constant stepsize $\gamma_t \eqdef \gamma > 0$, we obtain

\begin{eqnarray}\label{eq:distributed_nonsmooth_inequality}
&&\Exp{V^{t+1}} \nonumber \\
&&\leq \Exp{V^t} - 2\gamma \Exp{\suminn f_i(w_i^t) - f(x^*)} + \lambda \gamma^2 \suminn \Exp{\twonorm{\pfi{w_i^t}}^2} \nonumber \\
&&\quad + \left(1 + \frac{(1-p)\omega}{p\lambda}\right) \gamma^2 \Exp{\sqnorm{\suminn \pfi{w_i^t}}} \nonumber \\
&&\letext{\eqref{eq:proof_wtL0_bound}, \eqref{eq:proof_avgL0_bound}} \Exp{V^t} - 2\gamma \Exp{\suminn f_i(w_i^t) - f(x^*)} + \wt{B}_{\lambda} \gamma^2,
\end{eqnarray}
where $\wt{B}_{\lambda} \eqdef \lambda\wt{L}_0^2 + \avg{L}_0^2\rb{1 + \fr{(1-p)\omega}{\lambda p}}$.

Since each $f_i$ for all $i \in [n]$ is convex, by Jensen's inequality \eqref{eq:jensen_general}, we have 
\begin{eqnarray}\label{eq:proof:constant_stepsize_bound_gen}
\Exp{\suminn f_i(\avg{w}_i^T) - f(x^*)} &\letext{\eqref{eq:jensen_general}}& \Exp{\suminn \frac{1}{T} \sum_{t=0}^{T-1} f_i({w}_i^t) - f(x^*)} \nonumber \\
&\leq& \frac{1}{T} \sum_{t=0}^{T-1} \Exp{\suminn f(w_i^t) - f(x^*)} \nonumber \\
&\letext{\eqref{eq:distributed_nonsmooth_inequality}}& \frac{\Exp{V^0} - \Exp{V^T}}{2\gamma T} + \frac{\wt{B}_{\lambda} \gamma}{2} \nonumber \\
&\letext{$V^T \geq 0$}& \frac{V^0}{2\gamma T} + \frac{\wt{B}_{\lambda} \gamma}{2}.
\end{eqnarray}

To optimize this bound, we need to find the optimal $\lambda$. Note that $\phi(\lambda) \eqdef  \lambda \wt{L}_0^2 + \avg{L}_0^2\rb{1 + \fr{(1-p)\omega}{\lambda p}}$ is a convex function on $(0, +\infty)$ for any fixed values $\wt{L}_0>0$, $\avg{L}_0>0$, $p \in (0,1]$, $\omega>0$.

Therefore, we define the optimal $\lambda$ value (denoted $\lambda_*$) as
\begin{eqnarray}\label{eq:const:lambda_star_choice}
\lambda_* \eqdef \argmin_{\lambda > 0} \left( \lambda\wt{L}_0^2 + \avg{L}_0^2\rb{1 + \fr{(1-p)\omega}{\lambda p}} \right) = \fr{\avg{L}_0}{\wt{L}_0}\sqrt{\fr{(1-p)\omega}{p}}.
\end{eqnarray}

Next, we define the optimal $\wt{B}$ value (denoted $\wt{B}_*$) as
\begin{eqnarray}\label{eq:constant_optimal_B}
\wt{B}_* \eqdef \wt{B}_{\lambda_*}= \lambda_*\wt{L}_0^2 + \avg{L}_0^2\rb{1 + \fr{(1-p)\omega}{\lambda_* p}} = \avg{L}_0^2 + 2{\avg{L}_0}{\wt{L}_0}\sqrt{\fr{(1-p)\omega}{p}}.
\end{eqnarray}

Plugging \eqref{eq:constant_optimal_B} into \eqref{eq:proof:constant_stepsize_bound_gen}, we get 

\begin{eqnarray}\label{eq:proof:constant_stepsize_bound_opt_lambda}
\Exp{\suminn f_i(\avg{w}_i^T) - f(x^*)} \letext{\eqref{eq:constant_optimal_B},  \eqref{eq:proof:constant_stepsize_bound_gen}} \frac{V^0}{2\gamma T} + \frac{\wt{B}_{*} \gamma}{2}.
\end{eqnarray}

Thus, we have established \eqref{eq:MARINA-P-D:constant_stepsize_rate}.

To derive the optimal rate \eqref{eq:MARINA-P-D:constant_stepsize_rate_optimal}, we need to find the optimal $\gamma$ stepsize (which we denote $\gamma_*$):

\begin{eqnarray}
\gamma_* \eqdef \argmin_{\gamma} \left( \frac{V^0}{2\gamma T} + \frac{\wt{B}_*  \gamma}{2} \right) = \frac{1}{ \sqrt{T}} \sqrt{\frac{V^0}{\wt{B}_*}}.
\end{eqnarray}

Therefore, choosing $\gamma \eqdef \gamma_*$, \eqref{eq:proof:constant_stepsize_bound_opt_lambda} reduces to 
\begin{eqnarray}
\Exp{\suminn f_i(\avg{w}_i^T) - f(x^*)} \le \frac{V^0}{2\gamma_* T} + \frac{\wt{B}_* \gamma_*}{2} &=& \frac{ \sqrt{V^0 \wt{B}_*}}{\sqrt{T}},
\end{eqnarray}
which gives us \eqref{eq:MARINA-P-D:constant_stepsize_rate_optimal}.

\textbf{2. (Polyak stepsize).}
By Lemma \ref{le:MARINA-P-D:combination}, we have
\begin{eqnarray}\label{eq:distributed_theorem1_inequality_polyak}
\Exp{V^{t+1} \mid x^t, W^t} &\le& V^{t} - 2\gamma_t \left(\suminn f_i(w_i^t) - f(x^*)\right) + \lambda \gamma_t^2 \suminn \twonorm{\pfi{w_i^t}}^2 \nonumber \\
&&\quad + \gamma_t^2 \left(1 + \frac{(1-p)\omega}{p\lambda}\right) \sqnorm{\suminn \pfi{w_i^t}}.
\end{eqnarray}

We choose the Polyak stepsize $\gamma_t$ as the one that minimizes the right-hand side of \eqref{eq:distributed_theorem1_inequality_polyak}:
\begin{eqnarray}\label{eq:gamma_t_expr_1}
\gamma_t &\eqdef& \argmin_{\gamma} \left\{ V^{t} - 2\gamma \left(\suminn f_i(w_i^t) - f(x^*)\right) + \lambda \gamma^2 \suminn \twonorm{\pfi{w_i^t}}^2 \right.\nonumber \\
&& \left.+ \rb{1 + \frac{(1-p)\omega}{p\lambda}}\gamma^2 \sqnorm{\suminn \pfi{w_i^t}}\right\}\nonumber \\
&=& \fr{\suminn f_i(
w_i^t) - f(x^*)}{\lambda \suminn \twonorm{\pfi{w_i^t}}^2 + \rb{1 + \frac{(1-p)\omega}{p\lambda}}\sqnorm{\suminn \pfi{w_i^t}}}.
\end{eqnarray}

Note that the denominator in \eqref{eq:gamma_t_expr_1} is a convex function of $\lambda$.
Therefore, similar to \eqref{eq:const:lambda_star_choice}, we can choose the optimal $\lambda$ as

\begin{eqnarray}
\lambda_* &\eqdef& \argmin_{\lambda > 0} \left(\lambda \suminn \twonorm{\pfi{w_i^t}}^2 + \rb{1 + \frac{(1-p)\omega}{p\lambda}}\sqnorm{\suminn \pfi{w_i^t}}  \right) \nonumber \\
&=& \fr{\twonorm{\suminn \pfi{w_i^t}}}{ \sqrt{\suminn \twonorm{\pfi{w_i^t}}^2} }\sqrt{\fr{(1-p)\omega}{p}},
\end{eqnarray}
and thus
\begin{eqnarray}
&&\lambda_* \suminn \twonorm{\pfi{w_i^t}}^2 + \rb{1 + \frac{(1-p)\omega}{p\lambda_*}}\sqnorm{\suminn \pfi{w_i^t}} \nonumber \\
&& = \rb{\fr{\twonorm{\suminn \pfi{w_i^t}}}{ \sqrt{\suminn \twonorm{\pfi{w_i^t}}^2} }\sqrt{\fr{(1-p)\omega}{p}}}\suminn \twonorm{\pfi{w_i^t}}^2 \nonumber \\
&& \quad + \rb{1 + \frac{(1-p)\omega}{p \rb{\fr{\twonorm{\suminn \pfi{w_i^t}}}{ \sqrt{\suminn \twonorm{\pfi{w_i^t}}^2} }\sqrt{\fr{(1-p)\omega}{p}}}}}\sqnorm{\suminn \pfi{w_i^t}}\nonumber \\
 && = \sqnorm{\suminn \pfi{w_i^t}} + 2 \twonorm{\suminn \pfi{w_i^t}}\sqrt{\suminn \twonorm{\pfi{w_i^t}}^2}  \sqrt{\fr{(1-p)\omega}{p}}.
\end{eqnarray}

Therefore, we derive the final expression for our Polyak stepsize:
\begin{eqnarray}\label{eq:gamma_t_expr}
\gamma_t &\eqdef& \fr{\suminn f_i(w_i^t) - f(x^*)}{\sqnorm{\suminn \pfi{w_i^t}} + 2 \twonorm{\suminn \pfi{w_i^t}}\sqrt{\suminn \twonorm{\pfi{w_i^t}}^2}  \sqrt{\fr{(1-p)\omega}{p}}}.
\end{eqnarray}

Next, plugging \eqref{eq:gamma_t_expr} into \eqref{eq:distributed_theorem1_inequality_polyak} and using the tower property of expectation \eqref{eq:tower_property}, we obtain 

\begin{eqnarray}\label{eq:distributed_theorem1_inequality2_polyak}
&&\Exp{V^{t+1}} \nonumber \\
&&\letext{\eqref{eq:distributed_theorem1_inequality_polyak}, \eqref{eq:gamma_t_expr}} \Exp{V^{t}} - \Exp{\fr{\rb{\suminn f_i(w_i^t) - f(x^*)}^2} { \sqnorm{\suminn \pfi{w_i^t}} + 2 \twonorm{\suminn \pfi{w_i^t}}\sqrt{\suminn \twonorm{\pfi{w_i^t}}^2}  \sqrt{\fr{(1-p)\omega}{p}} }} \nonumber \\
&&\letext{\eqref{eq:proof_wtL0_bound}, \eqref{eq:proof_avgL0_bound}, \eqref{eq:constant_optimal_B}} \Exp{V^{t}} - \fr{\Exp{ \rb{\suminn f_i(w_i^t) - f(x^*)}^2}}{\wt{B}_*},
\end{eqnarray}
where $\wt{B}_* \eqtext{\eqref{eq:constant_optimal_B}} \avg{L}_0^2 + 2{\avg{L}_0}{\wt{L}_0}\sqrt{\fr{(1-p)\omega}{p}}.$

Since each $f_i$ for all $i \in [n]$ is convex, by Jensen's inequality \eqref{eq:jensen_general} and the Cauchy-Bunyakovsky-Schwarz inequality \eqref{eq:cauchy_bunyakovsky_schwarz} with $X\eqdef \suminn f_i({w}_i^t) - f(x^*)$ and $Y\eqdef 1$, we have 
\begin{eqnarray}
\Exp{\suminn f_i(\avg{w}_i^T) - f(x^*)} &\letext{\eqref{eq:jensen_general}}& \Exp{\suminn \frac{1}{T} \sum_{t=0}^{T-1} f_i({w}_i^t) - f(x^*)} \nonumber \\
&\le& \frac{1}{T}\sum_{t=0}^{T-1} \Exp{\suminn f_i({w}_i^t) - f(x^*)}\nonumber \\
&\letext{\eqref{eq:cauchy_bunyakovsky_schwarz}}& \frac{1}{T}\sum_{t=0}^{T-1} \sqrt{\Exp{\rb{\suminn f_i({w}_i^t) - f(x^*)}^2 }}\nonumber \\
&\le& \sqrt{\frac{1}{T}\sum_{t=0}^{T-1}\Exp{\rb{\suminn f_i({w}_i^t) - f(x^*)}^2 }}\nonumber \\
&\letext{\eqref{eq:distributed_theorem1_inequality2_polyak}}& \frac{\sqrt{\wt{B}}}{\sqrt{T}} \sqrt{{\Exp{V^0} - \Exp{V^T}} } \nonumber \\
&\leq& \frac{\sqrt{\wt{B}}\sqrt{V^0}}{\sqrt{T}}. \nonumber \\
\end{eqnarray}

Thus, we have established \eqref{eq:MARINA-P-D:polyak_stepsize_rate}.

\textbf{3. (Decreasing stepsize).}
By the same arguments as in the analysis for the constant stepsize case, we can get a bound   

\begin{eqnarray}\label{eq:distributed_nonsmooth_inequality_decreasing}
\Exp{V^{t+1}} \letext{\eqref{eq:constant_optimal_B}} \Exp{V^t} - 2\gamma_t \Exp{\suminn f_i(w_i^t) - f(x^*)} + \wt{B}_{*} \gamma_t^2,
\end{eqnarray}
where $\wt{B}_* \eqtext{\eqref{eq:constant_optimal_B}} \avg{L}_0^2 + 2{\avg{L}_0}{\wt{L}_0}\sqrt{\fr{(1-p)\omega}{p}}.$

If $\gamma_t \eqdef \frac{\gamma_0}{\sqrt{t+1}}$ with $\gamma_0 > 0$, then we can get the bounds
\begin{eqnarray}\label{eq:decr_stepsize_bounds}
\sum_{t=0}^{T-1} \gamma_t \geq \frac{\gamma_0\sqrt{T}}{2}, \quad \text{and} \quad
\sum_{t=0}^{T-1} \gamma_t^2 \leq 2\gamma_0^2 \log(T + 1).
\end{eqnarray}

Since each $f_i$ for all $i \in [n]$ is convex, by Jensen's inequality \eqref{eq:jensen_general}, we have 

\begin{eqnarray}\label{eq:sum_decr_st_bound}
\Exp{\suminn f_i(\what{w}^T_i) - f(x^*)} &\letext{\eqref{eq:jensen_general}}& \Exp{\suminn \frac{1}{\sum_{t=0}^{T-1} \gamma_t} \sum_{t=0}^{T-1} \gamma_t[f_i({w}_i^t) - f(x^*)]} \nonumber \\
&\letext{\eqref{eq:distributed_nonsmooth_inequality_decreasing}}& \frac{(\Exp{V^0} - \Exp{V^T}) + \wt{B}_{*} \sum_{t=0}^{T-1} \gamma_t^2}{2 \sum_{t=0}^{T-1} \gamma_t} \nonumber \\
&\letext{$V^T \geq 0$}& \frac{V^0 + \wt{B}_{*} \sum_{t=0}^{T-1} \gamma_t^2}{2 \sum_{t=0}^{T-1} \gamma_t} \nonumber \\
&\letext{\eqref{eq:decr_stepsize_bounds}}& \frac{V^0 + 2\gamma_0^2 \wt{B}_{*} \log(T + 1)}{\gamma_0\sqrt{T}}.
\end{eqnarray}
The optimal $\gamma_0$ can be chosen by minimizing the right-hand side of \eqref{eq:sum_decr_st_bound}, i.e., 
\begin{eqnarray}
\gamma_* &=& \argmin_{\gamma_0 > 0} \left( \frac{V_0}{\gamma_0 \sqrt{T}} + \frac{2 \gamma_0 \wt{B}_* \log(T+1)}{\sqrt{T}} \right) = \sqrt{\frac{V_0}{2 \wt{B}_* \log(T+1)}}, \nonumber \\
\end{eqnarray}

Therefore, choosing $\gamma_0 \eqdef \gamma_*$, \eqref{eq:sum_decr_st_bound} reduces to 
\begin{eqnarray}
\Exp{\suminn f_i(\what{w}^T_i) - f(x^*)} \le \frac{V_0}{\gamma_* \sqrt{T}} + \frac{2 \gamma_* \log(T+1)}{\sqrt{T}} &=& 2\sqrt{2 V_0} \sqrt{\wt{B}_*} \sqrt{\frac{\log(T+1)}{T}}, \nonumber \\
\end{eqnarray}
and we get \eqref{eq:MARINA-P-D:decr_stepsize_rate}.
\end{proof}

Having established our main theorem, we can now derive a corollary that provides more practical insights into the performance of \algname{MARINA-P}.

\subsection{Proof of the Corollary \ref{cor:main:MARINA-P-D}}

\begin{corollary}[Corollary \ref{cor:main:MARINA-P-D}]\label{cor:MARINA-P-D}
Let the conditions of Theorem \ref{thm:main:MARINA-P-D} be met, $p = \fr{\zeta_{\cQ}}{d}$ and $w_i^0 = x^0$ for all $i \in [n]$.
If $\gamma_t$ is set according to \eqref{eq:MARINA-P:constant_stepsize} or \eqref{eq:marina_polyak_type} (constant or Polyak stepsizes) 
then \algname{MARINA-P} (Algorithm \ref{alg:MARINA-P}) requires
\begin{eqnarray}\label{cor:eq:marina_p-complexity}
T = \cO\rb{\frac{R^2_0}{\eps^2}\rb{\avg{L}_0^2 + {\avg{L}_0}{\wt{L}_0}\sqrt{\omega\rb{\fr{d}{\zeta_{\cQ}}-1}}}}
\end{eqnarray}
iterations/communication rounds in order to achieve $\Exp{\suminn f_i(\avg{w}_i^T) - f(x^*)} \le \eps$.
Moreover, under the assumption that the communication cost is proportional to the number of non-zero components of vectors transmitted from the server to workers, we have that the expected total communication cost per worker equals
\begin{eqnarray}
\cO\rb{d + \frac{\wt{L}_0^2 R^2_0 \zeta_{\cQ}}{\eps^2}\rb{1 + \sqrt{\omega\rb{\fr{d}{\zeta_{\cQ}}-1}}}}.
\end{eqnarray}
\end{corollary}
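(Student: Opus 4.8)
The plan is to mirror the proof of Corollary~\ref{cor:EF21-P-D}: start from the $\cO\rb{\nfr{1}{\sqrt{T}}}$ rate, invert it to read off the iteration complexity, then specialize the constants using $p = \nfr{\zeta_{\cQ}}{d}$ and the initialization $w_i^0 = x^0$, and finally translate iterations into communicated floats. Both the optimal constant-stepsize bound \eqref{eq:MARINA-P-D:constant_stepsize_rate_optimal} and the Polyak-stepsize bound \eqref{eq:MARINA-P-D:polyak_stepsize_rate} give the identical guarantee $\Exp{\suminn f_i(\avg{w}_i^T) - f(x^*)} \le \nfr{\sqrt{\wt{B}_* V^0}}{\sqrt{T}}$, so a single argument covers both cases. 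Setting the right-hand side to $\eps$ and solving for $T$ yields $T = \cO\rb{\nfr{\wt{B}_* V^0}{\eps^2}}$.

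Next I would simplify the two constants in this expression. Since $w_i^0 = x^0$ for all $i \in [n]$, the second term of the Lyapunov function vanishes and $V^0 = \sqnorm{x^0 - x^*} = R_0^2$. For $\wt{B}_*$, the key identity under $p = \nfr{\zeta_{\cQ}}{d}$ is $\nfr{(1-p)}{p} = \nfr{d}{\zeta_{\cQ}} - 1$, whence $\sqrt{\nfr{(1-p)\omega}{p}} = \sqrt{\omega\rb{\nfr{d}{\zeta_{\cQ}} - 1}}$ and therefore $\wt{B}_* = \avg{L}_0^2 + 2\avg{L}_0\wt{L}_0\sqrt{\omega\rb{\nfr{d}{\zeta_{\cQ}} - 1}}$. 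Absorbing the constant factor $2$ into $\cO(\cdot)$ and substituting into $T = \cO\rb{\nfr{\wt{B}_* R_0^2}{\eps^2}}$ gives exactly \eqref{cor:eq:marina_p-complexity}.

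For the communication cost I would account for the expected number of floats transmitted per round. By the Bernoulli switch in Algorithm~\ref{alg:MARINA-P}, the server sends the uncompressed $x^{t+1}$ ($d$ floats) with probability $p$ and the compressed vector $\cQ_i^t(x^{t+1}-x^t)$ (expected density $\zeta_{\cQ}$) with probability $1-p$, so the expected per-round cost is $p\,d + (1-p)\zeta_{\cQ} \le 2\zeta_{\cQ} = \cO(\zeta_{\cQ})$ under $p = \nfr{\zeta_{\cQ}}{d}$. Adding the initial $d$-float synchronization (following the convention of Corollary~\ref{cor:EF21-P-D}), the total per-worker cost is $\cO(d + \zeta_{\cQ} T)$. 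The final step is to massage the leading constant: since $\avg{L}_0 \le \wt{L}_0$ by the arithmetic-quadratic mean inequality, we have $\avg{L}_0^2 \le \wt{L}_0^2$ and $\avg{L}_0\wt{L}_0 \le \wt{L}_0^2$, hence $\zeta_{\cQ}\wt{B}_* \le \wt{L}_0^2 \zeta_{\cQ}\rb{1 + 2\sqrt{\omega\rb{\nfr{d}{\zeta_{\cQ}} - 1}}}$. Substituting the bound on $T$ then produces the claimed cost $\cO\rb{d + \nfr{\wt{L}_0^2 R_0^2 \zeta_{\cQ}}{\eps^2}\rb{1 + \sqrt{\omega\rb{\nfr{d}{\zeta_{\cQ}} - 1}}}}$.

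The calculation is largely routine; the only genuinely delicate point is the communication-cost accounting. One must correctly compute the expected density as a Bernoulli mixture of the uncompressed and compressed branches, verify that the choice $p = \nfr{\zeta_{\cQ}}{d}$ keeps this expectation at $\cO(\zeta_{\cQ})$, and remember to invoke $\avg{L}_0 \le \wt{L}_0$ so that the final bound can be stated purely in terms of $\wt{L}_0$ rather than the mixed product $\avg{L}_0\wt{L}_0$. Keeping the additive $d$ term and the $\zeta_{\cQ}$ bookkeeping consistent with the convention fixed in the \algname{EF21-P} corollary is where I would exercise the most care.
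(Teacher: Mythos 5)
Your proposal matches the paper's proof essentially step for step: inverting the $\sqrt{\wt{B}_* V^0/T}$ rate, substituting $V^0 = R_0^2$ and $\nfr{(1-p)}{p} = \nfr{d}{\zeta_{\cQ}}-1$, computing the expected per-round cost as the Bernoulli mixture $dp + (1-p)\zeta_{\cQ} \le 2\zeta_{\cQ}$, and using $\avg{L}_0 \le \wt{L}_0$ to state the communication bound in terms of $\wt{L}_0^2$ alone. The argument is correct and no different in substance from the paper's.
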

\begin{proof}
From \eqref{eq:MARINA-P-D:constant_stepsize_rate_optimal} and \eqref{eq:MARINA-P-D:polyak_stepsize_rate}, we have the convergence rate 
\begin{eqnarray}\label{eq:rate_constant_polyak}
\Exp{\suminn f_i(\avg{w}_i^T) - f(x^*)} \le \frac{\sqrt{\wt{B}_* V^0}}{\sqrt{T}},
\end{eqnarray}
where
$V^0 = \sqnorm{x^0 - x^*} + \frac{1}{\lambda_* p} \suminn \sqnorm{w_i^0 - x^0}$ and  
$\wt{B}_* = \avg{L}_0^2 + 2{\avg{L}_0}{\wt{L}_0}\sqrt{\fr{(1-p)\omega}{p}}$, resulting in a complexity 
\begin{eqnarray}\label{eq:complexity_constant_polyak}
T = \cO\rb{\frac{{\wt{B}_* V^0}}{{\eps}^2}}
\end{eqnarray}
required to achieve $\Exp{\suminn f_i(\avg{w}_i^T) - f(x^*)} \le \eps$.

Assuming $w_i^0 = x^0$ for all $i \in [n]$ and $p = \fr{\zeta_{\cQ}}{d}$, we get
\begin{eqnarray}\label{eq:marina_p_substitutions}
V^0 = R^2_0 = \sqnorm{x^0 - x^*} \quad \text{ and } \quad \wt{B}_* = \avg{L}_0^2 + {\avg{L}_0}{\wt{L}_0}\sqrt{\omega\rb{\fr{d}{\zeta_{\cQ}}-1}}.
\end{eqnarray}

Plugging \eqref{eq:marina_p_substitutions} into \eqref{eq:complexity_constant_polyak}, we get \eqref{cor:eq:marina_p-complexity}.

The expected total communication cost per worker is
\begin{eqnarray}
d + (dp + \zeta_{\cQ}(1-p)) T &\le& d + \frac{\wt{L}_0^2 R^2_0}{\eps^2}(dp + \zeta_{\cQ}(1-p))\rb{1 + 2\sqrt{\fr{(1-p)\omega}{p}}} \nonumber \\
&=& d + \frac{\wt{L}_0^2 R^2_0}{\eps^2}\rb{dp + {\zeta_{\cQ}}(1-p)}\rb{1 + 2\sqrt{\omega\rb{\fr{d}{\zeta_{\cQ}}-1}}} \nonumber \\
&\le& d + \frac{2\wt{L}_0^2 R^2_0}{\eps^2}\zeta_{\cQ}\rb{1 + 2\sqrt{\omega\rb{\fr{d}{\zeta_{\cQ}}-1}}}\nonumber \\
&=& \cO\rb{d + \frac{\wt{L}_0^2 R^2_0 \zeta_{\cQ}}{\eps^2}\rb{1 + \sqrt{\omega\rb{\fr{d}{\zeta_{\cQ}}-1}}}},
\end{eqnarray}
where we used the bound
$p + {\zeta_{\cQ}}(1-p) \le 2 \zeta_{\cQ}$.
\end{proof}




\end{document}